\documentclass[twoside]{article} 

%
\usepackage[accepted]{aistats2017}
%

\usepackage{graphicx} 

\usepackage[T1]{fontenc}    
\usepackage{url}            
\usepackage{booktabs}       
\usepackage{amsfonts}       
\usepackage{nicefrac}       
\usepackage{microtype}      

\usepackage{amsmath, amsfonts, amssymb, amsthm, dsfont}
\usepackage{algorithm}
\usepackage{algorithmic}
\usepackage{xspace}
\usepackage[numbers]{natbib}
\usepackage{wrapfig}
\usepackage[hang,flushmargin]{footmisc} 

\newlength{\minipagewidth}
\newlength{\minipagewidthx}
\setlength{\minipagewidth}{0.5\textwidth}
\setlength{\minipagewidthx}{0.5\textwidth}
\setlength{\fboxsep}{1.5mm}
\addtolength{\minipagewidth}{-\fboxrule}
\addtolength{\minipagewidth}{-\fboxrule}
\addtolength{\minipagewidth}{-\fboxsep}
\addtolength{\minipagewidth}{-\fboxsep}
\addtolength{\minipagewidthx}{+\fboxsep}

\newcommand{\bookboxx}[1]{\small
\par\medskip\noindent
\framebox[0.44\textwidth]{
\begin{minipage}{0.4\dimexpr\textwidth-\parindent\relax} {#1} \end{minipage} } \par\medskip }

\setlength{\marginparwidth}{2cm}
\usepackage[colorinlistoftodos, disable,textwidth=20mm]{todonotes}
\definecolor{citrine}{rgb}{0.89, 0.82, 0.04}
\definecolor{blued}{RGB}{70,197,221}


\newcommand{\transp}{\mathsf{T}}
\newcommand{\Tr}{\text{Tr}}


\newcommand{\erf}{\text{erf}}

\newcommand{\distro}{\mathcal{D}^{\ts}}

\newcommand{\rls}{{\small\textsc{RLS}}\xspace}
\newcommand{\ts}{{\small\textsc{TS}}\xspace}
\newcommand{\opt}{\text{opt}\xspace}
\newcommand{\ofulq}{{\small\textsc{OFU-LQ}}\xspace}

\newcommand{\wt}[1]{\widetilde{#1}}
\newcommand{\wh}[1]{\widehat{#1}}

\def\:#1{\protect \ifmmode {\mathbf{#1}} \else {\textbf{#1}} \fi}



\newcommand{\F}{\mathcal F}

\newcommand{\calE}{\mathcal E}

\newcommand{\I}{\mathds{1}}

\renewcommand{\Re}{\mathbb{R}}

\newtheorem{lemma}{Lemma}
\newtheorem{assumption}{Assumption}
\newtheorem{corollary}{Corollary}
\newtheorem{proposition}{Proposition}
\newtheorem{definition}{Definition}
\newtheorem{theorem}{Theorem}




\begin{document}

%

%

\twocolumn[

\aistatstitle{Thompson Sampling for Linear-Quadratic Control Problems}

\aistatsauthor{ Marc Abeille \And Alessandro Lazaric }

\aistatsaddress{Inria Lille - Nord Europe, Team SequeL} ]

\begin{abstract}
We consider the exploration-exploitation tradeoff in linear quadratic (LQ) control problems, where the state dynamics is linear and the cost function is quadratic in states and controls. We analyze the regret of Thompson sampling (\ts) (a.k.a. posterior-sampling for reinforcement learning) in the frequentist setting, i.e., when the parameters characterizing the LQ dynamics are fixed. Despite the empirical and theoretical success in a wide range of problems from multi-armed bandit to linear bandit, we show that when studying the frequentist regret \ts in control problems, we need to trade-off the frequency of sampling optimistic parameters and the frequency of switches in the control policy. This results in an overall regret of $O(T^{2/3})$, which is significantly worse than the regret $O(\sqrt{T})$ achieved by the optimism-in-face-of-uncertainty algorithm in LQ control problems.
\end{abstract}



\vspace{-0.2in}
\section{Introduction}
\vspace{-0.1in}

One of the most challenging problems in reinforcement learning (RL) is how to effectively trade off exploration and exploitation in an unknown environment. A number of learning methods has been proposed in finite Markov decision processes (MDPs) and they have been analyzed in the PAC-MDP (see e.g.,~\cite{strehl2009reinforcement}) and the regret framework (see e.g.,~\cite{jaksch2010near-optimal}). The two most popular approaches to address the exploration-exploitation trade-off are the optimism-in-face-of-uncertainty (OFU) principle, where optimistic policies are selected according to upper-confidence bounds on the true MDP paramaters, and the Thompson sampling (\ts) strategy\footnote{In RL literature, \ts has been introduced by~\citet{strens2000a-bayesian} and it is often referred to as posterior-sampling for reinforcement learning (PSRL).}, where random MDP parameters are selected from a posterior distribution and the corresponding optimal policy is executed. Despite their success in finite MDPs, extensions of these methods and their analyses to continuous state-action spaces are still rather limited. \citet{osband2016generalization} study how to randomize the parameters of a linear function approximator to induce exploration and prove regret guarantees in the finite MDP case. \citet{osband2015bootstrapped} develops a specific \ts method applied to the more complex case of neural architectures with significant empirical improvements over alternative exploration strategies, although with no theoretical guarantees. In this paper, we focus on a specific family of continuous state-action MDPs, the linear quadratic (LQ) control problems, where the state transition is linear and the cost function is quadratic in the state and the control. Despite their specific structure, LQ models are very flexible and widely used in practice (e.g., to track a reference trajectory). If the parameter $\theta$ defining dynamics and cost is known, the optimal control can be computed explicitly as a linear function of the state with an appropriate gain. 
On the other hand, when $\theta$ is unknown, an exploration-exploitation trade-off needs to be solved. \citet{bittanti2006adaptive} and \citet{campi1998adaptive}, first proposed an optimistic approach to this problem, showing that the performance of an adaptive control strategy asymptotically converges to the optimal control. Building on this approach and the OFU principle, \citet{abbasi2011regret} proposed a learning algorithm (\ofulq) with $O(\sqrt{T})$ cumulative regret. \citet{abbasi2015bayesian} further studied how the \ts strategy, could be adapted to work in the LQ control problem. Under the assumption that the true parameters of the model are drawn from a known prior, they show that the so-called Bayesian regret matches the $O(\sqrt{T})$ bound of \ofulq. 

In this paper, we analyze the regret of \ts in LQ problems in the more challenging frequentist case, where $\theta$ is a \text{fixed} parameter, with no prior assumption of its value. The analysis of \ofulq relies on three main ingredients: \textbf{1)} optimistic parameters, \textbf{2)} lazy updates (the control policy is updated only a logarithmic number of times) and \textbf{3)} concentration inequalities for regularized least-squares used to estimate the unknown parameter $\theta$. While we build on previous results for the least-squares estimates of the parameters, points \textbf{1)} and \textbf{2)} should be adapted for \ts. Unfortunately, the Bayesian regret analysis of \ts in~\cite{abbasi2015bayesian} does not apply in this case, since no prior is available on $\theta$. Furthermore, we show that existing frequentist regret analysis for \ts in linear bandit~\cite{agrawal2012thompson} cannot be generalized to the LQ case. This requires deriving a novel line of proof in which we first prove that \ts has a constant probability to sample an optimistic parameter (i.e., an LQ system whose optimal expected average cost is smaller than the true one) and then we exploit the LQ structure to show how being optimistic allows to directly link the regret to the controls operated by \ts over time and eventually bound them. Nonetheless, this analysis reveals a critical trade-off between the frequency with which new parameters are sampled (and thus the chance of being optimistic) and the regret cumulated every time the control policy changes. In \ofulq this trade-off is easily solved by construction: the lazy update guarantees that the control policy changes very rarely and whenever a new policy is computed, it is guaranteed to be optimistic. On the other hand, \ts relies on the \textit{random} sampling process to obtain optimistic models and if this is not done \textit{frequently enough}, the regret can grow unbounded. This forces \ts to favor short episodes and we prove that this leads to an overall regret of order $O(T^{2/3})$ in the one-dimensional case (i.e., both states and controls are scalars), which is significantly worse than the $O(\sqrt{T})$ regret of \ofulq.



\vspace{-0.1in}
\section{Preliminaries}
\vspace{-0.1in}

\textbf{The control problem.} We consider the discrete-time infinite-horizon linear quadratic (LQ) control problem. Let $x_t\in\Re^n$ be the state of the system and $u_t\in\Re^d$ be the control at time $t$; an LQ problem is characterized by linear dynamics and a quadratic cost function 
\begin{equation}
\begin{aligned}
&x_{t+1} = A_* x_t + B_* u_t + \epsilon_{t+1}, \\
&c(x_t,u_t) = x_t^\transp Q x_t + u_t^\transp R u_t,
\end{aligned}
\label{eq:linear_dynamic_quadratic_cost}
\end{equation}
where $A_*$ and $B_*$ are \textit{unknown} matrices and $Q$ and $R$ are \textit{known} positive definite matrices of appropriate dimension. We summarize the unknown parameters in $\theta_*^\transp = (A_*, B_*)$. The noise process $\epsilon_{t+1}$ is zero-mean and it satisfies the following assumption.

\begin{assumption}\label{asm:noise.asm}
$\{\epsilon_t\}_t$ is a $\mathcal{F}_{t}-$martingale difference sequence, where $\F_t$ is the filtration which represents the information knowledge up to time $t$. 
\end{assumption}

In LQ, the objective is to design a closed-loop control policy $\pi: \Re^n \rightarrow \Re^d$ mapping states to controls that minimizes the average expected cost 
\begin{equation}
J_\pi(\theta_*) = \limsup_{T \rightarrow \infty} \frac{1}{T} \mathbb{E}\bigg[\sum_{t=0}^T c(x_t,u_t)\bigg],
\label{eq:average_cost_criterion}
\end{equation}
with $x_0\!=\!0$ and $u_t \!=\! \pi(x_t)$.
Standard theory for LQ control guarantees that the optimal policy is linear in the state and that the corresponding average expected cost is the solution of a Riccati equation. 

\begin{proposition}[Thm.16.6.4 in~\cite{lancaster1995algebraic}]\label{th:lqr}
Under Asm.~\ref{asm:noise.asm} and for any LQ system with parameters $\theta^\transp = (A, B)$ such that $(A,B)$ is stabilizable\footnote{$(A,B)$ is stabilizable if there exists a control gain matrix $K$ s.t. $A + B K$ is stable (i.e., all eigenvalues are in $(-1,1)$).
}, and p.d.\ cost matrices $Q$ and $R$, the optimal solution of Eq.~\ref{eq:average_cost_criterion} is given by
\begin{equation}
\begin{aligned}
\pi(\theta) &= K(\theta) x_t,  \quad J(\theta) =\Tr (P(\theta)), \\
K(\theta) &= -(R + B^\transp P(\theta) B)^{-1} B^\transp P(\theta) A , \\
P(\theta) &= Q + A^\transp P(\theta) A +  A^\transp P(\theta) B K(\theta)
\end{aligned}
\label{eq:lqr_solution}
\end{equation}
where $\pi(\theta)$ is the optimal policy, $J(\theta)$ is the corresponding average expected cost, $K(\theta)$ is the optimal gain, and $P(\theta)$ is the unique solution to the Riccati equation associated with the control problem. Finally, we also have that $A + B K(\theta)$ is asymptotically stable.
\end{proposition}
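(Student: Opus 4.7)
The plan is to follow the classical dynamic programming derivation of the infinite-horizon discrete-time LQR. First I would analyze the finite-horizon truncation with horizon $N$ and zero terminal cost, and show by backward induction that the optimal cost-to-go at time $t$ has the quadratic form $V^{N}_t(x) = x^\transp P^{N}_t x + q^{N}_t$, where $P^{N}_t$ satisfies the Riccati recursion
\begin{equation*}
P^{N}_{t-1} = Q + A^\transp P^{N}_t A - A^\transp P^{N}_t B \bigl(R + B^\transp P^{N}_t B\bigr)^{-1} B^\transp P^{N}_t A,
\end{equation*}
and the scalars $q^{N}_t$ absorb the contribution of the noise through terms of the form $\E[\epsilon^\transp P^{N}_s \epsilon]$. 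Since $R \succ 0$, each Bellman minimization is a strictly convex quadratic in $u$, so the unique minimizer is linear in $x$ with gain $-(R + B^\transp P^{N}_t B)^{-1} B^\transp P^{N}_t A$, which is exactly the formula for $K(\theta)$ in \eqref{eq:lqr_solution}.

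Next I would pass to the infinite-horizon limit. Stabilizability of $(A,B)$ provides some gain $K_0$ making $A + B K_0$ stable, and evaluating the cost of this fixed linear policy gives a horizon-uniform upper bound on $P^{N}_0$; combined with $P^{N}_t \succeq 0$ and monotonicity of the Riccati operator on the cone of positive semidefinite matrices, this yields convergence of $P^{N}_0$ to a limit $P(\theta) \succ 0$ satisfying the algebraic Riccati equation in the third line of \eqref{eq:lqr_solution}. Uniqueness of the positive definite solution will follow from the fact that any such solution induces a stabilizing closed-loop gain and must therefore coincide with the limit of the backward recursion, independently of the initialization above it.

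To show that $A_c := A + B K(\theta)$ is asymptotically stable, I would rearrange the algebraic Riccati equation into its Lyapunov form
\begin{equation*}
P(\theta) = A_c^\transp P(\theta) A_c + Q + K(\theta)^\transp R K(\theta),
\end{equation*}
so $P(\theta)$ is a strict Lyapunov function for the deterministic dynamics $x_{t+1} = A_c x_t$ (the per-step decrement $Q + K(\theta)^\transp R K(\theta)$ is positive definite), forcing the spectrum of $A_c$ inside the open unit disk. To identify $J(\theta) = \Tr(P(\theta))$, I would then evaluate the average expected cost under the closed-loop dynamics $x_{t+1} = A_c x_t + \epsilon_{t+1}$: applying the Lyapunov identity, telescoping, and using the martingale difference property of $\epsilon_t$ gives $\E[x_t^\transp Q x_t + u_t^\transp R u_t] \to \Tr(P(\theta))$ under the unit-noise normalization implicit in the statement. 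Optimality of this value with respect to any other stabilizing policy follows from the same telescoping argument applied to an arbitrary linear gain and the completion-of-squares identity associated with the Riccati recursion.

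The main obstacle is the convergence-plus-uniqueness argument for the Riccati iteration when passing from finite to infinite horizon: this is where all three hypotheses ($Q \succ 0$, $R \succ 0$, and $(A,B)$ stabilizable) are genuinely needed, and where a naive application of the Bellman equation alone would not suffice. Everything else reduces to algebraic manipulation of the quadratic value function or a one-line Lyapunov stability computation, and this is precisely why the result is quoted from \cite{lancaster1995algebraic} rather than rederived in full.
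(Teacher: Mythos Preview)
Your sketch is a faithful outline of the classical dynamic-programming derivation of the discrete-time LQR (finite-horizon Riccati recursion, passage to the limit via stabilizability plus monotonicity, Lyapunov rearrangement for closed-loop stability, telescoping for the average cost), and I see no genuine gap in it. One small point worth tightening: the relevant monotonicity is that $P^{N}_0$ is nondecreasing in the horizon $N$ when initialized at zero, not that the Riccati operator is monotone on the whole PSD cone; and the identification $J(\theta)=\Tr(P(\theta))$ uses the unit-covariance normalization $\E[\epsilon_{t+1}\epsilon_{t+1}^\transp\mid\mathcal F_t]=I$, which in this paper appears only later (in Prop.~\ref{p:concentration}) rather than in Asm.~\ref{asm:noise.asm} itself.

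As for the comparison you were asked to make: there is nothing to compare against. The paper does not supply a proof of Prop.~\ref{th:lqr}; it is stated purely as a citation of Thm.~16.6.4 in \cite{lancaster1995algebraic} and is used as a black box throughout. You already anticipated this in your final sentence, and that observation is exactly right.
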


For notational convenience, we use $H(\theta) = \big(I \; K(\theta)^\transp\big)^\transp$, so that the closed loop dynamics $A + B K(\theta)$ can be equivalently written as $\theta^\transp H(\theta)$.
We introduce further assumptions about the LQ systems we consider.

\begin{assumption}\label{asm:control.asm} 
We assume that the LQ problem is characterized by parameters $(A_*,B_*,Q, R)$ such that 
the cost matrices $Q$ and $R$ are symmetric p.d., and $\theta_* \in \mathcal{S}$ where\footnote{Even if $P(\theta)$ is not defined for every $\theta$, we extend its domain of definition by setting $P(\theta) = + \infty$.} $\mathcal{S} = \{ \theta \in \mathbb{R}^{(n+d) \times n} \text{ s.t. } \Tr (P(\theta)) \leq D \text{ and } \Tr(\theta \theta^\transp) \leq S^2 \}$.
\end{assumption}

While Asm.~\ref{asm:noise.asm} basically guarantees that the linear model in Eq.~\ref{eq:linear_dynamic_quadratic_cost} is correct, Asm.~\ref{asm:control.asm} restricts the control parameters to the admissible set $\mathcal{S}$. This is used later in the learning process and it replaces Asm.~A2-4 in~\cite{abbasi2011regret} in a synthetic way, as shown in the following proposition.

\begin{proposition}\label{prop:admissib_param_set}
Given an admissible set $\mathcal{S}$ as defined in Asm.~\ref{asm:control.asm}, we have \textbf{1)} $\mathcal{S} \subset \{ \theta^\transp = (A,B) \text{ s.t. } $(A,B)$ \text{ is stabilizable}\}$, \textbf{2)} $\mathcal{S}$ is compact, and \textbf{3)} there exists $\rho < 1$ and $C < \infty$ positive constants such that $\rho = \sup_{\theta \in \mathcal{S}} \| A + B K(A,B) \|_2$ and $C = \sup_{\theta \in \mathcal{S}} \| K(\theta) \|_2$.\footnote{We use $\|\cdot\|$ and $\|\cdot\|_2$ to denote the Frobenius and the 2-norm respectively.}.
\end{proposition}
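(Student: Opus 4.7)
The plan is to establish the three claims sequentially, each leaning on standard properties of the discrete-time algebraic Riccati equation (DARE) together with compactness of $\mathcal{S}$.

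For points 1 and 2, I would rely on two standard facts from DARE theory. First, a (unique, stabilizing, p.s.d.) solution to the DARE exists iff $(A,B)$ is stabilizable (detectability being automatic since $Q \succ 0$), so any $\theta \in \mathcal{S}$ with $\Tr(P(\theta)) \leq D < \infty$ must be stabilizable, which gives point 1. For compactness, boundedness is immediate from $\Tr(\theta\theta^\transp) \leq S^2$, while closedness follows from the facts that (i) $\theta \mapsto P(\theta)$ is continuous on the open set of stabilizable pairs and (ii) $\Tr(P(\theta_n)) \to +\infty$ whenever $\theta_n$ approaches a non-stabilizable pair. Together these yield lower semicontinuity of $\theta \mapsto \Tr(P(\theta))$ on $\Re^{(n+d)\times n}$, so $\{\theta : \Tr(P(\theta)) \leq D\}$ is closed; intersecting with the closed Frobenius ball $\{\theta : \Tr(\theta\theta^\transp) \leq S^2\}$ yields compactness.

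For point 3, continuity of $P$ transfers to continuity of $K(\theta) = -(R + B^\transp P(\theta) B)^{-1} B^\transp P(\theta) A$ and of $L(\theta) := A + BK(\theta)$ on $\mathcal{S}$, so by compactness $C := \sup_{\theta \in \mathcal{S}} \|K(\theta)\|_2$ is finite and attained. The bound $\rho < 1$ is more delicate since the operator $2$-norm of a merely stable matrix can exceed $1$. To get it, I would start from the Lyapunov form of the DARE, $P = Q + K^\transp R K + L^\transp P L$, which gives $L^\transp P L \preceq P - Q$ and hence shows $L(\theta)$ is a strict contraction in the $P$-weighted inner product with ratio at most $\sqrt{1 - \lambda_{\min}(Q)/\lambda_{\max}(P(\theta))}$. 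The bounds $\lambda_{\max}(P(\theta)) \leq \Tr(P(\theta)) \leq D$ and $\lambda_{\min}(Q) > 0$ make this ratio uniformly bounded away from $1$ on $\mathcal{S}$, and converting back to the standard $2$-norm via $\lambda_{\min}(P(\theta)) \geq \lambda_{\min}(Q) > 0$ (since $P \succeq Q$) yields a uniform decay rate $\rho < 1$ controlling the $2$-norm of powers of $L(\theta)$.

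The main obstacle is precisely the bound $\rho < 1$: the claim $\sup_{\theta \in \mathcal{S}} \|L(\theta)\|_2 < 1$ is strictly stronger than pointwise stability of each $L(\theta)$, because the $2$-norm and the spectral radius of a stable matrix can differ significantly. The proof must route through an auxiliary ($P$-weighted) norm in which $L$ is a genuine contraction, and then use the uniform bounds on $\lambda_{\min}(P(\theta))$ and $\lambda_{\max}(P(\theta))$ afforded by the definition of $\mathcal{S}$ to translate this back into a uniform rate in the standard $2$-norm valid across all of $\mathcal{S}$.
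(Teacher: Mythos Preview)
Your handling of points 1 and 2 is essentially the paper's: non-stabilizable $\theta$ force $\Tr(P(\theta)) = +\infty$, giving point 1; continuity of $\theta \mapsto \Tr(P(\theta))$ (the paper simply cites the Riccati regularity lemma, whereas you more carefully argue lower semicontinuity on all of $\Re^{(n+d)\times n}$) together with the Frobenius-ball constraint gives compactness.

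On point 3 you are more scrupulous than the paper. The paper's entire argument is one line: continuity of $\theta \mapsto K(\theta)$ and compactness of $\mathcal{S}$ imply that $\rho$ and $C$ are attained, and ``since every $\theta \in \mathcal{S}$ is a stabilizable pair, $\rho < 1$.'' You correctly flag that this last step is unjustified in general dimension, since stability of $A_c(\theta) := A + BK(\theta)$ only controls the spectral radius, not $\|A_c(\theta)\|_2$. Your route through the $P$-weighted norm, via the Lyapunov form $P = Q + K^\transp R K + A_c^\transp P A_c$, does give a uniform contraction factor $r = \sqrt{1-\lambda_{\min}(Q)/D} < 1$ in the $P$-inner product; but translating back to the Euclidean $2$-norm only yields $\|A_c(\theta)^k\|_2 \leq \sqrt{\kappa(P(\theta))}\, r^k$, not $\|A_c(\theta)\|_2 < 1$. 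In fact neither argument establishes the proposition as literally stated for general $(n,d)$, and the claim is false in that generality (take any stable $A$ with $\|A\|_2 > 1$ and $B$ small enough that $K(\theta)$ barely perturbs $A$). It is, however, correct in the regime $n=d=1$ of the paper's main theorem: there $A_c(\theta)$ is a scalar, so $\|A_c(\theta)\|_2$ coincides with its spectral radius, and the paper's continuity-plus-compactness argument is valid. Your $P$-weighted approach has the merit of supplying, in any dimension, the uniform exponential decay of $\|A_c(\theta)^k\|_2$ that is what the downstream analysis really requires.
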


As an immediate result, any system with $\theta \in \mathcal{S}$ is stabilizable, and therefore, Asm.~\ref{asm:control.asm} implies that Prop.~\ref{th:lqr} holds.
Finally, we derive a result about the regularity of the Riccati solution, which we later use to relate the regret to the controls performed by \ts.

\begin{lemma}\label{le:riccati.regularity}
Under Asm.~\ref{asm:noise.asm} and for any LQ with parameters $\theta^\transp = (A, B)$ and cost matrices $Q$ and $R$ satisfying Asm.~\ref{asm:control.asm}, let $J(\theta) = \Tr (P(\theta))$ be the optimal solution of Eq.~\ref{eq:average_cost_criterion}. Then, the mapping $\theta \in \mathcal{S} \rightarrow \Tr (P(\theta))$ is continuously differentiable.
Furthermore, let $A_c(\theta) = \theta^\transp H(\theta)$ be the closed-loop matrix, then the directional derivative of $J(\theta)$ in a  direction $\delta \theta$, denoted as $\nabla J(\theta)^\transp \delta\theta$, where $\nabla J(\theta) \in \mathbb{R}^{(n+d)\times n}$ is the gradient of $J$, is the solution of the Lyapunov equation
\begin{equation*}
\nabla J(\theta)^\transp \delta\theta = A_c(\theta)^\transp \nabla J(\theta)^\transp \delta\theta  A_c(\theta)  + C(\theta,\delta\theta) +  C(\theta,\delta\theta)^\transp,
\end{equation*}
where $C(\theta,\delta\theta) = A_c(\theta)^\transp P(\theta) \delta \theta^\transp H(\theta)$.
\end{lemma}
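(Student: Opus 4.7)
The plan is to split the proof into two stages. First, I show that $\theta \mapsto P(\theta)$ is $C^1$ on $\mathcal{S}$ via the implicit function theorem applied to the discrete algebraic Riccati equation; this immediately implies that $J(\theta) = \Tr P(\theta)$ is $C^1$. Second, I differentiate the Riccati identity to obtain the stated Lyapunov equation, using the envelope theorem to suppress the contribution of the variation of the optimal gain $K(\theta)$. Throughout I read the symbol $\nabla J(\theta)^\transp \delta\theta$ as the matrix-valued directional derivative $\delta P := DP(\theta)[\delta\theta]$, since that is the only reading consistent with a matrix-valued Lyapunov equation.

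For the regularity step, define the smooth map
\[
F(\theta, P) = Q + A^\transp P A - A^\transp P B (R + B^\transp P B)^{-1} B^\transp P A - P,
\]
on the open set $\{R + B^\transp P B \succ 0\}$, which contains the graph of $P(\cdot)$ on $\mathcal{S}$; by Prop.~\ref{th:lqr}, $F(\theta, P(\theta)) = 0$. The heart of the argument is showing that $D_P F$ at $(\theta, P(\theta))$ is an invertible operator on the space of symmetric matrices. To compute it cleanly, introduce
\[
G(P, K, \theta) = Q + K^\transp R K + (A+BK)^\transp P (A+BK) - P,
\]
and verify by direct expansion that $F(\theta, P) = G(P, K^\ast(\theta, P), \theta)$ with $K^\ast = -(R+B^\transp P B)^{-1} B^\transp P A$ the unique minimizer of $G(P, \cdot, \theta)$. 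Since $\partial_K G = 0$ at $K^\ast$, the envelope theorem yields $D_P F[\delta P] = A_c(\theta)^\transp \delta P A_c(\theta) - \delta P$. By Prop.~\ref{prop:admissib_param_set}, $\|A_c(\theta)\|_2 \le \rho < 1$ uniformly on $\mathcal{S}$, so the operator $\delta P \mapsto A_c^\transp \delta P A_c$ has spectral radius at most $\rho^2 < 1$; hence $D_P F$ is invertible, with Neumann inverse $\delta Y \mapsto \sum_{k \ge 0} (A_c^\transp)^k \delta Y A_c^k$. The implicit function theorem then gives local $C^1$ regularity of $P(\theta)$, hence of $J(\theta)$.

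For the Lyapunov identity, totally differentiate the matrix identity $G(P(\theta), K(\theta), \theta) \equiv 0$ along $\delta\theta$. The chain rule yields $D_P G[\delta P] + D_K G[\delta K] + D_\theta G[\delta \theta] = 0$, where the $\delta K$ term vanishes by optimality of $K(\theta)$. The first summand equals $A_c^\transp \delta P A_c - \delta P$ as above, and writing $\delta\theta^\transp = (\delta A, \delta B)$ together with the identity $\delta A + \delta B K = \delta\theta^\transp H(\theta)$ gives
\[
D_\theta G[\delta \theta] = (\delta\theta^\transp H)^\transp P A_c + A_c^\transp P (\delta\theta^\transp H) = H^\transp \delta\theta\, P A_c + A_c^\transp P\, \delta\theta^\transp H,
\]
using symmetry of $P$. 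Setting $C(\theta, \delta\theta) := A_c^\transp P \delta\theta^\transp H$ makes the second summand equal to $C$ and the first to $C^\transp$, and rearranging yields $\delta P = A_c^\transp \delta P A_c + C + C^\transp$, which is the desired equation.

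The main obstacle is proving invertibility of $D_P F$, which rests crucially on the uniform closed-loop contraction $\rho < 1$ delivered by Prop.~\ref{prop:admissib_param_set}; without the admissibility restriction, $A_c(\theta)$ could be unstable and the Lyapunov operator singular. A secondary subtlety is organizing the envelope step so that the differentiability of the map $K(\theta)$ need not be established independently: working with the jointly smooth map $G$ and invoking the stationarity $\partial_K G|_{K^\ast} = 0$ avoids this, and $C^1$ regularity of $K(\theta)$ then follows a posteriori from that of $P(\theta)$ via its closed-form expression in $(\theta, P)$.
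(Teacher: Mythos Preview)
Your proof is correct and follows the same route as the paper's: apply the implicit function theorem to the Riccati operator, identify the $P$-differential as the discrete Lyapunov operator $\delta P \mapsto A_c^\transp \delta P A_c - \delta P$, use stability of $A_c$ for its invertibility, and then extract the stated Lyapunov equation for $\delta P$ by implicit differentiation. Your use of the auxiliary map $G(P,K,\theta)$ together with the stationarity condition $\partial_K G|_{K^\ast}=0$ is a clean way to carry out what the paper leaves as ``straightforward'' and ``tedious yet standard'' direct computation, but the underlying argument is identical.
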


\textbf{The learning problem.}
At each time $t$, the learner chooses a policy $\pi_t$, it executes the induced control $u_t = \pi_t(x_t)$ and suffers a cost $c_t = c(x_t,u_t)$. The performance is measured by the cumulative \textit{regret} up to time $T$ as $R_T = \sum_{t=0}^T (c^{\pi_t}_t - J_{\pi_*}(\theta_*))$,
%
%
where at each step the difference between the cost of the controller $c^\pi$ and the expected average cost $J_{\pi_*}(\theta_*)$ of the optimal controller $\pi_*$ is measured.
Let $(u_0,\ldots,u_t)$ be a sequence of controls and $(x_0,x_1,\ldots,x_{t+1})$ be the corresponding states, then $\theta^\star$ can be estimated by regularized least-squares (RLS). Let $z_t = (x_t,  u_t)^\transp$, for any regularization parameter $\lambda \in \mathbb{R}_+^* $, the design matrix and the RLS estimate are defined as 
\begin{align*}
V_{t} = \lambda I + \sum_{s=0}^{t-1} z_s z_s^\transp; \quad \quad\wh{\theta}_{t} = V_{t}^{-1} \sum_{s=0}^{t-1} z_s x_{s+1}^\transp.
\end{align*}
For notational convenience, we use $W_t = V_t^{-1/2}$. We recall a concentration inequality for RLS estimates.
\begin{proposition}[Thm.~2 in~\cite{abbasi-yadkori2011improved}]\label{p:concentration}
We assume that $\epsilon_t$ are conditionally and component-wise sub-Gaussian of parameter $L$ and that 
$\mathbb{E} (\epsilon_{t+1} \epsilon_{t+1}^\transp | \mathcal{F}_t) = I$. Then for any $\delta \in (0,1)$ and any $\mathcal{F}_t$-adapted sequence $(z_0,\ldots,z_t)$, the RLS estimator $\hat{\theta}_t$ is such that
\begin{equation}\label{eq:self_normalized2}
\Tr \left( (\hat{\theta}_t - \theta_*)^\transp V_t (\hat{\theta}_t - \theta_*) \right) \leq \beta_t(\delta)^2,
\end{equation}
w.p.\ $1-\delta$ (w.r.t.\ the noise $\{\epsilon_t\}_t$ and any randomization in the choice of the control), where
\begin{equation}\label{eq:beta}
\beta_t(\delta) = n L  \sqrt{2 \log \Big( \frac{\det(V_t)^{1/2}}{\det(\lambda I)^{1/2} } \Big)} + \lambda^{1/2} S.
\end{equation}
Further, when $\| z_t \| \leq Z$, 
\begin{equation*}
\frac{\det(V_t)}{\det(\lambda I)} \leq (n+d) \log \big( 1 + T Z^2/\lambda (n+d) \big).
\end{equation*}
\end{proposition}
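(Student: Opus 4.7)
The plan is to prove the self-normalized concentration bound and the deterministic volume estimate separately, following the method-of-mixtures strategy. First, I would decompose the RLS error. Since $x_{s+1}^\transp = z_s^\transp \theta_* + \epsilon_{s+1}^\transp$, plugging into the definition of $\hat\theta_t$ gives
\[
\hat\theta_t - \theta_* = V_t^{-1}\Big(\sum_{s=0}^{t-1} z_s \epsilon_{s+1}^\transp - \lambda \theta_*\Big) = V_t^{-1}(M_t - \lambda \theta_*),
\]
where $M_t = \sum_{s=0}^{t-1} z_s \epsilon_{s+1}^\transp$ is an $\F_t$-adapted matrix-valued martingale. Taking the $V_t$-weighted Frobenius norm and applying the triangle inequality,
\[
\|\hat\theta_t - \theta_*\|_{V_t} \leq \|M_t\|_{V_t^{-1}} + \lambda\,\|V_t^{-1}\theta_*\|_{V_t} \leq \|M_t\|_{V_t^{-1}} + \lambda^{1/2}\|\theta_*\| \leq \|M_t\|_{V_t^{-1}} + \lambda^{1/2}S,
\]
where the second inequality uses $V_t \succeq \lambda I$ to bound $\lambda\|V_t^{-1}\theta_*\|_{V_t}^2 = \lambda\Tr(\theta_*^\transp V_t^{-1}\theta_*) \leq \|\theta_*\|^2$, and the last uses Asm.~\ref{asm:control.asm}. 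This already isolates the regularization contribution $\lambda^{1/2}S$ appearing in $\beta_t(\delta)$.

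The heart of the argument is bounding the self-normalized matrix martingale $\|M_t\|_{V_t^{-1}}^2 = \Tr(M_t^\transp V_t^{-1} M_t)$. I would reduce to the scalar case column-by-column: for each $i \in \{1,\ldots,n\}$ the $i$-th column $M_t^{(i)} = \sum_{s=0}^{t-1} z_s\,\epsilon_{s+1,i}$ is a vector-valued martingale driven by the conditionally sub-Gaussian scalar noise $\epsilon_{s+1,i}$. For any fixed direction $\eta \in \Re^{n+d}$ the process
\[
\exp\Big(\eta^\transp M_t^{(i)} - \tfrac{L^2}{2}\sum_{s=0}^{t-1}(\eta^\transp z_s)^2\Big)
\]
is a nonnegative supermartingale; integrating against a Gaussian mixture prior on $\eta$ and applying Markov's inequality yields the classical scalar self-normalized inequality of Abbasi-Yadkori, P\'al and Szepesv\'ari. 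Applying it at confidence $\delta/n$ to each of the $n$ columns and taking a union bound produces $\|M_t^{(i)}\|_{V_t^{-1}} \leq L\sqrt{2\log\bigl(n\det(V_t)^{1/2}/(\det(\lambda I)^{1/2}\delta)\bigr)}$ simultaneously for all $i$ with probability $1-\delta$. Using $\sqrt{\sum_i a_i^2} \leq \sum_i a_i$ combines them into $\|M_t\|_{V_t^{-1}} \leq nL\sqrt{2\log(\det(V_t)^{1/2}/\det(\lambda I)^{1/2})}$ up to the $\log(n/\delta)$ factor absorbed inside $\beta_t(\delta)$.

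For the deterministic volume estimate, I would use the arithmetic--geometric mean inequality on the eigenvalues of $V_t$, which gives $\det(V_t)^{1/(n+d)} \leq \Tr(V_t)/(n+d)$. Under $\|z_t\| \leq Z$ we have $\Tr(V_t) = \lambda(n+d) + \sum_{s=0}^{t-1}\|z_s\|^2 \leq \lambda(n+d) + TZ^2$, so
\[
\log\frac{\det(V_t)}{\det(\lambda I)} \leq (n+d)\log\Big(1 + \frac{TZ^2}{\lambda(n+d)}\Big),
\]
which matches the stated inequality (interpreted as the standard log-determinant bound; the $\log$ on the left appears to be a typo in the excerpt).

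The main obstacle is the matrix-valued nature of $M_t$: the column-wise reduction I propose is transparent but costs a factor of $n$ in the prefactor, which matches the $nL$ appearing in $\beta_t(\delta)$. Recovering the optimal $\sqrt{n}$ scaling would require a tensorized supermartingale construction with a matrix-Gaussian mixture prior, but this refinement is unnecessary for the statement as written.
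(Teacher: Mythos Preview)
The paper does not provide a proof of this proposition: it is stated as a direct citation (``Thm.~2 in~\cite{abbasi-yadkori2011improved}'') and is used as a black-box concentration result throughout, with no derivation given in the body or the appendices. So there is no ``paper's own proof'' to compare against.

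That said, your sketch is precisely the standard proof from the cited reference: the decomposition $\hat\theta_t-\theta_*=V_t^{-1}(M_t-\lambda\theta_*)$, the triangle-inequality split isolating $\lambda^{1/2}S$, the method-of-mixtures supermartingale for each column of $M_t$, and the AM--GM bound on $\det(V_t)$ are exactly the ingredients Abbasi-Yadkori, P\'al and Szepesv\'ari use. Your observation that the column-wise union bound is what produces the prefactor $n$ (rather than $\sqrt{n}$) is correct and matches the form of $\beta_t(\delta)$ as stated. You are also right that the displayed $\beta_t(\delta)$ appears to be missing the $1/\delta$ inside the logarithm and that the final determinant inequality is missing a $\log$ on the left-hand side; both are typos in the statement, not defects in your argument.
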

At any step $t$, we define the ellipsoid $\mathcal{E}^{\rls}_{t} = \big\{ \theta \in \mathbb{R}^d \hspace{1mm} | \hspace{1mm} \| \theta - \hat{\theta}_{t} \|_{V_{t}} \leq \beta_{t}(\delta^\prime) \big\}$
centered in $\wh\theta_t$ with orientation $V_t$ and radius $\beta_t(\delta^\prime)$, with $\delta^\prime = \delta / (4T) $. 
Finally, we report a standard result of RLS that, together with Prop.~\ref{p:concentration}, shows that the prediction error on the points $z_t$ used to construct the estimator $\wh\theta_t$ is cumulatively small.
\begin{proposition}[Lem.~10 in~\cite{abbasi2011regret}]\label{p:self_normalized_determinant}
Let $\lambda \geq 1$, for any arbitrary $\mathcal{F}_t$-adapted sequence $(z_0, z_1, \ldots, z_{t})$, let $V_{t+1}$ be the corresponding design matrix, then
\begin{equation}\label{eq:natural_explored_direction_ls}
\sum_{s=0}^{t} \min\big( \|z_s  \|_{V_s^{-1}}^2, 1 \big) \leq 2 \log \frac{\det(V_{t+1})}{\det(\lambda I)}.
\end{equation}

Moreover, when $\|z_t\| \leq Z$ for all $t \geq 0$, then 
\begin{equation*}
\sum_{s=0}^{t} \|z_s \|_{V_s^{-1}}^2 \leq 2 \frac{Z^2}{\lambda} (n + d) \log \Big(1 + \frac{(t+1) Z^2}{\lambda (n+d)} \Big).
\end{equation*}
\end{proposition}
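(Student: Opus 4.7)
The plan is to establish a telescoping identity relating $\det(V_{s+1})$ to the squared prediction norm $\|z_s\|_{V_s^{-1}}^2$, and then derive both bounds from elementary scalar inequalities.

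The starting point is the recursion $V_{s+1} = V_s + z_s z_s^\transp$. Factoring $V_s^{1/2}$ from both sides (or applying the matrix determinant lemma) gives
\[
\det(V_{s+1}) = \det(V_s)\det\bigl(I + V_s^{-1/2} z_s z_s^\transp V_s^{-1/2}\bigr) = \det(V_s)\bigl(1 + \|z_s\|_{V_s^{-1}}^2\bigr),
\]
since the inner rank-one perturbation has a single nontrivial eigenvalue equal to $\|z_s\|_{V_s^{-1}}^2$. Taking logs and telescoping from $s=0$ to $s=t$ yields
\[
\log\frac{\det(V_{t+1})}{\det(\lambda I)} = \sum_{s=0}^{t}\log\bigl(1+\|z_s\|_{V_s^{-1}}^2\bigr).
\]
This identity is the single analytical hinge for both claims.

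For the first bound, I would invoke the elementary inequality $\min(x,1)\le 2\log(1+x)$ valid for all $x\ge 0$: on $[0,1]$ it follows from concavity of $\log$ (the tangent at $0$ has slope $1$, and doubling the coefficient suffices), while on $[1,\infty)$ it reduces to $2\log 2 \ge 1$. Applying this termwise to the telescoping identity immediately gives $\sum_{s=0}^{t}\min(\|z_s\|_{V_s^{-1}}^2,1) \le 2\log\det(V_{t+1})/\det(\lambda I)$.

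For the second bound, under $\|z_t\|\le Z$ and $V_s \succeq \lambda I$ I have $\|z_s\|_{V_s^{-1}}^2 \le Z^2/\lambda$, so (assuming the typical regime $Z^2/\lambda \ge 1$) the truncation satisfies $\|z_s\|_{V_s^{-1}}^2 \le (Z^2/\lambda)\min(\|z_s\|_{V_s^{-1}}^2,1)$. Combining with the first bound gives $\sum_s\|z_s\|_{V_s^{-1}}^2 \le (2Z^2/\lambda)\log\det(V_{t+1})/\det(\lambda I)$. It then remains to bound the log-determinant using AM--GM on the eigenvalues of $V_{t+1}$:
\[
\det(V_{t+1}) \le \Bigl(\tfrac{\Tr(V_{t+1})}{n+d}\Bigr)^{n+d} \le \Bigl(\lambda + \tfrac{(t+1)Z^2}{n+d}\Bigr)^{n+d},
\]
which after taking logs and dividing by $\det(\lambda I)=\lambda^{n+d}$ produces the claimed $(n+d)\log(1 + (t+1)Z^2/(\lambda(n+d)))$ factor. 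Nothing here requires probabilistic reasoning; the result is purely deterministic given the sequence $(z_s)$, which matches the statement's quantification over arbitrary $\mathcal{F}_t$-adapted sequences. The only mildly delicate step is the scalar inequality $\min(x,1)\le 2\log(1+x)$, but it is standard and the rest is linear algebra.
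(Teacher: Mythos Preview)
The paper does not give its own proof of this proposition: it is quoted verbatim as Lemma~10 of \cite{abbasi2011regret} and used as a black box. There is therefore nothing in the paper to compare against. Your argument is the standard one behind that lemma and is correct: the matrix-determinant identity $\det(V_{s+1})=\det(V_s)(1+\|z_s\|_{V_s^{-1}}^2)$ telescopes to the log-determinant, the scalar bound $\min(x,1)\le 2\log(1+x)$ yields the first inequality, and the AM--GM bound on eigenvalues controls $\log\det(V_{t+1})$ by the trace. The only caveat, which you already flag, is that the step $\|z_s\|_{V_s^{-1}}^2 \le (Z^2/\lambda)\min(\|z_s\|_{V_s^{-1}}^2,1)$ uses $Z^2/\lambda\ge 1$; this is harmless in the paper's regime (and one can alternatively use the concavity bound $x\le \tfrac{c}{\log(1+c)}\log(1+x)$ on $[0,c]$ to avoid the case split).
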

%



\section{Thompson Sampling for LQR}
\label{sec:thompson}

\begin{figure}[t]
\vspace{-5pt}
\begin{center}
\bookboxx{
\begin{small}
    \begin{algorithmic}[1]
        \renewcommand{\algorithmicrequire}{\textbf{Input:}}
        \renewcommand{\algorithmicensure}{\textbf{Output:}}
        \vspace{-0.02in}
        \REQUIRE $\hat{\theta}_0$, $V_0 = \lambda I$, $\delta$, $T$, $\tau$, $t_0 = 0$
        \STATE Set $\delta' = \delta/(8T)$
        \FOR{$t = \{0, \dots, T\}$}
        	    \IF{$\det(V_t) > 2 \det(V_0)$ \textbf{or} $t \geq t_0 + \tau$}
        	    \WHILE{$\wt\theta_t \notin \mathcal{S}$}
                 \STATE Sample $\eta_t \sim \distro$
                 \STATE Compute $\wt\theta_{t} = \wh\theta_{t} + \beta_{t} (\delta') V_{t}^{-1/2} \eta_t$
             \ENDWHILE
             \STATE Let $V_0 = V_t$, $t_0 = t$,
             \ELSE
             \STATE $\wt\theta_t = \wt\theta_{t-1}$
             \ENDIF
             \STATE Execute control             $u_t = K(\wt\theta_t) x_t$
            \STATE   Move to state $x_{t+1}$, receive cost $c(x_t,u_t)$
            \STATE  Compute $V_{t+1}$ and $\wh\theta_{t+1}$
        \ENDFOR
    \end{algorithmic}
\vspace{-0.1in}
    \caption{\small Thompson sampling algorithm.}
    \label{alg:ts}
\end{small}
}
\end{center}
\vspace{-0.1in}
\end{figure}

We introduce a specific instance of \ts for learning in LQ problems obtained as a modification of the algorithm proposed in~\cite{abbasi2015bayesian}, where we replace the Bayesian structure and the Gaussian prior assumption with a generic randomized process and we modify the update rule. The algorithm is summarized in Alg.~\ref{alg:ts}.
At any step $t$, given the $\rls$-estimate $\wh\theta_t$ and the design matrix $V_t$, \ts samples a \textit{perturbed} parameter $\wt\theta_t$. In order to ensure that the sampling parameter is indeed admissible, we re-sample it until a valid $\wt\theta_t \in \mathcal{S}$ is obtained. Denoting as $\mathcal{R}_\mathcal{S}$ the rejection sampling operator associated with the admissible set $\mathcal{S}$, we define $\wt\theta_t$ as
\begin{equation}\label{eq:gaussian_sampling}
\wt{\theta}_{t} = \mathcal{R}_\mathcal{S} ( \wh{\theta}_{t} + \beta_{t}(\delta^\prime)W_t  \eta_t),
\end{equation}
where $W_t = V_{t}^{-1/2}$ and every coordinate of the matrix $\eta_t\in\Re^{(n+d)\times(n+d)}$ is a random sample drawn i.i.d.\ from $\mathcal{N}(0,1)$. We refer to this distribution as $\distro$. Notice that such sampling does not need to be associated with an actual posterior over $\theta^\star$ but it just need to randomize parameters coherently with the \rls estimate and the uncertainty captured in $V_t$. 
%
%
Let $\gamma_t(\delta) = \beta_t(\delta^\prime)n \sqrt{2 (n+d) \log\big( 2 n (n+d) / \delta}\big)$, then the high-probability \ts ellipsoid $\calE^{\ts}_{t} = \{ \theta \in \mathbb{R}^d \hspace{1mm} | \hspace{1mm} \| \theta - \wh{\theta}_{t} \|_{V_{t}} \leq \gamma_{t}(\delta^\prime)\}$ is defined so that any parameter $\wt\theta_t$ belongs to it with $1-\delta/8$ probability.
%


Given the parameter $\wt\theta_t$, the gain matrix $K(\wt\theta_t)$ is computed and the corresponding optimal control $u_t = K(\wt\theta_t) x_t$ is applied. As a result, the learner observes the cost $c(x_t,u_t)$ and the next state $x_{t+1}$, and $V_t$ and $\wh\theta_t$ are updated accordingly. Similar to most of RL strategies, the updates are not performed at each step and the same estimated optimal policy $K(\wt\theta_t)$ is kept constant throughout an \textit{episode}. Let $V_0$ be the design matrix at the beginning of an episode, then the episode is terminated upon two possible conditions: \textbf{1)} the determinant condition of the design matrix is doubled (i.e., $\det(V_t) \geq 2 \det(V_0)$) or \textbf{2)} a maximum length condition is reached. While the first condition is common to all RL strategies, here we need to force the algorithm to interrupt episodes as soon as their length exceeds $\tau$ steps. The need for this additional termination condition is intrinsically related to the \ts nature and it is discussed in detail in the next section.


\vspace{-0.1in}
\section{Theoretical analysis}
\label{sec:proof}
\vspace{-0.1in}

We prove the first frequentist regret bound for \ts in LQ systems of dimension $2$ ($n=1$, $d=1$). In order to isolate the steps which explicitly rely on this restriction, whenever possible we derive the proof in the general $n+d$-dimensional case. 

\begin{theorem}\label{th:main.th}
Consider the LQ system in Eq.~\ref{eq:linear_dynamic_quadratic_cost} of dimension $n=1$ and $d=1$. Under Asm.~\ref{asm:noise.asm} and~\ref{asm:control.asm} for any $0 < \delta < 1$, the cumulative regret of \ts over $T$ steps is bounded w.p. at least $1-\delta$ as~\footnote{Further details can be recovered from the proof.}
$$R(T) = \wt{O}\left( T^{2/3} \sqrt{\log (1/\delta) } \right).$$
\end{theorem}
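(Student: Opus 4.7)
The plan is to combine the classical LQR regret decomposition via the Bellman equation with a frequentist ``optimism through random sampling'' argument, while carefully accounting for the fact that \ts samples non-optimistic parameters with non-vanishing probability and that episodes are capped at length $\tau$. I would start from the Bellman equation for the sampled system $\wt\theta_t$ (which holds because $K(\wt\theta_t)$ is optimal for $\wt\theta_t$): subtracting $J(\theta_*)$ and summing over $t$ yields a decomposition $R_T = R_T^{\ts} + R_T^{\rls} + R_T^{\mathrm{mart}}$, where $R_T^{\ts} = \sum_t(J(\wt\theta_t) - J(\theta_*))$ is the ``\ts specific'' term, $R_T^{\rls}$ collects the prediction mismatch $(\wt\theta_t - \theta_*)^\transp z_t$ weighted by $P(\wt\theta_t)$ plus the telescoping value terms $x_t^\transp P(\wt\theta_t) x_t$ that only jump at episode boundaries, and $R_T^{\mathrm{mart}}$ is a martingale in the noise.

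The ``standard'' terms $R_T^{\rls}$ and $R_T^{\mathrm{mart}}$ are then bounded by $\wt O(\sqrt T)$ along the \ofulq blueprint: localize both $\wh\theta_t$ and $\wt\theta_t$ in the ellipsoids $\calE^{\rls}_t$ and $\calE^{\ts}_t$ w.h.p.~via Proposition~\ref{p:concentration}, apply Proposition~\ref{p:self_normalized_determinant} to sum the $V_t^{-1}$-weighted norms of the $z_t$'s, and use Freedman's inequality for the martingale. The telescoping $P(\wt\theta_t)$ terms contribute at most an absolute constant per policy change, and the number of episodes is bounded by $N \le (n+d)\log(T/\lambda) + T/\tau$ thanks to the determinant-doubling rule and the $\tau$-cap, so the total switching contribution is $O(T/\tau)$.

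All the novelty sits in controlling $R_T^{\ts} = \sum_e \tau_e(J(\wt\theta_{t_e}) - J(\theta_*))$, which in \ofulq vanishes by construction. My plan has two sub-steps. First, prove a \emph{constant optimism probability}: there exists an absolute $p \in (0,1)$ with $\Prob\big(J(\wt\theta_{t_e}) \le J(\theta_*) \mid \F_{t_e}\big) \ge p$ uniformly over episodes. By Lemma~\ref{le:riccati.regularity}, $J$ is continuously differentiable on $\mathcal S$, so its sublevel set $\{\theta : J(\theta) \le J(\theta_*)\}$ contains an asymptotic half-space through $\theta_*$ determined by $\nabla J(\theta_*)$; since the sampling law in Eq.~\eqref{eq:gaussian_sampling} is Gaussian with covariance $\beta_t(\delta')^2 V_t^{-1}$, Gaussian anti-concentration assigns constant mass to that half-space, and rejection against $\mathcal S$ does not destroy this mass because $\theta_*$ is in the interior of $\mathcal S$. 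Second, combine this optimism probability with the Lipschitz bound $|J(\theta) - J(\theta')| \le G\|\theta - \theta'\|_2$ (from Lemma~\ref{le:riccati.regularity} and compactness of $\mathcal S$ in Prop.~\ref{prop:admissib_param_set}) in an Agrawal--Goyal-style inequality, bounding $\E\big[(J(\wt\theta_{t_e}) - J(\theta_*))^+ \mid \F_{t_e}\big] \le \frac{G}{p}\, \E\big[\|\wt\theta_{t_e} - \theta_*\|_2 \mid \F_{t_e}\big] \lesssim \gamma_{t_e}(\delta')/(p\sqrt{\lambda_{\min}(V_{t_e})})$.

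Summing this across the $N$ episodes and multiplying by the worst-case episode length $\tau$ yields $R_T^{\ts} = \wt O(\tau \sqrt T)$; balancing against the $O(T/\tau)$ switching cost gives the optimal $\tau \asymp T^{1/3}$ and the claimed $R_T = \wt O(T^{2/3}\sqrt{\log(1/\delta)})$. The main obstacle is the constant optimism probability: the anti-concentration must be proved for a Gaussian centered at $\wh\theta_t$ (not $\theta_*$) and truncated to $\mathcal S$, and explicitly pinning down the sublevel geometry of $J$ via the Riccati derivative of Lemma~\ref{le:riccati.regularity} is what forces the restriction $n=d=1$, since in higher dimensions both the cone at $\theta_*$ and the interaction with the (possibly elongated) $V_t^{-1}$-shaped sampling ellipsoid become substantially harder to analyze.
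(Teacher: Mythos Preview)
Your decomposition, the treatment of $R_T^{\rls}$ and $R_T^{\mathrm{mart}}$, and the constant optimism probability are all in line with the paper. The genuine gap is in the second sub-step of your $R_T^{\ts}$ bound.

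You control the per-episode contribution via the Euclidean Lipschitz bound $|J(\theta)-J(\theta')|\le G\|\theta-\theta'\|_2$ and then $\E\|\wt\theta_{t_e}-\theta_*\|_2\lesssim \gamma_{t_e}(\delta')/\sqrt{\lambda_{\min}(V_{t_e})}$. But in the LQ problem the vectors $z_t=(x_t,K(\wt\theta_t)x_t)^\transp$ are (in the $n=d=1$ case, exactly) colinear with $H(\wt\theta_t)$, so $V_t$ only grows along the directions the controller actually excites; there is no reason for $\lambda_{\min}(V_t)$ to grow beyond the regularizer $\lambda$. Your per-episode bound is therefore of constant order, and summing over $N\approx T/\tau$ episodes and multiplying by $\tau$ gives $R_T^{\ts}=O(T)$, not $\wt O(\tau\sqrt T)$. (Even if $\wt O(\tau\sqrt T)$ held, balancing $T/\tau+\tau\sqrt T$ yields $\tau=T^{1/4}$ and $T^{3/4}$, not $T^{2/3}$.)

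The paper fixes exactly this by replacing your isotropic Lipschitz step with a \emph{directional} one. After the optimism argument it bounds $\Delta_t$ by the absolute deviation $\E\big[|J(\wt\theta_t)-\E J(\wt\theta_t)|\big]$ and then applies a weighted $L^1$ Poincar\'e inequality (Lem.~\ref{th:poincare.weigthed}) to pass to $\E\|\nabla J(\wt\theta_t)\|_{V_t^{-1}}$. The Riccati derivative formula of Lem.~\ref{le:riccati.regularity} (via Prop.~\ref{p:gradient.inequality}) then gives $\|\nabla J(\theta)\|_{V_t^{-1}}\lesssim \|H(\theta)\|_{V_t^{-1}}$, and in one dimension $\|z_t\|_{V_t^{-1}}=|x_t|\,\|H(\wt\theta_t)\|_{V_t^{-1}}$, which closes the loop with the self-normalized bound $\sum_t\|z_t\|_{V_t^{-1}}=\wt O(\sqrt T)$ of Prop.~\ref{p:self_normalized_determinant}. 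The point is that the spread of the sampling distribution must be measured in the $V_t^{-1}$ metric and then tied to the direction $H(\wt\theta_t)$ that the algorithm is actually exploring; your Euclidean bound throws away precisely this alignment. The final $T^{2/3}$ comes from balancing the switching cost $T/\tau$ against the episode-level martingale $\tau\sqrt{K}\approx\sqrt{\tau T}$, which gives $\tau=T^{1/3}$.
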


This result is in striking contrast with previous results in multi-armed and linear bandit where the frequentist regret of \ts is $O(\sqrt{T})$ and the Bayesian analysis of \ts in control problems where the regret is also $O(\sqrt{T})$. As discussed in the introduction, the frequentist regret analysis in control problems introduces a critical trade-off between the frequency of selecting optimistic models, which guarantees small regret in bandit problems, and the reduction of the number of policy switches, which leads to small regret in control problems. Unfortunately, this trade-off cannot be easily balanced and this leads to a final regret of $O(T^{2/3})$. Sect.~\ref{subsec:challenges} provides a more detailed discussion on the challenges of bounding the frequentist regret of \ts in LQ problems. 


\vspace{-0.05in}
\subsection{Setting the Stage}
\vspace{-0.05in}

\textbf{Concentration events.} We introduce the following high probability events.
\begin{definition}\label{def:concentration_events}
Let $\delta\in(0,1)$ and $\delta'=\delta/(8T)$ and $t\in[0,T]$. We define the event (RLS estimate concentration) $\wh{E}_t = \big\{ \forall s \leq t, \hspace{2mm} \|\wh{\theta}_{s} - \theta^\star\|_{V_s} \leq \beta_s(\delta') \big\}$ and the event (parameter $\wt{\theta}_s$ concentrates around $\wh\theta_s$) $\wt{E}_t = \big\{ \forall s \leq t, \hspace{2mm} \|\wt{\theta}_{s} - \wh{\theta}_{s}\|_{V_s} \leq \gamma_s(\delta') \big\}$.
\end{definition}
We also introduce a high probability event on which the states $x_t$ are bounded almost surely.
\begin{definition}\label{def:bounded.states}
Let $\delta\in(0,1)$, $X,X^\prime$ be two problem dependent positive constants and $t\in[0,T]$. We define the event (bounded states) $\bar{E}_t =  \big\{ \forall s \leq t, \hspace{2mm} \|x_s\|\leq X\log\frac{X^\prime}{\delta} \big\}$.
\end{definition}
Then we have that $\wh{E} := \wh{E}_{T} \subset \dots \subset \wh{E}_{1}$, $\wt{E} := \wt{E}_{T} \subset \dots \subset \wt{E}_{1}$ and  $\bar{E} := \bar{E}_{T} \subset \dots \subset \bar{E}_{1}$. We show that these events do hold with high probability.

\begin{lemma}\label{le:high_proba_concentration} 
$\mathbb{P}(\wh{E} \cap \wt{E}) \geq 1 - \delta/4$.
\end{lemma}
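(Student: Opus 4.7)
The statement splits into two independent-looking concentration events, so my plan is to bound $\mathbb{P}(\wh{E}^c) \leq \delta/8$ and $\mathbb{P}(\wt{E}^c) \leq \delta/8$ separately and union-bound the results. Both reduce to tail inequalities that are already available: Proposition~\ref{p:concentration} for $\wh{E}$, and a Gaussian tail on $\eta_s$ for $\wt{E}$.

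Bounding $\mathbb{P}(\wh{E}^c)$ is essentially immediate: I would invoke Proposition~\ref{p:concentration} at confidence $\delta' = \delta/(8T)$. Since the self-normalized RLS bound there holds uniformly in $t \leq T$, this already gives $\mathbb{P}(\wh{E}) \geq 1-\delta' \geq 1-\delta/8$.

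The substantive work is on $\wt{E}$. At a sampling time $s$, before rejection, $\wt\theta_s - \wh\theta_s = \beta_s(\delta') W_s \eta_s$ with $W_s = V_s^{-1/2}$ and $\eta_s$ a matrix of i.i.d.\ standard normals. The key simplification is $W_s V_s W_s = I$, which collapses the Mahalanobis norm onto a Frobenius norm:
\[
\|\wt\theta_s - \wh\theta_s\|_{V_s}^2 = \beta_s(\delta')^2\, \Tr(\eta_s^\transp \eta_s) = \beta_s(\delta')^2\, \|\eta_s\|_F^2,
\]
so the target event is equivalent to $\|\eta_s\|_F \leq n\sqrt{2(n+d)\log(2n(n+d)/\delta')}$. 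I would prove this tail bound for the Frobenius norm by applying a standard Gaussian tail inequality to each of the (at most) $n(n+d)$ entries of $\eta_s$ and a union bound, which matches the definition of $\gamma_s(\delta')/\beta_s(\delta')$ and fails with probability at most $\delta'$. A further union bound over $s \leq T$ then yields $\mathbb{P}(\wt{E}) \geq 1-\delta/8$, and combining the two events produces the stated $1-\delta/4$ guarantee.

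The genuine obstacle is the rejection-sampling step: the accepted $\wt\theta_s$ is distributed as $\distro$ conditioned on $\{\wt\theta_s \in \mathcal{S}\}$ rather than as an unconditional Gaussian, so one must argue that this conditioning inflates the tail probability by at most a constant factor, which can be absorbed into the constants already baked into $\gamma_s$ (or equivalently traded off in the choice of $\delta'$). A secondary subtlety is that the event is indexed by every $s \leq t$, including within-episode times where $\wt\theta_s$ is frozen while $V_s$ and $\wh\theta_s$ evolve; here I would rely on the determinant-doubling termination rule of Alg.~\ref{alg:ts}, which keeps $V_s$ within a bounded multiplicative factor of its value at the start of the episode, so that the concentration bound proved at the sampling time transfers to the remainder of the episode. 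Neither issue affects the overall rate, and the two union bounds outlined above then deliver the claimed probability.
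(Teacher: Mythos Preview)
Your core argument---Proposition~\ref{p:concentration} for $\wh{E}$, the reduction $\|\wt\theta_s-\wh\theta_s\|_{V_s}=\beta_s(\delta')\|\eta_s\|$ plus an entry-wise Gaussian tail and a union bound over $s\le T$ for $\wt{E}$, then a final union bound---is exactly the paper's proof. The only cosmetic difference is that you invoke the time-uniform form of the self-normalized bound directly for $\wh{E}$, whereas the paper applies Proposition~\ref{p:concentration} pointwise and union-bounds over $t$; both are valid and yours is marginally tighter.

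Regarding the two obstacles you flag: the paper's proof simply does not engage with either of them. It treats $\wt\theta_t-\wh\theta_t$ as the unconditional Gaussian perturbation $\beta_t(\delta')W_t\eta_t$ at every $t$, with no discussion of rejection sampling or of within-episode freezing. Your rejection-sampling patch (bound the conditional tail by the unconditional one divided by $\mathbb{P}(\wt\theta\in\mathcal S)$, which is bounded away from zero) is the natural fix and is more careful than what the paper writes. As for the within-episode issue, note that wherever $\wt{E}$ is actually consumed downstream (for instance in bounding $R^{\rls}_3$), the concentration is only invoked at the episode start time $\tau(t)$, where the Gaussian identity holds exactly; so this concern, while legitimate for the event as literally stated in Definition~\ref{def:concentration_events}, does not affect the rest of the analysis.
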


\begin{corollary}\label{co:high_proba_boundedness}
On $\wh{E} \cap \wt{E}$, $\mathbb{P}(\bar{E} ) \geq 1 - \delta/4$. Thus, $\mathbb{P}(\wh{E} \cap \wt{E} \cap \bar{E}) \geq 1 - \delta/2$.
\end{corollary}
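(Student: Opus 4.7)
The plan splits the corollary in two pieces. The ``Thus'' part follows immediately from Lemma~\ref{le:high_proba_concentration}: if the conditional bound $\mathbb{P}(\bar E \mid \wh E \cap \wt E) \geq 1-\delta/4$ is established, then
\[
\mathbb{P}(\wh E \cap \wt E \cap \bar E) = \mathbb{P}(\bar E \mid \wh E \cap \wt E)\,\mathbb{P}(\wh E \cap \wt E) \geq (1-\delta/4)^2 \geq 1-\delta/2.
\]
So all the real work lies in proving the conditional bound.

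For this, I would work on $\wh E \cap \wt E$ and rewrite the dynamics in closed-loop form, $x_{t+1} = A_c(\wt\theta_t)\,x_t + \epsilon_{t+1}$, where $A_c(\wt\theta) = \theta_*^\transp H(\wt\theta) = A_* + B_* K(\wt\theta)$. The central decomposition is
\[
A_c(\wt\theta_t) = \wt\theta_t^\transp H(\wt\theta_t) + (\theta_* - \wt\theta_t)^\transp H(\wt\theta_t).
\]
By Prop.~\ref{prop:admissib_param_set} and the rejection sampling step of Alg.~\ref{alg:ts} forcing $\wt\theta_t \in \mathcal{S}$, the first term has spectral norm at most $\rho<1$ and $\|H(\wt\theta_t)\|_2 \leq \sqrt{1+C^2}$. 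On $\wh E \cap \wt E$, the triangle inequality gives $\|\wt\theta_t - \theta_*\|_{V_t} \leq \beta_t(\delta') + \gamma_t(\delta')$, which via $V_t \succeq \lambda I$ translates to a spectral-norm bound growing at most polylogarithmically in $T$.

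With this decomposition in hand, I would establish contractivity of the executed trajectory through a Lyapunov-type argument based on $P(\theta_*)$ and the regularity result of Lemma~\ref{le:riccati.regularity}: because $K(\wt\theta_t) - K(\theta_*)$ is a smooth function of $\wt\theta_t - \theta_*$, one obtains a per-step inequality of the form
\[
x_{t+1}^\transp P(\theta_*) x_{t+1} \leq (1-\alpha)\, x_t^\transp P(\theta_*) x_t + m_t + 2 \epsilon_{t+1}^\transp P(\theta_*) A_c(\wt\theta_t) x_t + \epsilon_{t+1}^\transp P(\theta_*) \epsilon_{t+1},
\]
with $\alpha>0$ and $m_t$ a gain-mismatch term controlled by $\|\wt\theta_t-\theta_*\|$ via Lemma~\ref{le:riccati.regularity}. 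In the scalar case $n=d=1$, the polylogarithmic spectral bound on the perturbation is tight enough to guarantee strictly negative drift. Iterating yields a geometric-series bound of the form $\|x_t\|^2 \leq C_1 \sum_{s\leq t}(1-\alpha)^{t-s}\|\epsilon_{s+1}\|^2$ up to a martingale cross term. A union bound combined with the component-wise sub-Gaussianity assumed in Prop.~\ref{p:concentration} gives $\|\epsilon_s\| \leq L\sqrt{2n\log(4nT/\delta)}$ uniformly in $s\leq T$ with probability at least $1-\delta/4$, and an Azuma-type bound absorbs the cross term, producing the desired uniform bound $\|x_s\| \leq X\log(X'/\delta)$ for appropriate problem-dependent constants $X,X'$.

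The main obstacle is the closed-loop stability step: the admissible set $\mathcal{S}$ only controls the spectral norm of $\wt\theta_t^\transp H(\wt\theta_t)$, that is, the closed loop of the sampled plant with its own optimal gain, not the true closed loop $\theta_*^\transp H(\wt\theta_t)$ obtained by applying $K(\wt\theta_t)$ to $\theta_*$. Concentration in the $V_t$-norm cannot in general be converted into a spectral-norm bound strictly below $1-\rho$ in multiple dimensions, because poorly-explored directions retain only the prior $\lambda I$ regularization and $\gamma_t(\delta')$ itself grows with $t$. This is precisely why the theorem is restricted to $n=d=1$: there all directions collapse to a single scalar, Lemma~\ref{le:riccati.regularity} gives a quantitative one-dimensional Riccati-sensitivity estimate, and the Lyapunov/contraction argument closes without further persistent-excitation hypotheses.
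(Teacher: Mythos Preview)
Your reduction of the ``Thus'' clause to Lemma~\ref{le:high_proba_concentration} is fine. The conditional part, however, rests on a claim that does not hold, and the route you take is not the one the paper (and~\cite{abbasi2011regret}) uses.

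You try to show that on $\wh E\cap\wt E$ the executed closed loop $A_*+B_*K(\wt\theta_t)$ is a contraction at \emph{every} step, by writing it as $\wt\theta_t^\transp H(\wt\theta_t)+(\theta_*-\wt\theta_t)^\transp H(\wt\theta_t)$ and bounding the second term via $\|\wt\theta_t-\theta_*\|_{V_t}\leq \beta_t(\delta')+\gamma_t(\delta')$ together with $V_t\succeq\lambda I$. But this only yields a \emph{spectral} perturbation of order $(\beta_t+\gamma_t)\sqrt{(1+C^2)/\lambda}$, and $\gamma_t$ already contains a $\sqrt{\log T}$ factor; there is no reason this quantity is below $1-\rho$, even when $n=d=1$. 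Early in the trajectory $V_t\approx\lambda I$ and the perturbation can be large, so the ``strictly negative drift'' step fails exactly where you need it. Your explanation that the $n=d=1$ restriction rescues the argument is a misreading: in the paper the one-dimensional assumption is used only for Lemma~\ref{le:probability_optimistic} and the gradient-to-actions step in Sect.~\ref{subsec:final.bound}, \emph{not} for Corollary~\ref{co:high_proba_boundedness}, which is stated and used in general dimension.

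The paper's argument (imported from Sec.~4.1 and App.~D of~\cite{abbasi2011regret}) does not attempt per-step contractivity at all. It instead \emph{counts} the steps at which the true closed loop $A_*+B_*K(\wt\theta_t)$ can be non-contractive: whenever this happens the state blows up, $z_t z_t^\transp$ inflates $V_t$, and the confidence ellipsoid shrinks substantially in the direction $H(\wt\theta_t)$. Since $\det(V_t)$ can double only $O((n+d)\log T)$ times, the number of ``unstable'' steps is logarithmic; between them the dynamics are genuinely stable and a geometric-series bound on $\|x_t\|$ follows. On $\wh E\cap\wt E$ one has $\wt\theta_t\in\calE_t^{\ts}$, so the same argument applies with $\gamma_t$ in place of $\beta_t$, giving the constants $X,X'$. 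This is the missing idea in your proposal: replace the uniform contraction you cannot get with a bound on how often contraction can fail.
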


Lem.~\ref{le:high_proba_concentration} leverages Prop.~\ref{p:concentration} and the sampling distribution $\distro$ to ensure that $\wh{E} \cap \wt{E}$ holds w.h.p. Furthermore, Corollary~\ref{co:high_proba_boundedness} ensures that the states remains bounded w.h.p. on the events $\wh{E} \cap \wt{E}$.\footnote{This non-trivial result is directly collected from the bounding-the-state section of~\cite{abbasi2011regret}.}
As a result, the proof can be derived considering that both parameters concentrate and that states are bounded, which we summarize in the sequence of events $E_t = \wh{E}_t \cap \wt{E}_t \cap \bar{E}_t$, which holds with probability at least $1- \delta/2$ for all $t\in[0,T]$.

\textbf{Regret decomposition.} Conditioned on the filtration $\F_t$ and event $E_t$, we have $\theta^\star\in\mathcal{E}_t^\rls$, $\wt\theta_t \in \mathcal{E}_t^\ts$ and $\|x_t\| \leq X$. We directly decompose the regret and bound it on this event as~\cite[Sect.~4.2]{abbasi2011regret}
\begin{equation}\label{eq:regret.decomposition}
\begin{aligned}
R(T) =& \underbrace{\sum_{t=0}^T \big \{ J(\wt\theta_t) - J(\theta_*) \big \}\I\{E_t\} }_{R^\ts}\\
 &+ \underbrace{(R^\rls_1 + R^\rls_2 + R^\rls_3)\I\{E_t\}}_{R^\rls}
\end{aligned} 
\end{equation}
where $R^\rls$ is decomposed into the three components
\begin{equation*}
\begin{aligned}
R^\rls_1 &= \sum_{t=0}^T \big\{ \mathbb{E} ( x_{t+1}^\transp P(\wt\theta_{t+1}) x_{t+1}|\mathcal{F}_{t}) - x_t^\transp P(\wt\theta_t) x_t  \big\}, \\ 
R^\rls_2 &= \sum_{t=0}^T\mathbb{E} \big[ x_{t+1}^\top (P(\wt\theta_t)  - P(\wt\theta_{t+1} ) ) x_{t+1} |\mathcal{F}_{t} \big], \\
R^\rls_3 &=  \sum_{t=0}^T \big\{ z_t^\transp \wt\theta_t P(\wt\theta_t) \wt\theta_t^\transp z_t  - z_t^\top \theta_* P(\wt\theta_t) \theta_*^\transp z_t  \big\}.
\end{aligned}
\end{equation*}
Before entering into the details of how to bound each of these components, in the next section we discuss what are the main challenges in bounding the regret.

\vspace{-0.05in}
\subsection{Related Work and Challenges}\label{subsec:challenges}
\vspace{-0.05in}

Since the \rls estimator is the same in both \ts and OFU, the regret terms $R^\rls_1$ and $R^\rls_3$ can be bounded as in~\cite{abbasi2011regret}. In fact, $R^\rls_1$ is a martingale by construction and it can be bounded by Azuma inequality. The term $R^\rls_3$ is related to the difference between the \textit{true} next expected state $\theta_\star^\transp z_t$ and the \textit{predicted} next expected state $\wt\theta_t^\transp z_t$. A direct application of RLS properties makes this difference small by construction, thus bounding $R^\rls_3$. Finally, the $R^\rls_2$ term is directly affected by the changes in model from any two time instants (i.e., $\wt\theta_t$ and $\wt\theta_{t+1}$), while $R^\ts$ measures the difference in optimal average expected cost between the true model $\theta_*$ and the sampled model $\wt\theta_t$. In the following, we denote by $R^\rls_{2,t}$ and $R^\ts_t$ the elements at time $t$ of these two regret terms and we refer to them as \textit{consistency regret} and \textit{optimality regret} respectively.

\textbf{Optimistic approach.} \ofulq explicitly bounds both regret terms directly by construction. In fact, the lazy update of the control policy allows to set to zero the consistency regret $R^\rls_{2,t}$ in all steps but when the policy changes between two episodes. Since in \ofulq an episode terminates only when the determinant of the design matrix is doubled, it is easy to see that the number of episodes is bounded by $O(\log(T))$, which bounds $R^\rls_2$ as well (with a constant depending on the bounding of the state $X$ and other parameters specific of the LQ system).\footnote{Notice that the consistency regret is not specific to LQ systems but it is common to all regret analyses in RL (see e.g., UCRL~\cite{jaksch2010near-optimal}) except for episodic MDPs and it is always bounded by keeping under control the number of switches of the policy (i.e., number of episodes).}
At the same time, at the beginning of each episode an optimistic parameter $\wt\theta_t$ is chosen, i.e., $J(\wt\theta_t) \leq J(\theta_*)$, which directly ensures that $R^\ts_t$ is upper bounded by 0 at each time step.

\textbf{Bayesian regret.} The lazy PSRL algorithm in~\cite{abbasi2015bayesian} has the same lazy update as OFUL and thus it directly controls $R^\rls_{2}$ by a small number of episodes. On the other hand, the random choice of $\wt\theta_t$ does not guarantee optimism at each step anymore. Nonetheless, the regret is analyzed in the Bayesian setting, where $\theta_*$ is drawn from a known prior and the regret is evaluated \textit{in expectation} w.r.t.\ the prior. Since $\wt\theta_t$ is drawn from a posterior constructed from the same prior as $\theta_*$, in expectation its associated $J(\wt\theta_t)$ is the same as $J(\theta_*)$, thus ensuring that $\mathbb{E}[R^\ts_t]=0$.

\textbf{Frequentist regret.} When moving from Bayesian to frequentist regret, this argument does not hold anymore and the (positive) deviations of $J(\wt\theta_t)$ w.r.t. $J(\theta_*)$ has to be bounded in high probability. \citet{abbasi2011regret} exploits the linear structure of LQ problems to reuse arguments originally developed in the linear bandit setting. Similarly, we could leverage on the analysis of \ts for linear bandit by~\citet{agrawal2012thompson} to derive a frequentist regret bound. \citet{agrawal2012thompson} partition the (potentially infinite) arms into \textit{saturated} and \textit{unsaturated} arms depending on their estimated value and their associated uncertainty (i.e., an arm is saturated when the uncertainty of its estimate is smaller than its performance gap w.r.t.\ the optimal arm). In particular, the uncertainty is measured using confidence intervals derived from a concentration inequality similar to Prop.~\ref{p:concentration}. This suggests to use a similar argument and classify policies as saturated and unsaturated depending on their value. Unfortunately, this proof direction cannot be applied in the case of LQR. In fact, in an LQ system $\theta$ the performance of a policy $\pi$ is evaluated by the function $J_\pi(\theta)$ and the policy uncertainty should be measured by a confidence interval constructed as $|J_\pi(\theta_*) - J_\pi(\wh\theta_t)|$. Despite the concentration inequality in Prop.~\ref{p:concentration}, we notice that neither $J_\pi(\theta_*)$ nor $J_\pi(\wh\theta_t)$ may be finite, since $\pi$ may not stabilize the system $\theta_*$ (or $\wh\theta_t$) and thus incur an infinite cost. As a result, it is not possible to introduce the notion of saturated and unsaturated policies in this setting and another line of proof is required. Another key element in the proof of~\cite{agrawal2012thompson} for \ts in linear bandit is to show that \ts has a constant probability $p$ to select optimistic actions and that this contributes to reduce the regret of any non-optimistic step. In our case, this translates to requiring that \ts selects a system $\wt\theta_t$ whose corresponding optimal policy is such that $J(\wt\theta_t) \leq J(\theta_*)$. Lem.~\ref{le:probability_optimistic} shows that this happens with a constant probability $p$. Furthermore, we can show that optimistic steps reduce the regret of non-optimistic steps, thus effectively bounding the optimality regret $R^\ts$. Nonetheless, this is not compatible with a small consistency regret. In fact, we need optimistic parameters $\wt\theta_t$ to be sampled \textit{often enough}. On the other hand, bounding the consistency regret $R^\rls_{2}$ requires to reduce the switches between policies as much as possible (i.e., number of episodes). If we keep the same number of episodes as with the lazy update of OFUL (i.e., about $\log(T)$ episodes), then the number of sampled points is as small as $T/(T-\log(T))$. While \ofulq guarantees that any policy update is optimistic by construction, with \ts, only a fraction $T/(p(T-\log(T))$ of steps would be optimistic \textit{on average}. Unfortunately, such small number of optimistic steps is no longer enough to derive a bound on the optimality regret $R^\ts$. Summarizing, in order to derive a frequentist regret bound for \ts in LQ systems, we need the following ingredient \textbf{1)} constant probability of optimism, \textbf{2)} connection between optimism and $R^\ts$ without using the saturated and unsaturated argument, \textbf{3)} a suitable trade-off between lazy updates to bound the consistency regret and frequent updates to guarantee small optimality regret.




\subsection{Bounding the Optimality Regret $R^\ts$}\label{subsec:Rts.bound}

\textbf{$R^\ts$ decomposition.} We define the ``extended'' filtration $\F_{t}^x = (\F_{t-1}, x_t)$. Let $K$ be the (random) number of episodes up to time $T$, $\{t_k\}_{k=1}^K$ be the steps when the policy is updated, i.e., when a new parameter $\tilde{\theta}$ is sampled, and let $T_k$ be the associated length of each episode, then we can further decompose $R^\ts$ as
\begin{equation}\label{eq:regret_ts}
\begin{aligned}
R^\ts &= \sum_{k=0}^K  T_k  \underbrace{\Big( J(\wt\theta_{t_k})- \mathbb{E}[J(\wt\theta_{t_k})| \mathcal{F}_{t_k}^x,E_{t_k}] \Big) \I_{E_{t_k}} }_{R^{\ts,1}_{t_k}}  \\
&+   \sum_{k=0}^K T_k  \underbrace{ \big \{ \mathbb{E}[J(\wt\theta_{t_k}) | \mathcal{F}_{t_k}^x,E_{t_k}] - J(\theta_*) \big\} \I_{E_{t_k}} }_{R^{\ts,2}_{t_k}}.
\end{aligned}
\end{equation}
%
We focus on the second regret term that we redefine $R^{\ts,2}_{t_k}=\Delta_t$ for any $t = t_k$ for notational convenience.


\textbf{Optimism and expectation.} 
Let $\Theta^\opt = \{ \theta : J(\theta) \leq J(\theta_*) \}$ be the set of optimistic parameters (i.e., LQ systems whose optimal average expected cost is lower than the true one). Then, for any $\theta \in \Theta^\opt$, the per-step regret $\Delta_t$ is bounded by:
\begin{equation*}
\begin{aligned}
\Delta_t  &\leq \big( \mathbb{E}[J(\wt\theta_t) | \mathcal{F}_{t}^x,E_t] - J(\theta) \big) \I_{E_t} , \\
 &\leq  \Big| J(\theta)  - \mathbb{E}[J(\wt\theta_t)| \mathcal{F}_{t}^x,E_t]\Big| \I_{E_t}, \text{ which implies that} \\
 \Delta_t &\leq  \mathbb{E} \Big[  \big| J(\wt\theta) \! -\! \mathbb{E}[J(\wt\theta_t) | \mathcal{F}_{t}^x,E_t] \big| \I_{\wt E_t} \!\mid\! \mathcal{F}_{t}^x \!,\! \wh E_t,\! \bar{E}_t,\! \wt\theta \!\in\! \Theta^\opt\! \Big],
 \end{aligned}
\end{equation*}
where we use first the definition of the optimistic parameter set, then bounding the resulting quantity by its absolute value, and finally switch to the expectation over the optimistic set, since the inequality is true for any $\wt\theta \in \Theta^{\opt}$. While this inequality is true for any sampling distribution, it is convenient to select it equivalent to the sampling distribution of \ts. Thus, we set $\wt\theta = \mathcal{R}_\mathcal{S}(\wh\theta_t + \beta_t(\delta^\prime) W_t \eta)$ with $\eta$ is component wise Gaussian $\mathcal{N}(0,1)$ and obtain

\vspace{-0.1in}
\begin{small}
\begin{equation*}
\begin{aligned}
&\Delta_t \leq \mathbb{E} \Big[  \big| J(\wt\theta_t)  - \mathbb{E}[J(\wt\theta_t) | \mathcal{F}_{t}^x,E_t] \big| \I_{\wt E_t}   \mid \mathcal{F}_{t}^x , \wh E_t, \bar{E}_t, \wt\theta_t \in \Theta^\opt \Big], \\
&\leq \frac{\mathbb{E} \Big[  \big| J(\wt\theta_t)  - \mathbb{E}[J(\wt\theta_t) | \mathcal{F}_{t}^x,E_t] \big| \I_{\wt E_t}   \mid \mathcal{F}_{t}^x , \wh E_t, \bar{E}_t \Big]}{\mathbb{P}\big( \wt\theta_t \in \Theta^{\opt} \hspace{1mm} | \hspace{1mm} \F_{t}^x, \wh E_t \big)}.
\end{aligned}
\end{equation*} 
\end{small}
\vspace{-0.1in}

At this point we need to show that the probability of sampling an optimistic parameter $\wt\theta_t$ is constant at any step $t$. This result is proved in the following lemma.

\begin{lemma}\label{le:probability_optimistic}
Let $\Theta^{\opt} := \{ \theta \in\Re^d \hspace{1mm} | \hspace{1mm} J(\theta) \leq J(\theta^\star) \}$ be the set of optimistic parameters and $\wt\theta_t = \mathcal{R}_\mathcal{S} (\wh\theta_t + \beta_t(\delta^\prime) W_t \eta)$ with $\eta$ be component-wise normal $\mathcal{N}(0,1)$, then in the one-dimensional case ($n\!=\!1$ and $d\!=\!1$)
$$\forall t\geq 0, \hspace{1mm}\mathbb{P}\big( \wt\theta_t \in \Theta^{\opt} \hspace{1mm} | \hspace{1mm} \F^x_{t}, \wh E_t \big) \geq p,$$
where $p$ is a strictly positive constant.
\end{lemma}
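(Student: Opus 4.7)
The plan is to lower-bound the probability that the unrejected Gaussian sample lies in $\Theta^{\opt}\cap\mathcal{S}$ and then absorb the rejection-sampling normalization into the constant $p$. I would organize the argument in four steps.

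Step 1 (smoothness and gradient). Lemma~\ref{le:riccati.regularity} gives that $J$ is $C^1$ on the compact set $\mathcal{S}$, so $\nabla J$ is uniformly continuous with some modulus $\omega(\cdot)$. In the one-dimensional case $\theta^\transp=(A,B)\in\mathbb{R}^2$ and $P(\theta)$ is the positive root of an explicit quadratic in the scalar $P$, from which $\nabla J(\theta_\star)\neq 0$ can be checked directly whenever $\theta_\star$ lies in the interior of $\mathcal{S}$. Let $v:=\nabla J(\theta_\star)$ and let $\bar\theta_t := \wh\theta_t+\beta_t(\delta')W_t\eta_t$ denote the pre-rejection sample.

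Step 2 (half-space probability). The scalar projection $Z_t:=v^\transp(\bar\theta_t-\theta_\star)$ is, conditionally on $\mathcal{F}_t^x$, Gaussian with mean $m_t=v^\transp(\wh\theta_t-\theta_\star)$ and variance $\sigma_t^2=\beta_t(\delta')^2\|v\|_{V_t^{-1}}^2$. On $\wh E_t$, Cauchy--Schwarz gives $|m_t|\leq\|v\|_{V_t^{-1}}\|\wh\theta_t-\theta_\star\|_{V_t}\leq\sigma_t$. Gaussian anti-concentration then yields, for any fixed $\alpha>0$,
\[
\mathbb{P}\bigl(Z_t\leq -\alpha\sigma_t\,\bigm|\,\mathcal{F}_t^x,\wh E_t\bigr)\;\geq\;\Phi(-\alpha-1),
\]
a positive constant independent of $t$ and of the realized history.

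Step 3 (upgrade to the nonlinear sublevel set). Writing $J(\bar\theta_t)-J(\theta_\star)=v^\transp(\bar\theta_t-\theta_\star)+\int_0^1(\nabla J(\theta_\star+s(\bar\theta_t-\theta_\star))-v)^\transp(\bar\theta_t-\theta_\star)\,ds$, the remainder is bounded by $\omega(\|\bar\theta_t-\theta_\star\|_2)\cdot\|\bar\theta_t-\theta_\star\|_2$. A Gaussian tail bound on $\|\eta_t\|$ combined with $\wh E_t$ gives $\|\bar\theta_t-\theta_\star\|_{V_t}=O(\beta_t(\delta'))$ on an event of probability at least $1/2$. In the one-dimensional case this translates into a quantitative bound on $\|\bar\theta_t-\theta_\star\|_2$ that makes the remainder $o(\sigma_t)$, so the linear term $-\alpha\sigma_t$ dominates and $J(\bar\theta_t)\leq J(\theta_\star)$. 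Intersecting with the event from Step 2 still leaves probability bounded below by a positive constant.

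Step 4 (rejection normalization). Since $\theta_\star$ lies in the interior of the compact set $\mathcal{S}$ (Asm.~\ref{asm:control.asm} and Prop.~\ref{prop:admissib_param_set}) and the Gaussian is anchored at $\wh\theta_t$ within $\beta_t(\delta')$ of $\theta_\star$ in $V_t$-norm on $\wh E_t$, the acceptance probability $\mathbb{P}(\bar\theta_t\in\mathcal{S}\mid\mathcal{F}_t^x,\wh E_t)$ is bounded below by a positive constant, so conditioning on $\mathcal{S}$ only shrinks the bound by a constant factor and yields the stated $p$.

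The hard part is Step 3. The linear term has scale $\sigma_t=\beta_t(\delta')\|v\|_{V_t^{-1}}$, which can shrink rapidly when $V_t$ grows along the direction $v$, whereas the quadratic-like remainder scales as $\beta_t(\delta')^2/\lambda_{\min}(V_t)$; balancing these requires that the gradient $v$ cannot be simultaneously small in $V_t^{-1}$-norm and aligned with the least-excited direction. In $\mathbb{R}^2$ the eigendirections of $V_t^{-1/2}$ are only two, so $\|v\|_{V_t^{-1}}$ and $\|V_t^{-1/2}\|_2$ scale compatibly and the ratio remainder/linear stays uniformly bounded; this compatibility fails in higher dimensions, which is precisely why the lemma (and the theorem) are stated only for $n=d=1$.
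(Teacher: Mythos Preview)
Your Step~3 does not go through, and the final paragraph does not repair it. The linear term has scale $\sigma_t=\beta_t(\delta')\|v\|_{V_t^{-1}}$, and nothing prevents $V_t$ from growing precisely along $v=\nabla J(\theta_\star)$, so $\sigma_t$ can decay like $\beta_t(\delta')/\sqrt{\lambda_{\max}(V_t)}$. Meanwhile $\|\bar\theta_t-\theta_\star\|_2$ is only controlled through $\lambda_{\min}(V_t)\geq\lambda$, so on the high-probability event it sits at order $\beta_t(\delta')/\sqrt{\lambda}$, which does not shrink. Hence the Taylor remainder $\omega(\|\bar\theta_t-\theta_\star\|_2)\,\|\bar\theta_t-\theta_\star\|_2$ (or $L\|\bar\theta_t-\theta_\star\|_2^2$ under $C^2$) is of constant order while the linear margin $\alpha\sigma_t$ can be made arbitrarily small; the domination you claim fails. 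The ``two eigendirections in $\mathbb{R}^2$'' remark is not a fix: having only two eigenvectors says nothing about which one $v$ is aligned with, and the ratio $\|v\|_{V_t^{-1}}\big/\|V_t^{-1/2}\|_2$ can be $\sqrt{\lambda_{\min}(V_t)/\lambda_{\max}(V_t)}$, which is not uniformly bounded away from zero.

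The paper avoids linearizing $J$ altogether. In the scalar case one has the closed form $J(\theta)=(Q+K(\theta)^2R)\big/(1-|\theta^\transp H(\theta)|^2)$ with $H(\theta)=(1,K(\theta))^\transp$, and this immediately gives a \emph{global} linear subset of the optimistic set:
\[
\Theta^{lin,\opt}\;=\;\bigl\{\theta:\ |\theta^\transp H_*|\leq\rho_*\bigr\}\ \subset\ \Theta^{\opt},\qquad H_*=(1,K_*)^\transp,\ \rho_*=\theta_*^\transp H_*,
\]
because any $\theta$ that is more stable under the true optimal gain $K_*$ has $J_\pi(\theta)\leq J(\theta_*)$ already with the suboptimal policy $\pi=K_*$. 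Since $\Theta^{lin,\opt}\subset\Theta^{\opt}\subset\mathcal{S}$, the rejection sampling can only \emph{increase} the probability of landing there, which handles your Step~4 cleanly without needing interiority of $\theta_*$ or a separate acceptance-probability bound. One then projects the unrejected Gaussian onto the single direction $H_*$: the scalar $\wt\theta_t^\transp H_*$ is $\mathcal{N}(\wh\theta_t^\transp H_*,\beta_t(\delta')^2\|H_*\|_{V_t^{-1}}^2)$, and on $\wh E_t$ its mean is within one standard deviation of $\rho_*$. The worst case is $\wh\theta_t^\transp H_*=\rho_*+\beta_t(\delta')\|H_*\|_{V_t^{-1}}$, and the probability of hitting the strip $[-\rho_*,\rho_*]$ reduces to $\mathbb{P}_{\epsilon\sim\mathcal{N}(0,1)}\bigl(\epsilon\in[1,1+2\rho_*/\|u_t\|]\bigr)$ with $\|u_t\|\leq\beta_T(\delta')\sqrt{(1+C^2)/\lambda}$. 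The crucial point is that the target strip has \emph{fixed width} $2\rho_*$, independent of $V_t$; after standardizing, the interval length is bounded below uniformly in $t$, giving a fixed $p>0$. Your half-space, by contrast, has its useful portion (where the remainder is dominated) shrinking with $\sigma_t$, and that is exactly what breaks.
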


Integrating this result into the previous expression gives
\begin{equation}\label{eq:variance.bound}
\Delta_t  \leq  \frac{1}{p} \mathbb{E} \bigg[  \Big| J(\wt\theta_t)  - \mathbb{E}[J(\wt\theta_t) | \mathcal{F}_{t}^x, E_t] \Big| \mid \mathcal{F}_{t}^x , E_t \bigg].
\end{equation}
The most interesting aspect of this result is that the constant probability of being optimistic allows us to bound the worst-case non-stochastic quantity $\mathbb{E}[J(\wt\theta_t)| \mathcal{F}_{t}^x] - J(\theta_*)$ depending on $J(\theta_*)$ by an expectation $\mathbb{E} \big[  \big| J(\wt\theta_t)  - \mathbb{E}[J(\wt\theta_t) | \mathcal{F}_{t}^x] \big| \mid \mathcal{F}_{t}^x \big]$ up to a multiplicative constant (we drop the events $E$ for notational convenience). The last term is the conditional \textit{absolute deviation} of the performance $J$ w.r.t.\ the \ts distribution. This connection provides a major insight about the functioning of \ts, since it shows that \ts does not need to have an accurate estimate of $\theta_*$ but it should rather reduce the estimation errors of $\theta_*$ only on the directions that may translate in larger errors in estimating the objective function $J$. In fact, we show later that at each step \ts chooses a sampling distribution that tends to minimize the expected absolute deviations of $J$, thus contributing to reduce the deviations in $R^\ts_t$.

\textbf{Variance and gradient.} Let $d' = \sqrt{n( n + d)}$, we introduce the mapping $f_t$ from the ball $\mathcal{B}(0,d')$ to $\mathbb{R}_+$ defined as 
$$f_t(\eta) = J(\wh\theta_t + \beta_t(\delta^\prime) W_t \eta) - \mathbb{E}[J(\wt\theta_t) | \mathcal{F}_{t}^x, E_t]$$
where the restriction on the ball is here to meet the $\calE^\ts_t$ confidence ellipsoid of the sampling.
Since the perturbation $\eta \sim \distro$ is independent of the past, we can rewrite Eq.~\ref{eq:variance.bound} as
\begin{equation*}
\Delta_t \leq \mathbb{E}_{\eta \sim \distro} \big[ |f_t(\eta)| \big |\eta \in \mathcal{B}(0,d'), \wh\theta_t + \beta_t(\delta^\prime) W_t \eta \in\mathcal{S} \big].
\end{equation*}
We now need to show that this formulation of the regret is strictly related to the policy executed by \ts. We prove the following result (proof in the supplement).

\begin{lemma}
\label{th:poincare.weigthed}
Let $\Omega \subset \mathbb{R}^d$ be a convex domain with finite diameter $\text{\emph{diam}}$. Let $p$ be a non-negative log-concave function on $\Omega$ with continuous derivative up to the second order. Then, for all $u \in W^{1,1}(\Omega)$\footnote{$W^{1,1}(\Omega)$ is the Sobolev space of order 1 in $L^1(\Omega)$.} such that 
$\int_\Omega u(z) p(z) dz = 0$ one has
\begin{equation*}
\int_{\Omega} |f(z)| p(z) dz \leq 2 \text{\emph{diam}} \int_{\Omega} ||\nabla f(z)|| p(z) dz
\end{equation*}
\end{lemma}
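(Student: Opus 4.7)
Plan. I would combine the mean-zero hypothesis with the fundamental theorem of calculus along segments, and then use the log-concavity of $p$ to reduce the resulting double integral to a single integral of $\|\nabla f\|$ against $p$.

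First, set $P = \int_\Omega p$. Since $\int_\Omega f\, p = 0$, for every $z\in\Omega$ we can write $f(z)\,P = \int_\Omega (f(z)-f(w))\,p(w)\,dw$. Convexity of $\Omega$ places the segment $\{(1-t)w+tz : t\in[0,1]\}$ inside $\Omega$, and the fundamental theorem of calculus gives $|f(z)-f(w)| \leq \|z-w\|\int_0^1 \|\nabla f((1-t)w+tz)\|\,dt \leq \mathrm{diam}\cdot\int_0^1 \|\nabla f((1-t)w+tz)\|\,dt$. Taking absolute values, multiplying by $p(z)$, and integrating in $z$ yields
\begin{equation*}
P \int_\Omega |f|\,p \;\leq\; \mathrm{diam}\cdot\int_0^1\!\!\int_\Omega\!\!\int_\Omega \|\nabla f((1-t)w+tz)\|\,p(z)p(w)\,dw\,dz\,dt.
\end{equation*}

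Second, the task is to bound the triple integral by $2\,P\int_\Omega \|\nabla f\|\,p$. By the symmetry $(z,w,t)\leftrightarrow(w,z,1-t)$ of the integrand, it suffices to handle $t\in[1/2,1]$. For such $t$, fix $w$ and substitute $y=(1-t)w+tz$, so that $z=(y-(1-t)w)/t$ and $dz=dy/t^d$; the triple integral becomes
\begin{equation*}
\int_\Omega \|\nabla f(y)\| \cdot H(y)\,dy,\qquad H(y) := \int_{1/2}^1\!\!\int_\Omega p\!\left(\tfrac{y-(1-t)w}{t}\right)\tfrac{p(w)}{t^d}\,dw\,dt.
\end{equation*}
The key claim is $H(y) \leq P\cdot p(y)$. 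This follows from the Prékopa–Leindler inequality applied to $p$: the pushforward density of $p(z)p(w)\,dz\,dw$ under the convex-combination map $(z,w)\mapsto (1-t)w+tz$ is log-concave and peaks at $y$, and direct estimation on $t\in[1/2,1]$ produces a constant $\leq 1$. Combined with the factor $2$ from the earlier symmetrization step, this yields the required $2\,\mathrm{diam}$.

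A cleaner, if more heavyweight, alternative for the second step is the needle (localization) decomposition of Payne--Weinberger / Kannan--Lovász--Simonovits: one partitions $(\Omega,p)$ into one-dimensional needles $\ell_\alpha$ of length $\leq\mathrm{diam}$ carrying log-concave conditional densities $\rho_\alpha$, in such a way that $\int f p=0$ implies $\int_{\ell_\alpha}f\rho_\alpha=0$ for a.e.\ $\alpha$. The multidimensional claim then reduces to the one-dimensional inequality $\int_a^b |u|\rho \leq 2(b-a)\int_a^b|u'|\rho$ for $u$ with $\int u\rho=0$ on a segment, which follows from the existence of a zero of $u$ (by continuity and mean zero) together with a direct split-and-estimate argument using log-concavity of $\rho$.

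The main obstacle I expect is extracting the precise constant $2$ from log-concavity in the second step. The qualitative form (some constant times $\mathrm{diam}$) is immediate from Prékopa--Leindler, but the sharp factor of $2$ is what requires real care, either via a tight Prékopa--Leindler application to bound $H(y)$ or, more robustly, via the needle-decomposition route that replaces the analytic difficulty with the measure-decomposition machinery of KLS.
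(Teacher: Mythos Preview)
Your second route --- the Payne--Weinberger localization/needle decomposition reducing everything to a one-dimensional weighted inequality --- is exactly the paper's strategy: slice $\Omega$ into thin convex pieces $\Omega_i$ preserving $\int_{\Omega_i} u\,p = 0$, apply the 1D inequality on each slice with its induced log-concave weight, and let the slices become infinitely thin. So at the level of architecture you have the right proof.

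Two points deserve attention, however. In your first (segment/Pr\'ekopa--Leindler) approach, the claim $H(y)\le P\cdot p(y)$ is the entire content of the bound and is not established; Pr\'ekopa--Leindler controls $p$ at a convex combination from \emph{below}, not from above, so it does not directly dominate $p\big((y-(1-t)w)/t\big)$ by $p(y)$, and the sentence ``direct estimation on $t\in[1/2,1]$ produces a constant $\le 1$'' is precisely the missing step you yourself flag. As written this route is a heuristic, not a proof.

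For the 1D step, the paper does \emph{not} argue via a zero of $u$. It uses the zero-mean condition through integration by parts to obtain
\[
\int_0^L |f|\,\rho \;\le\; 2\int_0^L |f'(x)|\,\Big(\tfrac{1}{\rho(x)}\!\int_0^x\!\rho\;\int_x^L\!\rho\Big)\rho(x)\,dx,
\]
and then proves the elementary log-concave lemma $\tfrac{1}{\rho(x)}\int_0^x\rho\,\int_x^L\rho \le \int_0^L\rho$, using only that a log-concave function is unimodal. This is where the constant $2$ actually comes from and where log-concavity enters. Your suggested ``zero of $u$ plus split-and-estimate'' leads instead to needing pointwise bounds of the form $\int_0^x\rho \le 2L\,\rho(x)$ on one side of the zero, which can fail for steep log-concave weights (take $\rho(y)=e^{-\lambda y}$ with $\lambda L$ large), so that sketch does not deliver the stated constant without further work.
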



Before using the previous result, we relate the gradient of $f_t$ to the gradient of $J$.
Since for any $\eta$ and any $\theta = \wh\theta_t + \beta_t(\delta^\prime) W_t \eta$, we have
\begin{equation*}
\begin{aligned}
\nabla f_t(\eta) = \beta_t(\delta^\prime) W_t \nabla J(\theta)
\end{aligned}
\end{equation*}
To obtain a bound on the norm of $\nabla f_t$, we apply Prop.~\ref{p:gradient.inequality} (derived from Lem.~\ref{le:riccati.regularity}) to get a bound on $\| \nabla J(\theta) \|_{W_t^2}$:
\begin{align*}
\| \nabla J(\theta) \|_{W_t^2} \leq& \|A_c(\theta)\|_2^2 \| \nabla J(\theta) \|_{W_t^2} \\
&+ 2  \|P(\theta)\| \|A_c(\theta)\|_2 \| H(\theta)\|_{W_t^2}.
\end{align*}
Making use of $\| M\| \leq \Tr(M)$ for any positive definite matrix together with $\Tr (P(\theta)) \leq D$ (Asm.~\ref{asm:control.asm}) and $\|A_c(\theta)\|_2 \leq \rho$ (Prop.~\ref{prop:admissib_param_set}), 
\begin{equation*}
\| \nabla J(\theta) \|_{W_t^2} \leq \rho^2 \| \nabla J(\theta) \|_{W_t^2} + 2 D \rho \| H(\theta)\|_{W_t^2},
\end{equation*}
which leads to 
\begin{equation*}
\| \nabla J(\theta) \|_{W_t^2} \leq 2 D \rho/ (1 -  \rho^2) \| H(\theta)\|_{W_t^2}.
\end{equation*}
We are now ready to use the weighted Poincar\'{e} inequality of Lem.~\ref{th:poincare.weigthed} to link the expectation of $|f_t|$ to the expectation of its gradient. From Lem.~\ref{le:riccati.regularity}, we have $f_t \in W^{1,1}(\Omega)$ and its expectation is zero by construction. On the other hand, the rejection sampling procedure impose that we conditioned the expectation with $ \wh\theta_t + \beta_t(\delta^\prime) W_t \eta \in\mathcal{S} $ which is unfortunately not convex. However, we can still apply Lem.~\ref{th:poincare.weigthed} considering the function $\tilde{f}_t(\eta) = f_t(\eta) \mathds{1} (  \wh\theta_t + \beta_t(\delta^\prime) W_t \eta \in\mathcal{S} )$ and diameter $\text{\emph{diam}} = d'$. As a result, we finally obtain
\begin{equation*}
\begin{aligned}
\Delta_t &\leq \gamma \mathbb{E} \Big[ \big \|  H(\wt\theta_t) \big \|_{W_t^2}|\mathcal{F}_{t}^x \Big],
\end{aligned}
\end{equation*}
where $\gamma = 8 \sqrt{n (n+d)} \beta_T(\delta^\prime) D \rho / (p(1 - \rho^2))$.

\textbf{From gradient to actions.} 
Recalling the definition of $H(\theta) = \big(I \; K(\theta)^\transp\big)^\transp$ we notice that the previous expression bound the regret $\Delta_t$ with a term involving the gain $K(\theta)$ of the optimal policy for the sampled parameter $\theta$. This shows that the $R^{\ts}$ regret is directly related to the policies chosen by \ts. To make such relationship more apparent, we now elaborate the previous expression to reveal the sequence of state-control pairs $z_t$ induced by the policy with gain $K(\wt\theta_t)$. We first plug the bound on $\Delta_t$ back into Eq.~\ref{eq:regret_ts} as
\begin{equation*}
\begin{aligned}
R^{\ts} \leq& \sum_{k=1}^K T_k \bigg( R^{\ts,1}_{t_k} + \gamma  \mathbb{E} \Big[ \big \|  H(\wt\theta_{t_k}) \big \|_{V_{t_k}^{-1}}|\mathcal{F}_{t_k}^x \Big]\bigg)\I_{E_{t_k}}.
\end{aligned}
\end{equation*}
We remove the expectation by adding and subtracting the actual realizations of $\wt\theta_{t_k}$ as
\begin{equation*}
R^{\ts,3}_{t_k} =  \mathbb{E} \Big[ \big \|  H(\wt\theta_{t_k}) \big \|_{V_{t_k}^{-1}}|\mathcal{F}_{t_k}^x \Big] -  \big \|  H(\wt\theta_{t_k}) \big \|_{V_{t_k}^{-1}}.
\end{equation*}
Thus, one obtains
\begin{equation*}
\begin{aligned}
R^{\ts} \leq& \sum_{k=1}^K T_k \Big(R^{\ts,1}_{t_k} \!+\!R^{\ts,3}_{t_k} \!+\! \gamma \big \|  H(\wt\theta_{t_k}) \big \|_{V_{t_k}^{-1}}\Big) \I_{E_{t_k} }.
\end{aligned}
\end{equation*}
Now we want to relate the cumulative sum of the last regret term to $\sum_{t=1}^T \|z_t\|_{V^{-1}_t}$. This quantity represents the prediction error of the RLS, and we know from Prop.~\ref{eq:natural_explored_direction_ls} that it is bounded w.h.p. We now focus on the one-dimensional case, where $x_t$ is just a scalar value. Noticing that $\|z_t\|_{V_t^{-1}} = | x_t | \| H(\wt\theta_t)\|_{V_t^{-1}}$, one has:
\begin{equation*}
\sum_{t=0}^T \|z_t\|_{V^{-1}_t} = \sum_{k=1}^K \Big( \sum_{t = t_k}^{t_{k+1}-1} |x_t | \Big) \| H(\wt\theta_{t_k})\|_{V_{t}^{-1}}.
\end{equation*}
Intuitively, it means that over each episode, the more states are excited (e.g., the larger $\sum_{t = t_k}^{t_{k+1}-1} |x_t |$), the more $V_{t}^{-1}$ reduces in the direction $H(\wt\theta_{t_k})$. As a result, to ensure that the term $\sum_{k=1}^K T_k \| H(\wt\theta_{t_k})\|_{V_{t}^{-1}}$ in $R^{\ts}$ is small, it would be sufficient ti show that $\sum_{t = t_k}^{t_{k+1}-1} |x_t | \sim T_k$, i.e., that the states provides enough information to learn the system in each chosen direction $H(\wt\theta_{t_k})$.
More formally, let assume that there exists a constant $\alpha$ such that $T_k \leq \alpha \sum_{t = t_k}^{t_{k+1}-1} |x_t |$ for all $k \leq K$. Then, 
\begin{equation*}
\sum_{k=1}^K T_k \| H(\wt\theta_{t_k})\|_{V_{t_k}^{-1}} \leq \alpha \sum_{t=0}^T \| z_t\|_{V_{t_k}^{-1}} \leq 2\alpha \sum_{t=0}^T \| z_t\|_{V_{t}^{-1}},
\end{equation*}
where we use that $\det(V_{t}) \leq 2\det(V_{t_k})$ as guaranteed by the termination condition.
Unfortunately, the intrinsic randomness of $x_t$ (triggered by the noise $\xi_t$) is such that the assumption above is violated w.p. 1. However, in the one-dimensional case, the regret over the episode $k$ can be conveniently written as
\begin{equation*}
R_k(T) =  \Big( \sum_{t = t_k}^{t_{k+1}-1} |x_t |^2 \Big) \big( Q + K(\theta_{t_k})^2 R \big) - T_k J(\theta_*).
\end{equation*}
As a result, if we set
\begin{equation}
\alpha := X \frac{Q+RC^2}{J(\theta_*)} \geq X \frac{Q+R K(\theta_{t_k})^2}{J(\theta_*)}, 
\label{eq:alpha.definition}
\end{equation}
whenever $\sum_{t=t_k}^{t_{k+1}-1} \| x_t\| \leq \frac{1}{\alpha} T_k$ then we can directly conclude that $R_k(T)$ is zero. On the other hand, in the opposite case, we have $T_k \leq \alpha \sum_{t = t_k}^{t_{k+1}-1} |x_t |$ and thus we can upper bound the last term in $R^\ts$ as
\begin{equation*}
R^{\ts} \leq \sum_{k=1}^K T_k \Big(R^{\ts,1}_{t_k} \!+\!R^{\ts,3}_{t_k}\Big) \I_{E_{t_k} } + 2 \gamma \alpha \sum_{t=0}^T \|  z_t\|_{V_t^{-1}}.
\end{equation*}

\subsection{Final bound}\label{subsec:final.bound}

\textbf{Bounding $R^\rls_1$ and $R^\rls_3$.} These two terms can be bounded following similar steps as in~\citep{abbasi2011regret}. We report the detailed derivation in the supplement while here we simply report the final bounds
\begin{equation*}
R^\rls_1 \leq  \underbrace{2 D X^2 \sqrt{2\log(4/\delta)}}_{:=\gamma_1}\sqrt{T},
\end{equation*}
and
\begin{equation*}
\begin{aligned}
&R^\rls_3 
\leq \underbrace{4 S D \sqrt{ (1 + C^2) X^2 } \mu_T(\delta^\prime)}_{:=\gamma_3} \sum_{t=0}^T \|z_t \|_{V_t^{-1}}\I_{E_t},
\end{aligned}
\end{equation*}
where $\mu_T(\delta^\prime) = \beta_T(\delta^\prime) + \gamma_T(\delta^\prime)$.
%

\textbf{Bounding $R^{\rls}_2$.}
Since the policy is updated from time to time, the difference of the optimal values $P(\wt\theta_t) - P(\wt\theta_{t+1})$ is zero unless when the parameters are updated. When it is the case, thanks to the rejection sampling procedure which ensures that every parameters belong to the set $\mathcal{S}$ of Asm.~\ref{asm:control.asm}, it is trivially bounded by $2 D$. Therefore, on event $E$, one has:
\begin{equation*}
\begin{aligned}
R^\rls_2 \leq 2 X^2 D K, 
\end{aligned}
\end{equation*}
where $K$ is the (random) number of episodes.
By definition of \ts, the updates are triggered either when the $\det(V_t)$ increases by a factor $2$ or when the length of the episode is greater than $\tau$. Hence, the number of update can be split into $K = K^{det} + K^{len}$, where $K^{det}$ and $K^{len}$ are the number of updates triggered by the two conditions respectively. From Cor.~\ref{co:update.number.bound}, one gets:
\begin{equation*}
\begin{aligned}
K \leq \big(T/\tau + (n+d) \log_2 ( 1 + T X^2 (1+C^2)/\lambda) \big),
\end{aligned}
\end{equation*}
and thus
\begin{equation*}
\begin{aligned}
R^\rls_2 \leq \underbrace{2 X^2 D (n+d) \log_2 ( 1 + T X^2 (1+C^2)/\lambda)}_{:=\gamma_2} T/\tau.
\end{aligned}
\end{equation*}


\textbf{Plugging everything together.}
We are now ready to bring all the regret terms together and obtain
\begin{equation*}
\begin{aligned}
R(T) & \leq (2 \gamma\alpha  + \gamma_3 )\sum_{t=0}^T \|z_t\|_{V^{-1}_{t}} \I_{E_{t}} + \gamma_2 T/\tau  \\
&\quad+ \gamma_1\sqrt{T} +\sum_{k=1}^K T_k \big(R^{\ts,1}_{t_k} + R^{\ts,3}_{t_k}\big) \I_{E_{t_k} }
\end{aligned}
\end{equation*}

 At this point, the regret bound is decomposed into several parts: 1) the first term can be bounded as $\sum_{t=0}^T \|z_t \|_{V_t^{-1}} = \tilde{O} (\sqrt{T})$ on $E$ using Prop.~\ref{p:self_normalized_determinant} (see App.~\ref{sec:app_regret_proofs} for details) 2) two terms which are already conveniently bounded as $T/\tau$ and $\sqrt{T}$, and 3) two remaining terms from $R^\ts$ that are almost exact martingales. In fact, $T_k$ is random w.r.t.\ $\mathcal{F}_{t_k}$ and thus the terms $T_k R_{t_k}^{\ts,1}$ and $T_k R_{t_k}^{\ts,3}$ are not proper martingale difference sequences. However, we can leverage on the fact that on most of the episodes, the length $T_k$ is not random since the termination of the episode is triggered by the (deterministic) condition $T_k \leq \tau$.
 Let $\alpha_k = (R^{\ts,1}_{t_k} + R^{\ts,3}_{t_k} )\I_{E_{t_k}}$, $\mathcal{K}^{\text{det}}$ and $\mathcal{K}^{\text{len}}$ two set of indexes of cardinality $K^{\text{det}}$ and $K^{\text{len}}$ respectively, which correspond to the episodes terminated following the determinant or the limit condition respectively. Then, we can write
 \begin{equation*}
 \begin{aligned}
& \sum_{k=1}^K T_k \alpha_k = 
  \sum_{k \in \mathcal{K}^{\text{det}}} T_k \alpha_k + \tau  \sum_{k \in \mathcal{K}^{\text{len}}} \alpha_k\\
 &\leq  \sum_{k \in \mathcal{K}^{\text{det}}} T_k \alpha_k +\!\! \sum_{k \in \mathcal{K}^{\text{len}}} \tau\alpha_k +\!\! \sum_{k \in \mathcal{K}^{\text{det}}} \tau\alpha_k +\!\! \sum_{k \in \mathcal{K}^{\text{det}}} \tau\|\alpha_k \| \\
 &\leq 2 \tau  \sum_{k \in \mathcal{K}^{\text{det}}}  \| \alpha_k \| + \tau \sum_{k=1}^K \alpha_k.
 \end{aligned}
 \end{equation*}
The first term can be bounded using Lem.~\ref{le:det.update.number.bound}, which implies that the number of episodes triggered by the determinant condition is only logarithmic. On the other hand the remaining term $\sum_{k=1}^K \alpha_k$ is now a proper martingale and, together with the boundedness of $\alpha_k$ on event $E$, Azuma inequality directly holds. We obtain
\begin{equation*}
\sum_{k=1}^K T_k \big(R^{\ts,1}_{t_k} + R^{\ts,3}_{t_k}\big) \I_{E_{t_k} } = \tilde{O}(\tau \sqrt{K}).
\end{equation*}
w.p.\ $1 - \delta/2$. Grouping all higher-order terms w.r.t. to $T$ and applying Cor.~\ref{co:update.number.bound} to bound $K$, we finally have
\begin{equation*}
R(T) \leq C_1 \frac{T}{\tau} + C_2 \tau \sqrt{T/\tau},
\end{equation*}
where $C_1$ and $C_2$ are suitable problem-dependent constants. This final bound is optimized for $\tau = O(T^{1/3})$ and it induces the final regret bound $R(T) = O(T^{2/3})$. More details are reported in App.~\ref{sec:app_regret_proofs}.



\section{Discussion}\label{sec:discussion}

We derived the first frequentist regret for \ts in LQ control systems. Despite the existing results in LQ for optimistic approaches (\ofulq), the Bayesian analysis of \ts in LQ, and its frequentist analysis in linear bandit, we showed that controlling the frequentist regret induced by the randomness of the sampling process in LQ systems is considerably more difficult and it requires developing a new line of proof that directly relates the regret of \ts and the controls executed over time. Furthermore, we show that \ts has to solve a trade-off between frequently updating the policy to guarantee enough optimistic samples and reducing the number of policy switches to limit the regret incurred at each change. This gives rise to a final bound of $O(T^{2/3})$. This opens a number of questions. \textbf{1)} The current analysis is derived in the general $n/d$-dimensional case except for Lem.~\ref{le:probability_optimistic} and the steps leading to the introduction of the state in Sect.~\ref{subsec:final.bound}, where we set $n=d=1$. We believe that these steps can be extended to the general case without affecting the final result. \textbf{2)} The final regret bound is in striking contrast with previous results for \ts. While we provide a rather intuitive reason on the source of this extra regret, it is an open question whether a different \ts or analysis could allow to improve the regret to $O(\sqrt{T})$ or whether this result reveals an intrinsic limitation of the randomized approach of \ts.


{\small
\textbf{Acknowledgement} This research is supported in part by a grant from CPER Nord-Pas de Calais/FEDER DATA Advanced data science and technologies 2015-2020, CRIStAL (Centre de Recherche en Informatique et Automatique de Lille), and the French National Research Agency (ANR) under project ExTra-Learn n.ANR-14-CE24-0010-01.
}

\newpage
\begin{small}
\bibliography{biblio}
\bibliographystyle{plainnat}
\end{small}

\newpage
\onecolumn
\appendix

\section{Control theory}\label{sec:app_control_theory}
\subsection{Proof of Prop.~\ref{prop:admissib_param_set}}
\begin{enumerate}
\item When $\theta^\top = (A,B)$ is not stabilizable, there exists no linear control $K$ such that the controlled process $x_{t+1}  = A x_{t} + B K x_{t} + \epsilon_{t+1}$ is stationary. Thus, the positiveness of $Q$ and $R$ implies $J(\theta) = \Tr (P(\theta)) = + \infty$. As a consequence, $\theta^\top \notin \mathcal{S}$.
\item The mapping $\theta \rightarrow \Tr(P(\theta))$ is continuous (see Lem.~\ref{le:riccati.regularity}). Thus, $\mathcal{S}$ is compact as the intersection between a closed and a compact set.
\item The continuity of the mapping $\theta \rightarrow K(\theta)$ together with the compactness of $\mathcal{S}$ justifies the finite positive constants $\rho$ and $C$. Moreover, since every $\theta \in \mathcal{S}$ are stabilizable pairs, $\rho < 1$.
\end{enumerate}
\subsection{Proof of Lem.~\ref{le:riccati.regularity} }
Let $\theta^\transp = (A,B)$ where $A$ and $B$ are matrices of size $n \times n$ and $n \times d$ respectively. Let $\mathcal{R} : \mathbb{R}^{n+d,n} \times \mathbb{R}^{n,n} \rightarrow  \mathbb{R}^{n,n} $ be the Riccati operator defined by:
\begin{equation}\label{eq:riccati.operator}
\mathcal{R} (\theta, P) := Q - P + A^\transp P A - A^\transp P B (R + B^\transp P B)^{-1} B^\transp P A,
\end{equation}
where $Q,R$ are positive definite matrices. Then, the solution $P(\theta)$ of the Riccati equation of Thm.~\ref{th:lqr} is the solution of $\mathcal{R}(\theta,P) = 0$. While Prop.~\ref{prop:admissib_param_set} guarantees that there exists a unique admissible solution as soon as $\theta \in \mathcal{S}$, addressing the regularity of the function $\theta \rightarrow P(\theta)$ requires the use of the implicit function theorem.
\begin{theorem}[Implicit function theorem]\label{th:implicit.function.theorem}
Let $E$ and $F$ be two banach spaces, let $\Omega \subset E \times F$ be an open subset. Let $f : \Omega \rightarrow F$ be a $C^1$-map and let $(x_0,y_0)$ be a point of $\Omega$
 such that $f(x_0,y_0) = 0$. We denote as $d_y f(x_0,y_0) : F \rightarrow F$ the differential of the function $f$ with respect to the second argument at point $(x_0,y_0)$. Assume that this linear transformation is bounded and invertible. Then, there exists 
 \begin{enumerate}
\item two open subsets $U$ and $V$ such that $(x_0,y_0) \in U \times V \subset \Omega$, 
\item a function $g : U \rightarrow V$ such that $g(x) = y$ for all $(x,y) \in U \times V$.
\end{enumerate}
Moreover, $g$ is $C^1$ and $d g(x)  = - d_y f(x,g(x))^{-1} d_x f(x,g(x))$ for all $(x,y) \in U \times V$.
\end{theorem}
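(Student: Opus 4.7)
The plan is to deduce the Implicit Function Theorem from the Banach fixed point theorem applied uniformly in the parameter $x$. The strategy rewrites the equation $f(x,y)=0$ near $(x_0,y_0)$ as a fixed point problem for a map that is a uniform contraction in $y$ and continuous in $x$, and then upgrades the resulting solution map $g$ from continuity to $C^1$ using the smoothness of $f$ and invertibility of $d_y f$.

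The first preparatory step is to reduce to the normalized case $d_y f(x_0,y_0)=I_F$. Setting $T := d_y f(x_0,y_0)^{-1}$ (which is bounded by the open mapping theorem, though here it is assumed outright) and replacing $f$ by $\tilde f := T\circ f$ preserves the zero set and the $C^1$ regularity, so I may assume $d_y f(x_0,y_0)=I_F$. I then define the auxiliary map $\phi(x,y) := y - f(x,y)$; clearly $f(x,y)=0$ iff $\phi(x,\cdot)$ fixes $y$.

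The next step is to establish a uniform contraction. Because $d_y\phi(x_0,y_0)=0$ and $d_y\phi$ is continuous, I can choose $r>0$ and $s>0$ such that $\overline B(x_0,s)\times\overline B(y_0,r)\subset\Omega$ and $\|d_y\phi(x,y)\|_{\mathrm{op}}\le \tfrac12$ there. The mean value inequality on the convex set $\overline B(y_0,r)$ then yields that $\phi(x,\cdot)$ is $\tfrac12$-Lipschitz for every fixed $x\in\overline B(x_0,s)$. Shrinking $s$ further using continuity of $f$ and $f(x_0,y_0)=0$, I ensure $\|\phi(x,y_0)-y_0\|=\|f(x,y_0)\|\le r/2$ for all $x\in\overline B(x_0,s)$. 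Combining the two estimates shows $\phi(x,\cdot)$ sends $\overline B(y_0,r)$ into itself as a contraction, so Banach's theorem produces a unique fixed point $g(x)\in\overline B(y_0,r)$, which is exactly the unique solution of $f(x,g(x))=0$ in this set. Taking $U:=B(x_0,s)$ and $V:=B(y_0,r)$ gives the claimed open neighborhoods and the map $g:U\to V$.

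The last step is regularity of $g$. Continuity (in fact local Lipschitzness) follows from the uniform contraction: writing
\[
\|g(x)-g(x')\|=\|\phi(x,g(x))-\phi(x',g(x'))\|\le\|f(x,g(x))-f(x',g(x))\|+\tfrac12\|g(x)-g(x')\|,
\]
gives $\|g(x)-g(x')\|\le 2\|f(x,g(x))-f(x',g(x))\|$, and continuity of $f$ finishes the argument. For differentiability at $x\in U$, I shrink $U$ once more so that $d_y f(x,g(x))$ remains invertible throughout, which is legitimate since the invertible bounded operators form an open subset of $\mathcal{L}(F,F)$ and inversion is continuous there. The $C^1$ expansion of $f$ then yields
\[
0=f(x+h,g(x+h))-f(x,g(x))=d_x f(x,g(x))\,h+d_y f(x,g(x))\,(g(x+h)-g(x))+\varepsilon(h),
\]
with remainder $\varepsilon(h)=o(\|h\|+\|g(x+h)-g(x)\|)$. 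Using local Lipschitzness of $g$ the remainder is in fact $o(\|h\|)$, and inverting $d_y f(x,g(x))$ extracts $g(x+h)-g(x)=-d_y f(x,g(x))^{-1} d_x f(x,g(x))\,h+o(\|h\|)$. This both proves differentiability of $g$ at $x$ and identifies $dg(x)$ as the stated formula; continuity of $dg$ then follows from that of $g$, $d_x f$, $d_y f$ and of operator inversion.

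The main obstacle is the differentiability step: to reduce the joint remainder $o(\|h\|+\|g(x+h)-g(x)\|)$ to a clean $o(\|h\|)$ one needs local Lipschitzness of $g$, which is exactly what the \emph{uniform} (in $x$) contraction estimate delivers. The normalization $d_yf(x_0,y_0)=I_F$ and the resulting smallness of $d_y\phi$ in a neighborhood are precisely what make these estimates uniform; without that preliminary reduction the contraction argument and the subsequent differentiability proof would become substantially more delicate.
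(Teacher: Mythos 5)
The paper does not prove this statement at all: it is quoted verbatim as the classical Implicit Function Theorem and used as a black box to establish the $C^1$ regularity of $\theta \mapsto P(\theta)$ in Lem.~\ref{le:riccati.regularity}. Your proof is the standard Banach contraction-mapping argument (normalize $d_yf(x_0,y_0)=I_F$, show $\phi(x,y)=y-f(x,y)$ is a uniform $\tfrac12$-contraction on a product of balls, extract the fixed point $g(x)$, then upgrade to $C^1$ via the first-order expansion and openness of the invertible operators), and it is correct; the only details left implicit are routine, namely bounding $d_xf$ near $(x_0,y_0)$ to turn the continuity estimate $\|g(x)-g(x')\|\le 2\|f(x,g(x))-f(x',g(x))\|$ into a genuine Lipschitz bound, and arranging strict inequalities so that $g$ lands in the open ball $V$.
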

Since $R$ is positive definite, the Riccati operator is clearly a $C^1$-map. Moreover, thanks to Thm.~\ref{th:lqr}, to any $\theta \in \mathcal{S}$, there exists an admissible $P$ such that $\mathcal{R}(\theta,P) = 0$. Thanks to Thm.~\ref{th:implicit.function.theorem}, a sufficient condition for $\theta \rightarrow P(\theta)$ to be $C^1$ on $\mathcal{S}$ is that the linear map
$d_P\mathcal{R}(\theta,P(\theta)) : \mathbb{R}^{n\times n}\rightarrow \mathbb{R}^{n\times n}$ is a bounded invertible transformation i.e. 
\begin{itemize}
\item \textbf{Bounded.} There exists $M$ such that, for any $P \in \mathbb{R}^{n \times n }$, $ \| d_P\mathcal{R}(\theta,P(\theta)) ( P) \| \leq M \| P\|$.
\item \textbf{Invertible.} There exists a bounded linear operator $S : \mathbb{R}^{n \times n } \rightarrow {R}^{n \times n }$ such that $S P  =I_{n,n} $ and $P S = I_{n,n}$.
\end{itemize}

\begin{lemma}\label{le:riccati.gradient.lyapunov}
Let $\theta^\transp = (A,B)$ and $\mathcal{R}$ be the Riccati operator defined in equation~\eqref{eq:riccati.operator}. Then, the differential of $\mathcal{R}$ w.r.t $P$ taken in $(\theta, P(\theta))$ denoted as $d_P\mathcal{R}(\theta,P(\theta))$ is defined by:
\begin{equation*}
d_P\mathcal{R}(\theta,P(\theta))( \delta P) := A_c^T \delta P A_c - \delta P, \quad \text{for any } \delta P \in \mathbb{R}^{n \times n},
\end{equation*}
where $A_c = A - B (R + B^\transp P B)^{-1} B^\transp P(\theta) A$.
\end{lemma}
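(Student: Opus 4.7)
The plan is to compute the Fréchet derivative of the Riccati operator with respect to its second argument by term-by-term differentiation and then regroup the resulting terms into the factored Lyapunov-like form $A_c^\transp \delta P\, A_c - \delta P$. Since $\mathcal{R}(\theta,P) = Q - P + A^\transp P A - A^\transp P B (R + B^\transp P B)^{-1} B^\transp P A$ and $Q$ does not depend on $P$, the only nontrivial differential is that of the last product. I would introduce the shorthand $M(P) = A^\transp P B$ and $N(P) = R + B^\transp P B$, so the last term reads $M(P)\,N(P)^{-1}M(P)^\transp$, and apply the product rule.

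First I would record the three elementary differentials: $dM(\delta P) = A^\transp \delta P B$, $dM^\transp(\delta P) = B^\transp \delta P A$, and, via the standard identity for the differential of a matrix inverse, $d(N^{-1})(\delta P) = - N^{-1}(B^\transp \delta P B)N^{-1}$. Evaluating at $P = P(\theta)$ and using the definition $K = -N^{-1} B^\transp P A$, I would substitute $N^{-1} B^\transp P A = -K$ and its transpose $A^\transp P B\,N^{-1} = -K^\transp$ into the three product-rule contributions. This collapses the differential of the last quadratic-in-$P$ term into $-A^\transp \delta P\,BK - K^\transp B^\transp \delta P\,BK - K^\transp B^\transp \delta P\,A$.

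Collecting all contributions, the total differential becomes
$$d_P\mathcal{R}(\theta,P(\theta))(\delta P) = -\delta P + A^\transp \delta P A + A^\transp \delta P\,BK + K^\transp B^\transp \delta P\,A + K^\transp B^\transp \delta P\,BK,$$
and then I would factor the last four summands as $(A+BK)^\transp \delta P\,(A+BK) = A_c^\transp \delta P\,A_c$, yielding the announced formula. As a sanity check I would note that no first-order optimality condition on $K$ is invoked, so the same identity actually holds at any $P$ for which $N(P)$ is invertible, $K$ being read off from $P$ via its defining formula.

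The main obstacle is essentially bookkeeping: carrying the signs correctly through the inverse-derivative identity and verifying that, after the substitutions $N^{-1}B^\transp PA = -K$ and $A^\transp P B\,N^{-1} = -K^\transp$, the three summands produced by the product rule assemble exactly into the perfect square $A_c^\transp \delta P\,A_c$ up to the $-\delta P$ residue. Everything else is linear algebra of the Fréchet derivative on a finite-dimensional space, and both the boundedness of $d_P\mathcal{R}(\theta,P(\theta))$ and its later invertibility (required for the implicit function theorem application of Lem.~\ref{le:riccati.regularity}) follow immediately from this explicit closed form together with the fact that $\|A_c\|_2 \leq \rho < 1$ on $\mathcal{S}$ by Prop.~\ref{prop:admissib_param_set}.
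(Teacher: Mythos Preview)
Your proof is correct and follows exactly the approach the paper indicates: the paper's own proof is the single sentence ``The proof is straightforward using the standard composition/multiplication/inverse operations for the differential operator together with an appropriate rearranging,'' and you have simply carried out that computation explicitly, including the factorization into $(A+BK)^\transp\delta P(A+BK)-\delta P$. Your closing remarks that the identity holds at any $P$ with $N(P)$ invertible and that boundedness/invertibility of the linear map follow from the closed form are also accurate and anticipate the paper's subsequent use of Lyapunov theory.
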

\begin{proof}
The proof is straightforward using the standard composition/multiplication/inverse operations for the differential operator together with an appropriate rearranging.
\end{proof}
Clearly, $d_P\mathcal{R}(\theta,P(\theta))$ is a bounded linear map. Moreover, thanks to the Lyapunov theory, for any stable matrix $\| A_c \|_2 < 1$ and for any matrix $Q$, the Lyapunov equation $A_c^T X A_c - X = Q$ admits a unique solution. From Thm.~\ref{th:lqr}, the optimal matrix $P(\theta)$ is such that the corresponding $A_c$ is stable. This implies that  $d_P\mathcal{R}(\theta,P(\theta))$ is an invertible operator, and $\theta \rightarrow P(\theta)$ is $C^1$ on $\mathcal{S}$.\\

Therefore, the differential of $\theta \rightarrow P(\theta)$ can be deduced from the implicit function theorem. After tedious yet standard operations, one gets that for any $\theta \in \mathcal{S}$ and direction $\delta \theta \in \mathbb{R}^{(n+d)\times n}$: 
$$d J(\theta)(\delta \theta) = \Tr (d P(\theta)(\delta \theta)) = \Tr( \nabla J(\theta)^\transp \delta \theta ),$$
 where $\nabla J(\theta) \in \mathbb{R}^{(n+d)\times n}$ is the jacobian matrix of $J$ in $\theta$. For any $\delta \theta \in \mathbb{R}^{(n+d)\times n}$, one has: 
 \begin{equation}
\nabla J(\theta)^\transp \delta \theta = A_c(\theta)^\transp \nabla J(\theta)^\transp \delta \theta  A_c(\theta)  + C(\theta,\delta\theta) +  C(\theta,\delta\theta)^\transp, \quad \text{where} \quad C(\theta,\delta\theta) = A_c(\theta)^\transp P(\theta) \delta \theta^\transp H(\theta).
\label{eq:gradient.J.equation}
\end{equation}
\begin{proposition}\label{p:gradient.inequality}
For any $\theta \in \mathcal{S}$ and any positive definite matrix $V$, one has the following inequality for the weighted norm of the gradient of $J$:
\begin{equation*}
\| \nabla J(\theta) \|_V \leq \|A_c(\theta)\|_2^2 \| \nabla J(\theta) \|_V + 2  \|P(\theta)\| \|A_c(\theta)\|_2 \| H(\theta)\|_V.
\end{equation*}
\end{proposition}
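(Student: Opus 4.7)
The plan is to test the Lyapunov identity of Lemma~\ref{le:riccati.regularity} along a carefully chosen direction and then take traces. Concretely, I would set $\delta\theta = V\nabla J(\theta)\in\mathbb{R}^{(n+d)\times n}$, which is an admissible direction because $V$ is positive definite and $\nabla J(\theta)\in\mathbb{R}^{(n+d)\times n}$. Plugging this into Eq.~\eqref{eq:gradient.J.equation} gives an $n\times n$ matrix identity; taking its trace converts the left-hand side into exactly
\[
\Tr\!\big(\nabla J(\theta)^\transp V\,\nabla J(\theta)\big)=\|\nabla J(\theta)\|_V^{2},
\]
so the proposition reduces to upper-bounding each of the two trace terms on the right-hand side by a suitable product of norms.

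For the $A_c^\transp(\cdot)A_c$ term, the cyclic property of the trace yields $\Tr(A_c^\transp \nabla J^\transp V\nabla J\,A_c)=\|V^{1/2}\nabla J\,A_c\|_F^{2}$, which by the sub-multiplicativity $\|XA\|_F\le\|X\|_F\|A\|_2$ is at most $\|A_c\|_2^{2}\|\nabla J\|_V^{2}$. For the $C+C^\transp$ pair, trace invariance under transposition gives $\Tr\!\big(C(\theta,V\nabla J)+C(\theta,V\nabla J)^\transp\big)=2\,\Tr(A_c^\transp P\,\nabla J^\transp V H)$, which I would rewrite as $2\,\langle\nabla J,\,HA_c^\transp P\rangle_V$ for the $V$-weighted Frobenius inner product $\langle A,B\rangle_V:=\Tr(A^\transp V B)$. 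Cauchy--Schwarz in that inner product then bounds it by $2\|\nabla J\|_V\,\|HA_c^\transp P\|_V$.

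The remaining step is controlling $\|HA_c^\transp P\|_V$. Another cyclic rearrangement identifies $\|HA_c^\transp P\|_V^{2}=\Tr(Y\,\Omega\,Y^\transp)$ with $Y=PA_c$ and $\Omega=H^\transp VH\succeq 0$; the elementary inequality $\Tr(Y\Omega Y^\transp)\le\|Y\|_2^{2}\Tr(\Omega)$, combined with $\|PA_c\|_2\le\|P\|_2\|A_c\|_2\le\|P\|\|A_c\|_2$ (operator norm dominated by Frobenius norm) and $\Tr(\Omega)=\|H\|_V^{2}$, gives $\|HA_c^\transp P\|_V\le\|P\|\|A_c\|_2\|H\|_V$. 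Assembling the pieces yields
\[
\|\nabla J\|_V^{2}\le\|A_c\|_2^{2}\|\nabla J\|_V^{2}+2\,\|P\|\|A_c\|_2\|H\|_V\,\|\nabla J\|_V,
\]
and dividing by $\|\nabla J\|_V$ (the case $\nabla J=0$ being trivial) produces the claim. The only non-routine step is the choice of the probe direction $\delta\theta=V\nabla J$, which converts the matrix-valued Lyapunov identity into a scalar bound on exactly the $V$-weighted norm appearing in the statement; everything else is routine trace manipulation and Cauchy--Schwarz.
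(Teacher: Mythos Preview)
Your proposal is correct and follows essentially the same route as the paper: choose the probe direction $\delta\theta=V\nabla J(\theta)$ in the Lyapunov identity of Lemma~\ref{le:riccati.regularity}, take the trace, bound the $A_c^\transp(\cdot)A_c$ term via $\|XA\|_F\le\|X\|_F\|A\|_2$, bound the cross term by Cauchy--Schwarz, and divide through by $\|\nabla J\|_V$. The only cosmetic difference is that the paper applies Cauchy--Schwarz in the standard Frobenius inner product after writing $C^\transp=(V^{1/2}H)^\transp V^{1/2}\nabla J\,P A_c$, whereas you apply it directly in the $V$-weighted inner product and then bound $\|HA_c^\transp P\|_V$; both manipulations are equivalent.
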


\begin{proof}
 For any $\theta \in \mathcal{S}$ and any positive definite matrix $V \in \mathbb{R}^{(n+d)\times(n+d)}$ . Applying~\eqref{eq:gradient.J.equation} to $\delta \theta = V \nabla J(\theta)$ leads to:
\begin{equation*}
\nabla J(\theta)^\transp V \nabla J(\theta) = A_c(\theta)^\transp \nabla J(\theta)^\transp V \nabla J(\theta) A_c(\theta) + C(\theta,V \nabla J(\theta)) + C(\theta,V \nabla J(\theta))^\transp,
\end{equation*}
where $C(\theta,V \nabla J(\theta))^\transp = \big( V^{1/2} H(\theta) \big)^\transp V^{1/2} \nabla J(\theta) P(\theta) A_c(\theta)$. Let $\langle A , B \rangle = \Tr A^\transp B$ be the Frobenius inner product, then taking the trace of the above equality, one gets:
\begin{equation*}
\| \nabla J(\theta) \|^2_V = \|\nabla J(\theta) A_c(\theta) \|^2_V + 2 \big \langle V^{1/2} H(\theta) , V^{1/2} \nabla J(\theta) P(\theta) A_c(\theta) \big \rangle.
\end{equation*}
Using the Cauchy-Schwarz inequality and that the Frobenius norm is sub-multiplicative together with $\Tr(M_1M_2) \leq \|M_1\|_2 \Tr(M_2)$ for any $M_1,M_2$ symmetric positive definite matrices, one obtains:
\begin{equation*}
\| \nabla J(\theta) \|^2_V \leq \|A_c(\theta)\|_2^2 \| \nabla J(\theta) \|^2_V + 2 \| H(\theta)\|_V \|P(\theta)\| \|A_c(\theta)\|_2 \| \nabla J(\theta)\|_V.
\end{equation*}
Finally, dividing by $\| \nabla J(\theta)\|_V$ provides the desired result.
\end{proof}


\section{Material}\label{sec:app_ofu_material}

\begin{theorem}[Azuma's inequality]\label{th:azuma.ineq}
Let $ \{ M_s \}_{s \geq 0} $ be a super-martingale such that $ | M_s - M_{s-1} | \leq c_s$ almost surely. Then, for all $t > 0$ and all $\epsilon  > 0$, 
\begin{equation*}
\mathbb{P} \big( | M_t - M_0| \geq \epsilon \big) \leq 2 \exp \Big( \frac{ - \epsilon^2}{ 2 \sum_{s=1}^t c_s^2} \Big).
\end{equation*}
\end{theorem}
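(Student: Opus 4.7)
The plan is to apply the standard Chernoff--Markov exponential method combined with Hoeffding's lemma conditioned on the filtration. Let $X_s = M_s - M_{s-1}$ be the (super)martingale increment and $D_t := M_t - M_0 = \sum_{s=1}^t X_s$, so that by hypothesis $|X_s|\le c_s$ a.s.\ and $\mathbb{E}[X_s\mid \mathcal{F}_{s-1}] \le 0$. For any $\lambda>0$, Markov's inequality on $e^{\lambda D_t}$ gives
$$\mathbb{P}(D_t \ge \epsilon) \;\le\; e^{-\lambda\epsilon}\,\mathbb{E}\big[e^{\lambda D_t}\big],$$
so the entire task reduces to controlling the moment generating function of $D_t$.

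The key lemma I would establish first is a conditional Hoeffding estimate: if $Y$ is a random variable with $|Y|\le c$ a.s.\ and $\mathbb{E}[Y\mid\mathcal{G}]\le 0$, then $\mathbb{E}[e^{\lambda Y}\mid \mathcal{G}] \le e^{\lambda^2 c^2/2}$ for every $\lambda \ge 0$. The proof uses the convexity of $x\mapsto e^{\lambda x}$ to bound it linearly by the chord through $(-c,e^{-\lambda c})$ and $(c,e^{\lambda c})$ on $[-c,c]$, take conditional expectation, absorb the (non-positive) drift term since it multiplies $\lambda \ge 0$ with the right sign, and then invoke the elementary analytic bound $\log\cosh(u) \le u^2/2$ (or an equivalent Taylor remainder argument). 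The $\le 0$ conditional mean only helps.

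With this in hand, the tower property yields the MGF bound by induction: conditioning on $\mathcal{F}_{t-1}$,
$$\mathbb{E}\big[e^{\lambda D_t}\big] \;=\; \mathbb{E}\!\left[e^{\lambda D_{t-1}}\,\mathbb{E}\big[e^{\lambda X_t}\mid \mathcal{F}_{t-1}\big]\right] \;\le\; e^{\lambda^2 c_t^2/2}\,\mathbb{E}\big[e^{\lambda D_{t-1}}\big],$$
and iterating down to $s=0$ (where $D_0=0$) gives $\mathbb{E}[e^{\lambda D_t}]\le \exp\bigl(\tfrac{\lambda^2}{2}\sum_{s=1}^t c_s^2\bigr)$. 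Substituting back and optimizing over $\lambda>0$ with $\lambda^\star = \epsilon/\sum_s c_s^2$ produces the one-sided tail bound $\mathbb{P}(D_t \ge \epsilon) \le \exp\bigl(-\epsilon^2/(2\sum_s c_s^2)\bigr)$.

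To obtain the two-sided statement (with the factor $2$), I would repeat the argument on $-D_t$: the increments $-X_s$ still satisfy $|-X_s|\le c_s$, and the conditional Hoeffding lemma applies verbatim because its proof only used the symmetric boundedness of the variable (the non-positive-mean hypothesis can be verified here as well when $M$ is a true martingale, and for a generic supermartingale one can first reduce to the martingale part $M_s - \mathbb{E}[M_s\mid\mathcal{F}_{s-1}]$ via the Doob decomposition so that both tails obey the symmetric estimate). A union bound over the two tails then yields the claimed inequality. The only genuine subtlety is the conditional Hoeffding step under $\mathbb{E}[\cdot\mid\mathcal{G}] \le 0$ rather than $=0$ together with the symmetric tail direction; everything else is a direct iteration and a one-variable optimization.
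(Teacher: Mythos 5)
The paper never proves this statement: Azuma's inequality is imported as a known result in the ``Material'' appendix, so there is no in-paper argument to compare against. Your proposal is the standard Chernoff--Hoeffding proof, and the upper-tail half is complete and correct: the conditional Hoeffding lemma via the chord bound, $\mathbb{E}[e^{\lambda Y}\mid\mathcal{G}]\le\cosh(\lambda c)+\tfrac{1}{c}\mathbb{E}[Y\mid\mathcal{G}]\sinh(\lambda c)\le e^{\lambda^2c^2/2}$ for $\lambda\ge 0$, the tower-property iteration, and the optimization at $\lambda^\star=\epsilon/\sum_s c_s^2$ all go through and give $\mathbb{P}(M_t-M_0\ge\epsilon)\le\exp\big(-\epsilon^2/(2\sum_{s=1}^t c_s^2)\big)$.

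The gap is in the lower tail, exactly where you flag the ``subtlety.'' For $-D_t$ the increments satisfy $\mathbb{E}[-X_s\mid\mathcal{F}_{s-1}]\ge 0$, the wrong sign for your lemma, and the Doob-decomposition repair does not deliver the stated constant: writing $M_t-M_0=N_t+A_t$ with $N$ a martingale and $A_t\le 0$ predictable, the increments of $N$ are $X_s-\mathbb{E}[X_s\mid\mathcal{F}_{s-1}]$ and are only bounded by $2c_s$, and moreover $\{M_t-M_0\le-\epsilon\}$ is not contained in $\{N_t\le-\epsilon\}$ because the drift $A_t$ can account for all of the decrease. Indeed the two-sided bound is simply false for general supermartingales: take $M_s=M_{s-1}-1$ deterministically and $t=10$, so that $|M_t-M_0|=10$ with probability one while the claimed bound at $\epsilon=10$ equals $2e^{-5}<1$. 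So no argument can close this gap; the statement as written is the usual overstatement of Azuma's inequality, which is two-sided only for martingales --- the only case in which the paper actually invokes it (e.g.\ for $R^\rls_1$ and for $\sum_{k}\alpha_k$, both explicitly identified as martingales). Your proof is correct precisely on the domain where the theorem is true, and you should either restrict the two-sided conclusion to martingales or keep only the upper tail for supermartingales.
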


\begin{lemma}[Lemma.~8 from~\citet{abbasi2011regret}]\label{le:det.update.number.bound}
Let $K^{det}$ be the number of changes in the policy of Algorithm~\ref{alg:ts} due to the determinant trigger $\det(V_t) \geq 2 \det(V_0)$. Then, on $E$, $K^{det}$ is at most
\begin{equation*}
K^{den} \leq  (n+d) \log_2 ( 1 + T X^2 (1+C^2)/ \lambda).
\end{equation*}
\end{lemma}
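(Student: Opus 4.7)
\textbf{Proof plan for Lem.~\ref{le:det.update.number.bound}.} The strategy is the standard potential-function argument based on the multiplicative growth of $\det(V_t)$: every determinant-triggered update contributes a factor of at least $2$, while the final determinant can only grow polynomially in $T$ because the states (and hence the controls) are bounded on the event $E$.

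\medskip

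\noindent\textbf{Step 1: lower bound on $\det(V_T)$ from the trigger.} By definition of the determinant trigger in Alg.~\ref{alg:ts}, whenever a policy update of this type occurs at time $t$, the design matrix satisfies $\det(V_t) \geq 2\det(V_{t_0})$, where $t_0$ is the time of the previous update (or the start) and $V_{t_0}$ was the design matrix stored at the previous update. Since between two updates $V_t$ is monotone non-decreasing in the positive semi-definite order, the determinant is non-decreasing as well, and each determinant-triggered update therefore multiplies the current value of $\det(V_t)$ by at least $2$ compared to its value at the start of the previous episode. Iterating over the $K^{det}$ occurrences of this trigger (and using $\det(V_T)\ge \det(V_t)$ at the last such occurrence, since $V$ only grows), one obtains
\begin{equation*}
\det(V_T) \;\geq\; 2^{K^{det}} \det(\lambda I) \;=\; 2^{K^{det}} \lambda^{n+d}.
\end{equation*}

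\medskip

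\noindent\textbf{Step 2: upper bound on $\det(V_T)$ on event $E$.} On the event $\bar{E}_T$ the states satisfy $\|x_t\|\leq X$ (up to the logarithmic factor we absorb into the constant $X$ for readability), and by Prop.~\ref{prop:admissib_param_set} any sampled $\wt\theta_t\in\mathcal{S}$ yields $\|K(\wt\theta_t)\|_2\leq C$, so that $\|u_t\|\leq C\|x_t\|\leq CX$ and therefore $\|z_t\|^2=\|x_t\|^2+\|u_t\|^2\leq X^2(1+C^2)$ for all $t\leq T$. Combining the AM-GM (trace-determinant) inequality $\det(V_T)\le (\mathrm{tr}(V_T)/(n+d))^{n+d}$ with
\begin{equation*}
\mathrm{tr}(V_T) \;=\; (n+d)\lambda + \sum_{s=0}^{T-1}\|z_s\|^2 \;\leq\; (n+d)\lambda + T X^2(1+C^2),
\end{equation*}
yields
\begin{equation*}
\frac{\det(V_T)}{\lambda^{n+d}} \;\leq\; \Bigl(1 + \frac{T X^2(1+C^2)}{(n+d)\lambda}\Bigr)^{n+d} \;\leq\; \Bigl(1 + \frac{T X^2(1+C^2)}{\lambda}\Bigr)^{n+d}.
\end{equation*}

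\medskip

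\noindent\textbf{Step 3: combine the two bounds.} Plugging the upper bound of Step 2 into the lower bound of Step 1 gives $2^{K^{det}}\leq (1+TX^2(1+C^2)/\lambda)^{n+d}$, and taking $\log_2$ on both sides delivers exactly the stated inequality
\begin{equation*}
K^{det} \;\leq\; (n+d)\log_2\!\Bigl(1 + \frac{T X^2(1+C^2)}{\lambda}\Bigr).
\end{equation*}

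\medskip

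\noindent\textbf{Main obstacle.} There is essentially no analytical obstacle: the argument is a clean potential function computation and mirrors Lem.~8 in~\citet{abbasi2011regret}. The only subtlety is bookkeeping: one must verify that at every step $t\leq T$ we do have $\|z_t\|^2\leq X^2(1+C^2)$, which only holds on the bounded-state event $\bar{E}$ (and uses $\wt\theta_t\in\mathcal{S}$ obtained by the rejection sampling in Alg.~\ref{alg:ts}, so that $\|K(\wt\theta_t)\|_2\leq C$ via Prop.~\ref{prop:admissib_param_set}). Since the statement is conditioned on $E=\wh E\cap \wt E\cap \bar E$, both conditions hold and the proof goes through.
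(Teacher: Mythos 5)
Your proof is correct and is exactly the standard determinant-doubling potential argument underlying Lemma~8 of \citet{abbasi2011regret}, which the paper imports without reproving: each determinant trigger multiplies $\det(V_t)$ by at least $2$ (using monotonicity of $\det(V_t)$ across intervening length-triggered episodes), while the trace--determinant inequality and the bound $\|z_t\|^2\le X^2(1+C^2)$ on $E$ cap the total growth. The bookkeeping points you flag (the rejection sampling ensuring $\wt\theta_t\in\mathcal{S}$ so $\|K(\wt\theta_t)\|_2\le C$, and the bounded-state event) are handled consistently with the paper's own conventions.
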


\begin{corollary}\label{co:update.number.bound}
Let $K$ be the number of policy changes of Algorithm~\ref{alg:ts}, $K^{det}$ be defined as in Lem.~\ref{le:det.update.number.bound} and $K^{len}  = K - K^{det}$ be the number of policy changes due to the length trigger $t \geq t_0 + \tau$. Then, on $E$, $K$ is at most
\begin{equation*}
K \leq K^{det} + K^{len} \leq (n+d) \log_2 ( 1 + T X^2 (1+C^2)/ \lambda) + T/\tau.
\end{equation*}
Moreover, assuming that $T \geq \frac{\lambda}{X^2 (1+C^2)}$, one gets $K \leq (n+d) \log_2 ( 1 + T X^2 (1+C^2)/ \lambda) T/\tau$.
\end{corollary}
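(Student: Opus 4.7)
The statement to prove is a direct corollary of Lem.~\ref{le:det.update.number.bound}. By the very design of Alg.~\ref{alg:ts}, each policy change is triggered either by the determinant doubling condition $\det(V_t)\geq 2\det(V_0)$ or by the length condition $t\geq t_0+\tau$. Hence the two subcounts $K^{\text{det}}$ and $K^{\text{len}}$ partition the set of updates, which immediately gives the decomposition $K = K^{\text{det}}+K^{\text{len}}$.

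My plan is to bound each piece separately on the good event $E$. For $K^{\text{det}}$, I would invoke Lem.~\ref{le:det.update.number.bound} verbatim, which yields $K^{\text{det}}\leq (n+d)\log_2\bigl(1+TX^2(1+C^2)/\lambda\bigr)$. For $K^{\text{len}}$, I would use the straightforward pigeonhole argument: any episode whose termination is triggered by the length condition has exact duration $\tau$ (shorter episodes would have been terminated by the determinant trigger first), and since all episodes together cover a horizon of $T$ steps, there can be at most $T/\tau$ episodes of length $\tau$. Summing these two bounds produces the additive inequality in the corollary statement.

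For the final, multiplicative form, I would argue as follows. Under the hypothesis $T\geq \lambda/(X^2(1+C^2))$, the argument of the logarithm satisfies $1+TX^2(1+C^2)/\lambda \geq 2$, so $\log_2\bigl(1+TX^2(1+C^2)/\lambda\bigr)\geq 1$; combined with $n+d\geq 1$, this gives $(n+d)\log_2(1+TX^2(1+C^2)/\lambda)\geq 1$. Using the elementary inequality $a+b\leq ab$ whenever $a\geq 1$ and $b\geq 1$ (equivalently $(a-1)(b-1)\geq 1$), applied with $a=(n+d)\log_2(1+TX^2(1+C^2)/\lambda)$ and $b=T/\tau$ (with $T\geq \tau$ implicit in the regime of interest), the additive bound collapses to the claimed multiplicative one.

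There is no real obstacle here: the result is essentially a bookkeeping step that packages the pre-existing Lem.~\ref{le:det.update.number.bound} together with the trivial bound $K^{\text{len}}\leq T/\tau$ into a convenient form that will be reused in Sect.~\ref{subsec:final.bound}. The only small point worth being careful about is making sure the length-triggered episodes have duration exactly $\tau$ rather than at most $\tau$, so that the counting argument $K^{\text{len}}\cdot \tau \leq T$ is tight.
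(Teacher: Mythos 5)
Your overall route is the right one and is essentially the (implicit) argument of the paper, which states this corollary without a written proof: the additive bound is Lem.~\ref{le:det.update.number.bound} for $K^{det}$ plus the pigeonhole count $\tau\,K^{len}\leq T$ for the length-triggered episodes, each of which indeed lasts exactly $\tau$ steps.

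There is, however, one incorrect assertion in your justification of the multiplicative form. The inequality $a+b\leq ab$ does \emph{not} hold whenever $a\geq 1$ and $b\geq 1$ (take $a=b=3/2$: then $a+b=3>9/4=ab$); the correct necessary and sufficient condition is $(a-1)(b-1)\geq 1$, which is strictly stronger than $a,b\geq 1$, so your parenthetical ``equivalently'' is wrong, and the auxiliary hypothesis $T\geq\tau$ you invoke only gives $b\geq 1$, which is not enough. The conclusion is nonetheless salvageable with a one-line fix: since $n\geq 1$ and $d\geq 1$ we have $n+d\geq 2$, and the hypothesis $T\geq \lambda/(X^2(1+C^2))$ gives $\log_2\bigl(1+TX^2(1+C^2)/\lambda\bigr)\geq 1$, hence $a\geq 2$; in the regime where the corollary is used one also has $T/\tau\geq 2$ (the paper takes $\tau=O(T^{1/3})$), hence $b\geq 2$ and $(a-1)(b-1)\geq 1$, so $a+b\leq ab$ as claimed. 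You should state this corrected condition explicitly rather than rely on $a,b\geq 1$.
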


\begin{lemma}[Chernoff bound for Gaussian r.v.]\label{le:chernoff.bound}

Let $X\sim\mathcal{N}(0,1)$. For any $0 < \delta <1$, for any $t\geq 0$, then,
\begin{equation*}
\mathbb{P}(|X| \geq t ) \leq 2 \exp \big(-\frac{t^2}{2} \big).
\end{equation*}
\end{lemma}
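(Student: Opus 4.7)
The plan is to prove this classical Chernoff-type tail bound using the standard moment generating function (Cram\'er--Chernoff) method, combined with the symmetry of the standard normal law. I note in passing that the parameter $\delta$ appearing in the statement does not enter the inequality itself; it is a vestigial quantifier from how the lemma is invoked elsewhere, so I can safely ignore it and prove the bound for arbitrary $t \geq 0$.

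First I would reduce to a one-sided tail via symmetry: since $X \sim \mathcal{N}(0,1)$ has a density symmetric about the origin, $\mathbb{P}(|X| \geq t) = 2\,\mathbb{P}(X \geq t)$, so it suffices to show $\mathbb{P}(X \geq t) \leq \exp(-t^2/2)$. Next, for any $\lambda > 0$, Markov's inequality applied to the nonnegative random variable $e^{\lambda X}$ yields
\begin{equation*}
\mathbb{P}(X \geq t) \;=\; \mathbb{P}\big(e^{\lambda X} \geq e^{\lambda t}\big) \;\leq\; e^{-\lambda t}\,\mathbb{E}[e^{\lambda X}].
\end{equation*}
The moment generating function of a standard Gaussian is the textbook calculation $\mathbb{E}[e^{\lambda X}] = \exp(\lambda^2/2)$, obtained by completing the square inside the Gaussian integral. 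Substituting, I get $\mathbb{P}(X \geq t) \leq \exp(\lambda^2/2 - \lambda t)$ for every $\lambda > 0$.

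Finally I would optimize the free parameter: the exponent $\lambda^2/2 - \lambda t$ is minimized at $\lambda = t$ (which is admissible since $t \geq 0$; the degenerate case $t=0$ makes the statement trivial as the right-hand side equals $2$), giving the bound $\mathbb{P}(X \geq t) \leq \exp(-t^2/2)$. Doubling by the symmetry reduction produces the claimed inequality $\mathbb{P}(|X| \geq t) \leq 2\exp(-t^2/2)$.

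There is no real obstacle here: the proof is a textbook three-line Chernoff argument, with the only ingredients being Markov's inequality, the Gaussian MGF, and optimization of a quadratic in $\lambda$. The case $t=0$ should be dispatched separately (or by simply noting the bound holds trivially since probabilities are at most $1 \leq 2$) to avoid any fuss about strict positivity of $\lambda$ in Markov's inequality.
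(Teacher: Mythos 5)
Your proof is correct: the symmetry reduction, Markov's inequality applied to $e^{\lambda X}$, the Gaussian MGF $\mathbb{E}[e^{\lambda X}] = e^{\lambda^2/2}$, and the optimization at $\lambda = t$ together give exactly the claimed bound, and your handling of the degenerate case $t=0$ is fine. The paper states this lemma as a classical fact and provides no proof of its own, so there is nothing to compare against; your argument is the standard one and would serve as the omitted proof.
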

\textbf{Proof of Lem.\ref{le:high_proba_concentration}.}
Let $\delta^\prime = \delta/8T$.
\begin{enumerate}
\item From Prop.~\ref{p:concentration}, $\mathbb{P} \big( \| \wh\theta_t - \theta_*\|_{V_t} \leq \beta_t(\delta^\prime) \big) \geq 1 - \delta^\prime$. Hence, 
\begin{equation*}
\begin{aligned}
\mathbb{P}\big( \wh{E} \big) &= \mathbb{P}\Big( \bigcap_{t =0}^T \big( \| \wh\theta_t - \theta_*\|_{V_t} \leq \beta_t(\delta^\prime) \big) \Big)\\
 &= 1  - \mathbb{P}\Big( \bigcup_{t =0}^T \big( \| \wh\theta_t - \theta_*\|_{V_t} \geq \beta_t(\delta^\prime) \big) \Big) \\
&\geq 1 - \sum_{t=0}^T  \mathbb{P} \big( \| \wh\theta_t - \theta_*\|_{V_t} \geq \beta_t(\delta^\prime) \big)  \\
&\geq 1 - T\delta^\prime \geq 1 - \delta/8
\end{aligned}
\end{equation*}
\item From Lem.~\ref{le:chernoff.bound}, let $\eta \sim \distro$ then, for any $\epsilon > 0$, making use of the fact that $\|\eta\| \leq n \sqrt{n+d} \max_{i\leq n+d,j\leq n } |\eta_{i,j}|$, 
\begin{equation*}
\mathbb{P} \big( \|\eta \| \leq \epsilon \big) \geq \mathbb{P} \big( n \sqrt{n+d} \max_{i,j} |\eta_{i,j}| \leq \epsilon \big) 
\geq 1 - \prod_{i,j} \mathbb{P} \big( |\eta_{i,j} | \geq \frac{\epsilon}{n \sqrt{n+d}} \big) \geq 1 - n(n+d) \mathbb{P}_{X \sim \mathcal{N}(0,1)} \big( |X| \geq \frac{\epsilon}{n \sqrt{n+d}} \big).
\end{equation*}
Hence,
\begin{equation*}
\begin{aligned}
\mathbb{P}\big( \wt{E} \big) &= \mathbb{P}\Big( \bigcap_{t =0}^T \big( \| \wt\theta_t - \wh\theta_t\|_{V_t} \leq \gamma_t(\delta^\prime) \big) \Big) 
= 1  - \mathbb{P}\Big( \bigcup_{t =0}^T \big( \| \wt\theta_t - \wh\theta_t\|_{V_t} \geq \gamma_t(\delta^\prime) \big) \Big) \\
  &\geq 1 - \sum_{t=0}^T  \mathbb{P} \big( \| \wt\theta_t - \wh\theta_t\|_{V_t} \geq \gamma_t(\delta^\prime) \big)  
  \geq 1 - \sum_{t=0}^T  \mathbb{P} \big( \| \eta \| \geq \gamma_t(\delta^\prime)/\beta_t(\delta^\prime) \big)  \\
    &\geq 1 - \sum_{t=0}^T  \mathbb{P} \Big( \| \eta \| \geq n \sqrt{2 (n+d) \log\big( 2 n (n+d) / \delta^\prime}\big) \Big)  \\
  &\geq 1 - T\delta^\prime \geq 1 - \delta/8.
\end{aligned}
\end{equation*}
\item Finally, a union bound argument ensures that $\mathbb{P}(\wh{E} \cap \wt{E}) \geq 1 - \delta/4$.
\end{enumerate}

\textbf{Proof of Cor.~\ref{co:high_proba_boundedness}.} This result comes directly from Sec.~4.1. and App.~D of~\citet{abbasi2011regret}. The proof relies on the fact that, on $\wh{E}$, because $\wt\theta_t$ is chosen within the confidence ellipsoid $\calE^\rls_t$, the number of time steps the true closed loop matrix $A_* + B_* K(\wt\theta_t)$ is unstable is small. Intuitively, the reason is that as soon as the true closed loop matrix is unstable, the state process explodes and the confidence ellipsoid is drastically changed. As the ellipsoid can only shrink over time, the state is well controlled expect for a small number of time steps.\\
Since the only difference is that, on $\wh{E} \cap \wt{E}$,  $\wt\theta_t \in \calE^\ts_t$, the same argument applies and the same bound holds replacing $\beta_t$ with $\gamma_t$. Therefore, there exists appropriate problem dependent constants $X,X^\prime$ such that $ \mathbb{P}( \bar{E} | \wh{E} \cap \wt{E} ) \geq 1 - \delta/4$. Finally, a union bound argument ensures that
$\mathbb{P}(\wh{E} \cap \wt{E} \cap \bar{E}) \geq 1 - \delta/2$.


\section{Proof of Lem.~\ref{le:probability_optimistic}}\label{sec:app_optimism}
We prove here that, on $E$, the sampling $\wt\theta \sim \mathcal{R}_\mathcal{S} ( \wh \theta_t + \beta_t(\delta^\prime) V^{1/2}_{t} )$ guarantees a fixed probability of sampling an optimistic parameter, i.e. which belongs to $\Theta_t^{\opt} := \{ \theta \in\Re^d \hspace{1mm} | \hspace{1mm} J(\theta) \leq J(\theta^\star) \}$. However, our result only holds for the $1-$dimensional case as we deeply leverage on the geometry of the problem. Figure~\ref{fig:optimistic_set} synthesizes the properties of the optimal value function and the geometry of the problem w.r.t the probability of being optimistic.
\begin{figure}[h]
\begin{center}
   \includegraphics[scale=0.5]{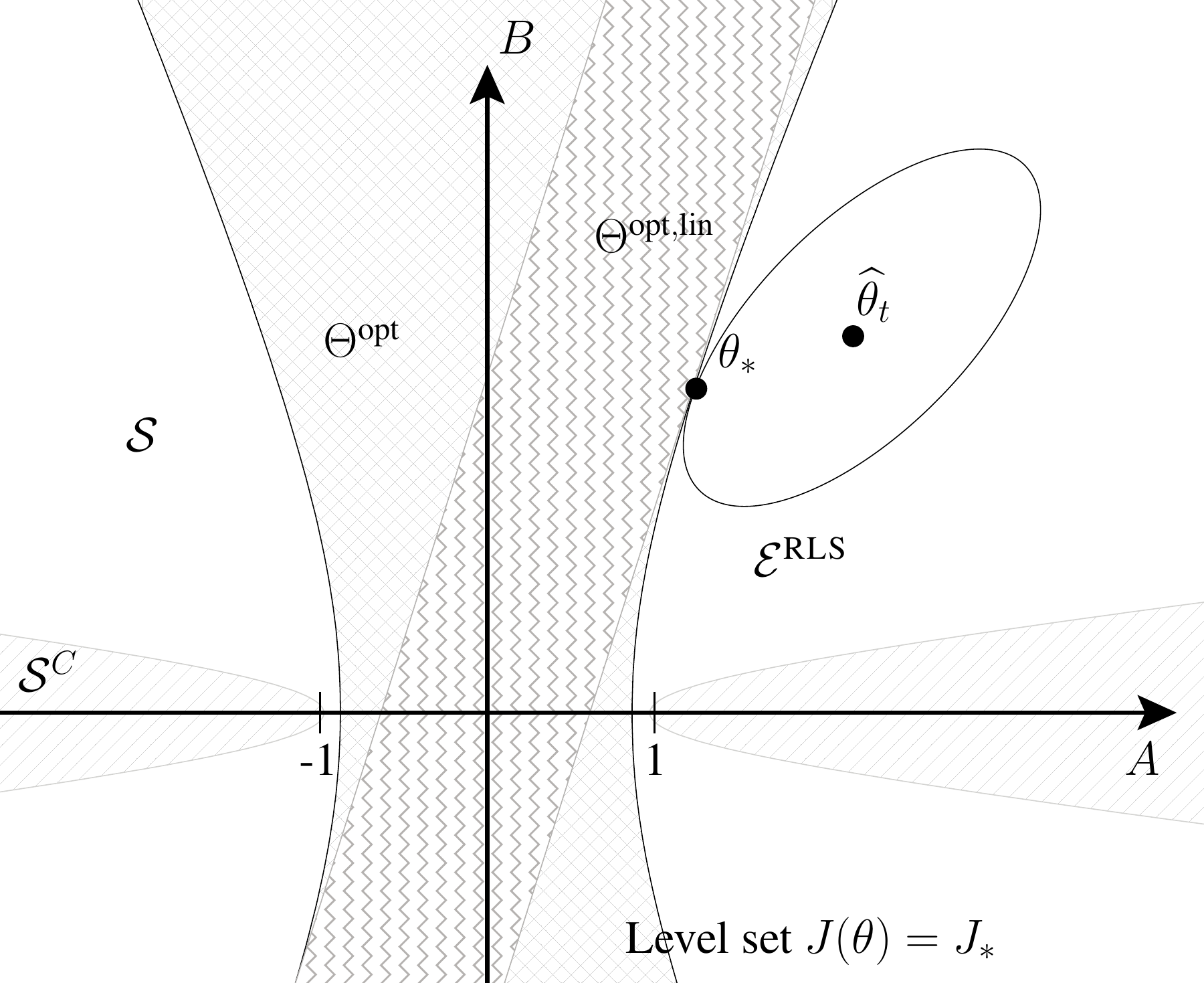}
   \caption{\label{fig:optimistic_set} \textbf{Optimism and worst case configuration.} \textbf{1)} In 1-D, the Riccati solution is well-defined expect for $\{ (A,B) \in ]-\infty,-1]\cup[1,\infty[ \times \{0\} \}$. The rejection sampling procedure into $\mathcal{S}$ ensures $P(\wt\theta_t)$ to be well-defined. Moreover, $\mathcal{S}^c$ does not overlap with $\Theta^\opt$. \textbf{2)} The introduction of the subset $\Theta^{lin,\opt }$ prevents using the actual - yet complicated - optimistic set $\Theta^\opt$ to lower bound the probability of being optimistic. \textbf{3)} Even if the event $\calE^\rls$ holds, there exists an ellipsoid configuration which does not contain any optimistic point. This justifies the over-sampling to guarantee a fixed probability of being optimistic.}
   \end{center}
\end{figure}

\begin{enumerate}
\item First, we introduce a simpler subset of optimistic parameters which involves hyperplanes rather than complicated $J$ level sets. Without loss of generality we assume that 
$A_* + B_* K_* = \rho_* \geq 0$ and introduce $H_* = \begin{pmatrix} 1 \\ K_* \end{pmatrix} \in \mathbb{R}^2$ so that $A_* + B_* K_* = \theta^\transp H_*$. Let $\Theta^{lin,\opt} = \{ \theta \in\Re^d \hspace{1mm} | \hspace{1mm}  | \theta^\transp H_* | \leq \rho_* \}$. Intuitively, $\Theta^{lin,\opt} $ consists in the set of systems $\theta$ which are more stable under control $K_*$. The following proposition ensures those systems to be optimistic.

\begin{proposition}\label{p:stability.optimism}
$\Theta^{lin,\opt} \subset \Theta_t^{\opt}$.
\end{proposition}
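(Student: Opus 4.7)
The plan is to exploit the fact that in the one-dimensional setting the optimal cost is simply $J(\theta) = (Q + R K(\theta)^2)/(1 - (\theta^\transp H(\theta))^2)$ when $|\theta^\transp H(\theta)|<1$, and that $K(\theta)$ is \emph{optimal}, so $J(\theta)$ is upper-bounded by the cost incurred on $\theta$ by any alternative stabilizing gain, in particular by $K_*$. Concretely, I would proceed in three short steps.

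First, I would show that any $\theta \in \Theta^{lin,\opt}$ is stabilized by $K_*$. Indeed, $|\theta^\transp H_*| = |A + B K_*| \leq \rho_* < 1$, so the closed loop $A + B K_*$ is stable and $(A,B)$ is stabilizable; in particular $J(\theta) < \infty$ and Prop.~\ref{th:lqr} applies.

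Second, I would compute $J_{K_*}(\theta)$, the expected average cost of playing the (suboptimal) gain $K_*$ on system $\theta$. In the scalar case with noise variance $1$ (Prop.~\ref{p:concentration}), the stationary state variance of $x_{t+1}=(A+BK_*)x_t+\epsilon_{t+1}$ equals $1/(1-\rho_\theta^2)$ with $\rho_\theta := \theta^\transp H_*$, hence
\begin{equation*}
J_{K_*}(\theta) = \frac{Q + R K_*^2}{1 - \rho_\theta^2}.
\end{equation*}
Applied to $\theta = \theta_*$ (where $K_*$ is the optimal gain), this also gives the explicit expression $J(\theta_*) = (Q+RK_*^2)/(1-\rho_*^2)$.

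Third, I would compare. Since $\theta \in \Theta^{lin,\opt}$ means $\rho_\theta^2 \leq \rho_*^2$, we have $1-\rho_\theta^2 \geq 1-\rho_*^2>0$, so $J_{K_*}(\theta)\leq J(\theta_*)$. By optimality of $K(\theta)$ on $\theta$ (Prop.~\ref{th:lqr}), $J(\theta) \leq J_{K_*}(\theta)$, and combining the two inequalities yields $J(\theta)\leq J(\theta_*)$, i.e.\ $\theta\in\Theta^{\opt}$.

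I do not expect any step to be a real obstacle: the only subtlety is to justify that $K_*$ is indeed a stabilizing (though suboptimal) feedback for $\theta$, which is ensured precisely by the linear constraint $|\theta^\transp H_*|\leq \rho_*<1$ that defines $\Theta^{lin,\opt}$. The whole argument is specific to the scalar setting because in higher dimensions $|\theta^\transp H_*|\leq \rho_*$ would have to be replaced by a matrix inequality on the spectral radius of $A+BK_*$, and the clean closed-form for $J_{K_*}(\theta)$ via a scalar Lyapunov equation no longer holds.
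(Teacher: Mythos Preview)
Your proposal is correct and follows essentially the same argument as the paper: both use the explicit one-dimensional formula $J_K(\theta) = (Q + R K^2)/(1 - (\theta^\transp H)^2)$, compare the cost of running $K_*$ on $\theta$ versus on $\theta_*$ via the inequality $|\theta^\transp H_*|\le \rho_*$, and then invoke optimality of $K(\theta)$ to conclude $J(\theta)\le J_{K_*}(\theta)\le J(\theta_*)$. Your write-up is slightly more explicit about the stabilizability step, but the logic is identical.
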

\begin{proof}
Leveraging on the expression of $J$, one has when $n=d=1$, 
\begin{equation*}
J(\theta) = \Tr (P(\theta)) = P(\theta) = \lim_{T \rightarrow \infty} \sum_{t=0}^T x_t^2 (Q + K(\theta)^2 R) =  (Q + K(\theta)^2 R) \mathbb{V}(x_t),
\end{equation*}
where $\mathbb{V}(x_t) = (1 - |\theta^\transp H(\theta) |^2)^{-1}$ is the steady-state variance of the stationary first order autoregressive process $x_{t+1} = \theta^\transp H(\theta) x_t + \epsilon_{t+1}$ where $\epsilon_{t}$ is zero mean noise of variance $1$ and $H(\theta) = \begin{pmatrix} 1 \\ K(\theta) \end{pmatrix}$. Thus,
\begin{equation*}
J(\theta) = \big( Q + K(\theta)^2 R \big)  \big(1 - | \theta^\transp H(\theta) |^2 \big)^{-1}.
\end{equation*}
Hence, for any $\theta \in \Theta^{lin,\opt}$, $(1 - |\theta^\transp H_* |^2)^{-1} \leq (1 - | \theta_*^\transp H_* |^2)^{-1}$ which implies that 
\begin{equation*}
(Q + K_*^2 R ) (1 - |\theta^\transp H_* |^2)^{-1} \leq  (Q + K_*^2 R )  (1 - |\theta_*^\transp H_* |^2)^{-1} = J(\theta_*).
\end{equation*}
However, since $K(\theta)$ is the optimal control associated with $\theta$, 
\begin{equation*}
\begin{aligned}
J(\theta) &= (Q + K(\theta) ^2 R ) (1 - |\theta^\transp H(\theta) |^2)^{-1}\\
 &= \min_{K}  (Q + K^2 R ) (1 - |\begin{pmatrix} 1 & K \end{pmatrix} \theta |^2)^{-1} \\
 &\leq (Q + K_*^2 R ) (1 - |\theta^\transp H_* |^2)^{-1} \\
 &\leq J(\theta_*)
 \end{aligned}
\end{equation*}
\end{proof}
As a result,  $\mathbb{P}\big( \wt\theta_t \in \Theta^\opt \hspace{1mm} | \hspace{1mm} \F^x_{t}, \wh E_t \big) \geq \mathbb{P}\big(\wt\theta_t \in \ \Theta^{lin,\opt} \hspace{1mm} | \hspace{1mm} \F^x_{t}, \wh E_t \big)$ and we can focus on $\Theta^{lin,\opt}$.
\item To ensure the sampling parameter to be admissible, we perform a rejection sampling until $\wt\theta_t \in \mathcal{S}$. Noticing that $\Theta^{lin,\opt} \subset \Theta^{\opt} \subset \mathcal{S}$ by construction, the rejection sampling is always favorable in terms of probability of being optimistic. Since we seek for a lower bound, we can get rid of it and consider  
$\wt\theta_t = \wh\theta_t + \beta_t(\delta^\prime) V^{-1/2}_t \eta$ where $\eta \sim \mathcal{N}(0,I_2)$.\footnote{In the 1-dimensional case, $\eta$ is just a 2d standard gaussian r.v.}
\item On $\wh{E}_t$, $\theta_\star \in \calE^\rls_t$, where $\calE^\rls_t$ is the confidence RLS ellipsoid centered in $\wh\theta_t$. Since $\theta_*$ is fixed (by definition), we lower bound the probability by considering the worst possible $\wh\theta_t$ such that $\wh{E}_t$ holds. Intuitively, we consider the worst possible center for the RLS ellipsoid such that $\theta_\star$ still belong in $\calE^\rls_t$ and that the probability of being optimistic is minimal. Formally,
\begin{equation*}
\begin{aligned}
 \mathbb{P}\big(\wt\theta_t \in \Theta^{lin,\opt} \hspace{1mm} | \hspace{1mm} \F^x_{t}, \wh E_t \big) &= 
  \mathbb{P}_{\wt\theta_t  \sim \mathcal{N}(\wh\theta_t, \beta_t^2(\delta^\prime) V_t^{-1})} \big(\wt\theta_t \in \Theta^{lin,\opt} \hspace{1mm} | \hspace{1mm} \F^x_{t}, \wh E_t \big) \\
  &\geq \min_{\wh\theta : \| \wh\theta - \theta_*\|_{V_t} \leq \beta_t(\delta^\prime)}   \mathbb{P}_{\wt\theta_t  \sim \mathcal{N}(\wh\theta, \beta_t^2(\delta^\prime) V_t^{-1})} \big(\wt\theta_t  \in \Theta^{lin,\opt} \hspace{1mm} | \hspace{1mm} \F^x_{t}) 
\end{aligned}
\end{equation*}
Moreover, by Cauchy-Schwarz inequality, for any $\wh\theta$, 
\begin{equation*}
 \big| (\wh \theta - \theta_* )^\transp H_* \big| \leq \|\wh \theta - \theta_* \|_{V_t}  \|H_* \|_{V_{t}^{-1}} \leq \beta_t(\delta^\prime) \|H_*\|_{V_{t}^{-1}},
 \end{equation*}
thus,
\begin{equation}
\label{eq:optimism.ineq}
\begin{aligned}
 \mathbb{P}\big(\wt\theta_t \in \Theta^{lin,\opt} \hspace{1mm} | \hspace{1mm} \F^x_{t}, \wh E_t \big) 
  &\geq \min_{\wh\theta : \| \wh\theta - \theta_*\|_{V_t} \leq \beta_t(\delta^\prime)}   \mathbb{P}_{\wt\theta_t  \sim \mathcal{N}(\wh\theta, \beta_t^2(\delta^\prime) V_t^{-1})} \big(\wt\theta_t \in \Theta^{lin,\opt} \hspace{1mm} | \hspace{1mm} \F^x_{t}) \\
    &\geq \min_{\wh\theta : | (\wh\theta - \theta_*)^\transp H_* |\leq \beta_t(\delta^\prime)\|H_*\|_{V_{t}^{-1}} }   \mathbb{P}_{\wt\theta_t  \sim \mathcal{N}(\wh\theta, \beta_t^2(\delta^\prime) V_t^{-1})} \big(\wt\theta_t \in \Theta^{lin,\opt} \hspace{1mm} | \hspace{1mm} \F^x_{t}) \\
    &= \min_{\wh\theta : | \wh\theta^\transp H_* - \rho_*| \leq \beta_t(\delta^\prime)\|H_*\|_{V_{t}^{-1}} } \mathbb{P}_{\wt\theta_t  \sim \mathcal{N}(\wh\theta, \beta_t^2(\delta^\prime) V_t^{-1})} \big(| \wt\theta_t^\transp H_* | \leq \rho_* \hspace{1mm} | \hspace{1mm} \F^x_{t})
\end{aligned}
\end{equation}
Cor.~\ref{co:worst.case.ellipsoid} provides us with an explicit expression of the worst case ellipsoid. Introducing $x=\wt\theta_t^\transp H_*$, one has
$x \sim \mathcal{N}(\bar{x},\sigma^2_x)$ with $\bar{x} = \wh\theta H_*$ and $\sigma_x = \beta_t(\delta^\prime) \|H_* \|_{V_{t}^{-1}}$. Applying Cor.~\ref{co:worst.case.ellipsoid} with $\alpha = \rho_*$, $\rho = \rho_*$ and $\beta = \beta_t(\delta^\prime) \|H_*\|_{V_{t}^{-1}}$,  inequality~\eqref{eq:optimism.ineq} becomes
\begin{equation*}
\begin{aligned}
 \mathbb{P}\big(\wt\theta_t \in \Theta^{lin,\opt} \hspace{1mm} | \hspace{1mm} \F^x_{t}, \wh E_t \big) 
  &\geq \min_{\wh\theta : | \wh\theta^\transp H_* - \rho_*| \leq \beta_t(\delta^\prime) \|H_*\|_{V_t^{-1}}} \mathbb{P}_{\eta  \sim \mathcal{N}(0,I_2)} \big(| \wh\theta^\transp H_* + \beta_t(\delta^\prime) \eta^\transp V_{t}^{-1/2} H_* | \leq \rho_* \hspace{1mm} | \hspace{1mm} \F^x_{t})\\
 & \geq \mathbb{P}_{\eta  \sim \mathcal{N}(0,I_2)} \big(| \rho_* +  \beta_t(\delta^\prime) \|H_*\|_{V_{t}^{-1}} + \beta_t(\delta^\prime) \eta^\transp V_{t}^{-1/2} H_* | \leq \rho_* \hspace{1mm} | \hspace{1mm} \F^x_{t})
\end{aligned}
\end{equation*}
Introducing the vector $u_t =  \beta_t(\delta^\prime) V_{t}^{-1/2} H_*$, one can simplify
\begin{equation*}
\begin{aligned}
& | \rho_* +  \beta_t(\delta^\prime) \|H_*\|_{V_{t}^{-1}} + \beta_t(\delta^\prime) \eta^\transp V_{t}^{-1/2} H_* | \leq \rho_*, \\
\Leftrightarrow & -\rho_* \leq  \rho_* +  \|u_t\| + \eta^\transp u_t \leq \rho_*, \\
\Leftrightarrow & -\frac{\rho_*}{\|u_t\|} -1  \leq\eta^\transp \frac{u_t}{\|u_t\|}  \leq -1.
\end{aligned}
\end{equation*}
Since $\eta \sim \mathcal{N}(0,I_2)$ is rotationally invariant , $ \mathbb{P}\big(\wt\theta_t \in \Theta^{lin,\opt} \hspace{1mm} | \hspace{1mm} \F^x_{t}, \wh E_t \big) 
\geq  \mathbb{P}_{\epsilon \sim \mathcal{N}(0,1) } \big( \epsilon \in \big[1, 1+\frac{2 \rho_*}{\| u_t \|}\big] \hspace{1mm} | \hspace{1mm} \F^x_{t}, \wh E_t \big)$. Finally, for all $t\leq T$, $u_t$ is almost surely bounded: $\|u_t\| \leq \beta_T(\delta^\prime) \sqrt{(1+C^2)/\lambda}$. Therefore,
\begin{equation*}
\mathbb{P}\big(\wt\theta_t \in \Theta^{lin,\opt} \hspace{1mm} | \hspace{1mm} \F^x_{t}, \wh E_t \big)  \geq \mathbb{P}_{\epsilon \sim \mathcal{N}(0,1) } \big( \epsilon \in \big[1, 1+ 2 \rho_*/\beta_T(\delta^\prime) \sqrt{(1+C^2)/\lambda} \big] \big) := p
\end{equation*}

\begin{corollary}\label{co:worst.case.ellipsoid}
For any $\rho,\sigma_x> 0$, for any $\alpha, \beta \geq 0$, $\arg\min_{\bar{x} : | \bar{x} - \alpha | \leq \beta }  \mathbb{P}_{x \sim \mathcal{N}(\bar{x},\sigma_x^2)} \big( |x| \leq \rho \big) = \alpha  + \beta$.
\end{corollary}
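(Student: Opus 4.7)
The plan is to reduce the claim to a one-dimensional monotonicity argument for a Gaussian probability. Define
\begin{equation*}
f(\bar x) \;:=\; \mathbb{P}_{x \sim \mathcal{N}(\bar x, \sigma_x^2)}\!\big(|x| \leq \rho\big) \;=\; \Phi\!\Big(\tfrac{\rho - \bar x}{\sigma_x}\Big) - \Phi\!\Big(\tfrac{-\rho - \bar x}{\sigma_x}\Big),
\end{equation*}
where $\Phi$ is the standard Gaussian cdf. I would first establish that $f$ is even in $\bar x$, which follows immediately from $\Phi(-y) = 1 - \Phi(y)$, and then show that $f$ is strictly decreasing on $[0,\infty)$.

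For the monotonicity step, I would differentiate to obtain
\begin{equation*}
f'(\bar x) \;=\; -\tfrac{1}{\sigma_x}\Big[\phi\!\big(\tfrac{\rho - \bar x}{\sigma_x}\big) - \phi\!\big(\tfrac{\rho + \bar x}{\sigma_x}\big)\Big],
\end{equation*}
using that $\phi$ is even. For any $\bar x > 0$ one has $|\rho - \bar x| < \rho + \bar x$, so the strict log-concavity (i.e., unimodality) of $\phi$ yields $\phi((\rho - \bar x)/\sigma_x) > \phi((\rho + \bar x)/\sigma_x)$, and therefore $f'(\bar x) < 0$. Combined with $f(\bar x) = f(-\bar x)$, this shows that $f$ depends only on $|\bar x|$ and is strictly decreasing in $|\bar x|$.

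It then remains to identify the point of $[\alpha - \beta,\, \alpha + \beta]$ that maximizes $|\bar x|$. Since $\alpha,\beta \geq 0$, the two endpoints have absolute values $\alpha + \beta$ and $|\alpha - \beta|$, and clearly $\alpha + \beta \geq |\alpha - \beta|$; on the interior the absolute value is strictly smaller than $\alpha + \beta$. By the strict monotonicity of $f$ in $|\bar x|$, the unique minimizer of $f$ over the admissible interval is therefore $\bar x = \alpha + \beta$, which is the claim.

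There is essentially no substantive obstacle: the only mildly subtle point is verifying the sign of $f'$ on $(0,\infty)$, which just uses that the standard Gaussian density is strictly decreasing in $|\cdot|$. Everything else is a clean consequence of symmetry and the geometry of the interval $[\alpha-\beta,\alpha+\beta]$ under the assumption $\alpha,\beta\geq 0$.
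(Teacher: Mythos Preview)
Your proof is correct and follows essentially the same route as the paper: both arguments compute the derivative of $f(\bar x)=\mathbb{P}_{x\sim\mathcal{N}(\bar x,\sigma_x^2)}(|x|\leq\rho)$, use evenness/unimodality of the Gaussian density to show $f$ is increasing on $\mathbb{R}_-$ and decreasing on $\mathbb{R}_+$, and then conclude that the minimum over $[\alpha-\beta,\alpha+\beta]$ is attained at the endpoint of largest absolute value, namely $\alpha+\beta$. The only cosmetic difference is that the paper works with the error function after normalizing $\sigma_x$, while you work directly with $\Phi$ and $\phi$; your claim of \emph{unique} minimizer fails in the edge case $\alpha=0$ (both $\pm\beta$ minimize), but $\alpha+\beta$ is still a minimizer, which is all the corollary asserts.
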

This corollary is a direct consequence of the properties of standard gaussian r.v.

\begin{lemma}\label{le:gaussian.cdf.variation}
Let $x$ be a real random variable. For any $\rho, \sigma_x > 0$ Let $f : \mathbb{R} \rightarrow [0,1]$ be the continuous mapping defined by $f(\bar{x}) = \mathbb{P}_{x \sim \mathcal{N}(\bar{x},\sigma_x^2)} \big( |x| \leq \rho \big)$. Then, $f$ is increasing on $\mathbb{R}_{-}$ and decreasing on $\mathbb{R}_{+}$.
\end{lemma}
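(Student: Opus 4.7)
The plan is to write $f$ explicitly in terms of the standard normal cumulative distribution function $\Phi$ and density $\varphi$, compute $f'(\bar{x})$, and determine its sign directly. By the change of variable $y = (x-\bar{x})/\sigma_x$, one obtains
\begin{equation*}
f(\bar{x}) \;=\; \Phi\!\left(\frac{\rho - \bar{x}}{\sigma_x}\right) - \Phi\!\left(\frac{-\rho - \bar{x}}{\sigma_x}\right).
\end{equation*}
Differentiating under the integral and exploiting that $\varphi$ is an even function, namely $\varphi(u) = \varphi(-u)$, yields
\begin{equation*}
f'(\bar{x}) \;=\; \frac{1}{\sigma_x}\Big[\varphi\!\big((\rho + \bar{x})/\sigma_x\big) - \varphi\!\big((\rho - \bar{x})/\sigma_x\big)\Big].
\end{equation*}

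Next I would determine the sign of $f'(\bar{x})$. Since $\varphi(u)$ is a strictly decreasing function of $|u|$, the comparison $\varphi((\rho+\bar{x})/\sigma_x)$ vs.\ $\varphi((\rho-\bar{x})/\sigma_x)$ reduces to comparing $(\rho + \bar{x})^2$ with $(\rho - \bar{x})^2$, whose difference equals $4\rho\bar{x}$. Therefore $f'(\bar{x})$ has the sign of $-\bar{x}$: $f'(\bar{x}) > 0$ for $\bar{x} < 0$ and $f'(\bar{x}) < 0$ for $\bar{x} > 0$, which proves that $f$ is increasing on $\mathbb{R}_-$ and decreasing on $\mathbb{R}_+$.

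An alternative (and essentially equivalent) route would be to first observe that $f$ is an even function, since if $X \sim \mathcal{N}(\bar{x},\sigma_x^2)$ then $-X \sim \mathcal{N}(-\bar{x},\sigma_x^2)$ and $|X| = |-X|$, so $f(\bar{x}) = f(-\bar{x})$. It then suffices to prove monotonicity on $\mathbb{R}_+$, which follows from the same computation above. There is no serious obstacle here; this is an elementary calibration of a centered Gaussian probability of falling in a symmetric interval, so the proof amounts to one derivative and a sign comparison.
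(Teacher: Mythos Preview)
Your proof is correct and follows essentially the same route as the paper: both express $f$ via the Gaussian CDF, differentiate, and reduce the sign of $f'(\bar{x})$ to comparing $(\rho+\bar{x})^2$ with $(\rho-\bar{x})^2$. The only cosmetic difference is that the paper first rescales to $\sigma_x=1/\sqrt{2}$ and works with $\erf$ instead of $\varphi$, whereas you keep $\sigma_x$ general and use the standard normal density directly; your version is arguably cleaner.
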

\begin{proof}
Without loss of generality, one can assume that $\sigma_x = 1/\sqrt{2}$ (otherwise, modify $\rho$), and that $\bar{x} \geq 0$ (by symmetry). Denoting as $\Phi$ and $\erf$ the standard gaussian cdf and the error function, one has:
\begin{equation*}
\begin{aligned}
f(\bar{x}) &=  \mathbb{P}_{x \sim \mathcal{N}(\bar{x},\sigma_x^2)} \big(   - \rho \leq x \leq \rho \big),
 = \mathbb{P}_{x \sim \mathcal{N}(\bar{x},\sigma_x^2)} \big( x \leq \rho \big)  - \mathbb{P}_{x \sim \mathcal{N}(\bar{x},\sigma_x^2)} \big( x \leq -\rho \big), \\
 &=  \mathbb{P}_{x \sim \mathcal{N}(\bar{x},\sigma_x^2)} \big( (x - \bar{x})/\sigma_x \leq (\rho - \bar{x})/\sigma_x \big)  - \mathbb{P}_{x \sim \mathcal{N}(\bar{x},\sigma_x^2)} \big( (x - \bar{x})/\sigma_x  \leq (-\rho - \bar{x})/\sigma_x  \big), \\
 &= \Phi (  (\rho - \bar{x})/\sigma_x)  - \Phi (  - (\rho + \bar{x})/\sigma_x), \\
 &= \frac{1}{2} + \frac{1}{2} \erf ( (\rho - \bar{x})/\sqrt{2} \sigma_x) -  \frac{1}{2} - \frac{1}{2} \erf ( -(\rho + \bar{x})/\sqrt{2} \sigma_x), \\
 &= \frac{1}{2} \big( \erf(\rho - \bar{x}) - \erf(-(\rho + \bar{x})) \big).
\end{aligned}
\end{equation*}
Since $\erf$ is odd, one obtains $f(\bar{x}) =  \frac{1}{2} \big( \erf(\rho - \bar{x}) + \erf(\rho + \bar{x}) \big)$. The error function is differentiable with $\erf^\prime(z) = \frac{2}{\pi} e^{-z^2}$, thus
\begin{equation*}
\begin{aligned}
f^\prime(\bar{x}) & = \frac{1}{\pi} \Big( \exp \big( - (\rho + \bar{x})^2 \big) - \exp \big( -(\rho - \bar{x})^2 \big) \Big) \\
&= - \frac{2}{\pi} \sinh \big( (\rho - \bar{x})^2 \big) \leq 0
\end{aligned}
\end{equation*}
Hence, $f$ is decreasing on $\mathbb{R}_+$ and by symmetry, is increasing on $\mathbb{R}_-$.
\end{proof}

\end{enumerate}


\section{Weighted L1 Poincaré inequality (proof of Lem.~\ref{th:poincare.weigthed})}\label{sec:app_poincare}
This result is build upon the following theorem which links the function to its gradient in $L^1$ norm:
\begin{theorem}[see ~\citet{acosta2004optimal}]\label{th:poincare_ineq_L1}
Let $W^{1,1}(\Omega)$ be the Sobolev space on $\Omega \subset \mathbb{R}^d$. Let $\Omega$ be a convex domain bounded with diameter $D$ and $f \in W^{1,1}(\Omega)$  of zero average on $\Omega$ then 
\begin{equation}\label{eq:poincare_ineq_L1}
\int_{\Omega} |f(x)| dx \leq \frac{D}{2} \int_{\Omega} || \nabla f(x) || dx
\end{equation}
\end{theorem}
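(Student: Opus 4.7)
The strategy is to reduce the multidimensional weighted inequality to a one-dimensional inequality on ``needles'' via the Payne--Weinberger--Kannan--Lov\'asz--Simonovits localization method, and then establish the 1D case using the log-concavity of the weight. For the localization step, given $f \in W^{1,1}(\Omega)$ with $\int_\Omega f\, p\, dz = 0$, I iteratively bisect $\Omega$ by hyperplanes, at each step choosing the position so that each of the two resulting convex subdomains $\Omega_1, \Omega_2$ retains the zero weighted mean $\int_{\Omega_i} f\, p\, dz = 0$; such a bisecting hyperplane exists by the intermediate value theorem as one sweeps continuously across the domain. Passing to the KLS limit reduces the inequality to the case where the domain is a 1D segment $[a,b]$ of length $L \le \text{\emph{diam}}(\Omega)$ equipped with a log-concave weight $\tilde\rho$ (the restriction of a log-concave function to a line remains log-concave, the Jacobian factor that appears in the decomposition is a positive power of an affine function, itself log-concave, and the product of log-concave functions is log-concave).

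The 1D case to be proved is: for any log-concave density $\tilde\rho$ on $[a,b]$ of length $L$ and any $g \in W^{1,1}([a,b])$ with $\int_a^b g\, \tilde\rho\, dt = 0$,
\begin{equation*}
\int_a^b |g|\, \tilde\rho\, dt \;\le\; 2L \int_a^b |g'|\, \tilde\rho\, dt .
\end{equation*}
For this I invoke Cheeger's isoperimetric inequality applied to the probability measure $d\mu = \tilde\rho\, dt / Z$, which yields $\int |g - m_\mu g|\, d\mu \le h(\mu)^{-1} \int |g'|\, d\mu$ with $m_\mu g$ a $\mu$-median and $h(\mu)$ the Cheeger constant. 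Unimodality of a log-concave density on the line allows the optimal bisection to be computed directly and gives $h(\mu) \ge 1/L$. Combining this with the standard mean-to-median exchange $\int |g - \bar g|\, d\mu \le 2\int|g - m_\mu g|\, d\mu$ and the zero-mean hypothesis $\bar g = 0$ delivers the 1D bound.

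The assembly step writes the global integrals as a superposition over needles, $\int_\Omega h\, p\, dz = \int_\mathcal{T} \nu(d\ell) \int_\ell h\, \tilde\rho_\ell\, dt$, using that on each needle $f$ has vanishing weighted mean by construction and its derivative along the needle direction is dominated by $\|\nabla f\|$; applying the 1D bound needle-by-needle and integrating against $\nu$ yields the claimed inequality with constant $2\,\text{\emph{diam}}$. The main obstacles are (i) rigorously carrying out the KLS localization limit, in particular verifying that the resulting 1D needles carry a log-concave measure and inherit the zero-mean condition, and (ii) establishing the Cheeger bound $h(\mu) \ge 1/L$ for log-concave measures on an interval --- this isoperimetric estimate is precisely what produces the linear dependence on $L$, and hence the factor $2\,\text{\emph{diam}}$ in the multidimensional statement.
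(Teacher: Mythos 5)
The theorem you were asked to prove is the \emph{unweighted} $L^1$ Poincar\'e inequality with the sharp constant $D/2$; the paper does not prove it but imports it verbatim from \citet{acosta2004optimal}, and only proves (in the appendix) the weighted extension Lem.~\ref{th:poincare.weigthed}, whose constant is $2\,\mathrm{diam}$. Your argument is really a proof of that weighted extension, and structurally it parallels the paper's: both reduce to dimension one by the Payne--Weinberger bisection scheme and then prove a one-dimensional inequality for a log-concave weight. The genuine gap is the constant: your one-dimensional step gives $\int|g|\,d\mu\le 2L\int|g'|\,d\mu$ (a factor $L$ from the Cheeger bound $h(\mu)\ge 1/L$ and a factor $2$ from the mean-to-median exchange), which propagates to $2D$ in dimension $d$ --- a factor of $4$ worse than the claimed $D/2$. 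Closing that factor is not cosmetic: Acosta and Dur\'an get $L/2$ by working directly with the mean via integration by parts and by exploiting that the cross-sectional density of a thin convex slice is (essentially) \emph{concave}, not merely log-concave, which is exactly where the sharp constant comes from; log-concavity alone, as in your Cheeger route and in the paper's own weighted lemma, only yields $2L$ (compare the bound $H(\rho,x)\le 1$ of Lem.~\ref{le:logconcave_inequality} with the bound $1/4$ available for concave $\rho$). So your proposal establishes a correct but strictly weaker inequality than the one stated, albeit one that would suffice for the way the paper actually uses this result.

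A secondary issue: in the localization step you assert that a hyperplane splitting $\Omega$ into two pieces that both retain zero weighted mean ``exists by the intermediate value theorem as one sweeps continuously across the domain.'' Translating a hyperplane of fixed direction does not work: the functional $t\mapsto\int_{\Omega\cap\{x\cdot u<t\}}f\,p$ starts and ends at $0$ and need not vanish at any interior position yielding two nonempty pieces. The correct Payne--Weinberger argument rotates hyperplanes through a fixed point and uses the antipodal sign change $h(-u)=-h(u)$ over the sphere of directions. This is standard and repairable, but as written that step fails.
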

Lem.~\ref{th:poincare.weigthed} is an extension of Thm.~\ref{th:poincare_ineq_L1}. In pratice, we show that their proof still holds for log-concave weight.
\begin{theorem}
\label{th:1D_weighed_poincare}
Let $L > 0$ and $\rho$ any non negative and log-concave function on $[0,L]$. Then for any $f \in W^{1,1}(0,L)$ such that 
\begin{equation*}
\int_{0}^L f(x) \rho(x) dx = 0 
\end{equation*}
one has:
\begin{equation}
\int_{0}^L |f(x)| \rho(x) dx \leq 2 L \int_{0}^L |f^\prime(x)| \rho (x) dx
\label{eq:1D_weighed_poincare}
\end{equation}
\end{theorem}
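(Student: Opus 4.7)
The plan is to anchor the argument at the mode of $\rho$ rather than at a zero of $f$, exploiting the fact that any non-trivial log-concave density on $[0,L]$ is unimodal. Concretely, I would let $c\in[0,L]$ be a point where $\rho$ attains its maximum; log-concavity then forces $\rho$ to be non-decreasing on $[0,c]$ and non-increasing on $[c,L]$. This choice of $c$ is independent of $f$, which is what makes the weight interact well with the forthcoming fundamental-theorem-of-calculus estimate; picking a zero of $f$ (which does exist by the mean-zero hypothesis) would by contrast give no structural control over $\rho$ in its neighbourhood.

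First I would apply FTC to obtain, for every $x\in[0,L]$,
\[
|f(x)| \le |f(c)| + \int_{\min(c,x)}^{\max(c,x)} |f'(t)|\,dt.
\]
Multiplying by $\rho(x)$, integrating over $[0,L]$, and swapping the double integral via Fubini yields
\[
\int_0^L |f(x)|\rho(x)\,dx \le |f(c)|\int_0^L \rho(x)\,dx + \int_0^L |f'(t)|\,G(t)\,dt,
\]
where $G(t)=\int_0^t \rho$ for $t\le c$ and $G(t)=\int_t^L \rho$ for $t\ge c$. The key pointwise bound $G(t)\le L\rho(t)$ is precisely where log-concavity enters: for $t\le c$, monotonicity of $\rho$ on $[0,c]$ gives $\int_0^t\rho\le t\rho(t)\le L\rho(t)$, and symmetrically $\int_t^L\rho\le (L-t)\rho(t)\le L\rho(t)$ for $t\ge c$. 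This estimates the bulk term by $L\int_0^L|f'|\rho$.

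It remains to absorb the boundary term $|f(c)|\int_0^L\rho$ using the hypothesis $\int_0^L f\rho = 0$. I would write
\[
|f(c)|\int_0^L\rho(x)\,dx = \left|\int_0^L \bigl(f(c)-f(x)\bigr)\rho(x)\,dx\right| \le \int_0^L |f(x)-f(c)|\rho(x)\,dx,
\]
and then bound $|f(x)-f(c)|$ by the same FTC--Fubini--unimodality argument, obtaining $|f(c)|\int\rho \le L\int|f'|\rho$. Adding the bulk and boundary contributions produces exactly the claimed factor $2L$. The only point worth flagging is the existence of the mode and the monotonicity of $\rho$ on each side; both follow from the standard fact that a log-concave function is continuous on the interior of its (interval) support and attains its maximum there, and the pointwise bounds on $\rho$ need hold only a.e.\ for the $L^1$-integrals to go through.
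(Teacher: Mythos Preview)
Your argument is correct and gives exactly the constant $2L$. It is, however, organized differently from the paper's proof.

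The paper first reduces to $L=1$ with $\int_0^1\rho=1$, then uses the mean-zero hypothesis together with integration by parts to obtain the \emph{representation}
\[
f(y)=\int_0^y f'(x)\!\int_0^x\!\rho\;dx \;-\;\int_y^1 f'(x)\!\int_x^1\!\rho\;dx,
\]
takes absolute values, multiplies by $\rho(y)$, integrates, and applies Fubini to reach
\[
\int_0^1 |f|\rho \le 2\int_0^1 |f'(x)|\Big(\int_0^x\rho\Big)\Big(\int_x^1\rho\Big)\,dx.
\]
The log-concavity is then invoked through a separate lemma stating that $\frac{1}{\rho(x)}\int_0^x\rho\int_x^1\rho\le 1$, whose proof is precisely the unimodality observation you use.

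By contrast, you anchor at the mode $c$ of $\rho$ and split the estimate into a ``bulk'' term controlled by $G(t)\le L\rho(t)$ via monotonicity of $\rho$ on each side of $c$, and a ``boundary'' term $|f(c)|\int\rho$ that you dispatch using the zero-mean hypothesis and the same FTC/Fubini step. Your route is more elementary in that it never writes down the exact integration-by-parts identity and does not isolate an auxiliary inequality; it goes straight from unimodality to the pointwise bound on the kernel. The paper's route, on the other hand, makes the role of the weighted Hardy-type kernel $\int_0^x\rho\int_x^1\rho$ explicit, which is the form that generalizes cleanly and is what they reuse when passing to higher dimensions via the convex decomposition of Lemma~\ref{le:convex_decomposition}. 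Both arguments ultimately rest on the same structural fact (log-concave $\Rightarrow$ unimodal), so the difference is one of packaging rather than of strength.
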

The proof is based on the following inequality for log-concave function.
\begin{lemma}
\label{le:logconcave_inequality}
Let $\rho$ be any non negative log-concave function on $[0,1]$ such that $\int_0^1 \rho(x) = 1$ then 
\begin{equation}
\forall x  \in (0,1), \hspace{3mm} H(\rho,x) := \frac{1}{\rho(x)} \int_0^x \rho(t) dt \int_x^1 \rho(t) dt \leq 1
\label{eq:logconcave_inequality}
\end{equation}
\end{lemma}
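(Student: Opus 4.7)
The plan is to notice that the lemma is much weaker than the log-concavity hypothesis might suggest: the full strength of log-concavity is not needed; mere unimodality suffices. Since $\rho$ is log-concave on $[0,1]$, $\log \rho$ is concave, and hence $\rho$ is unimodal. Let $m^\ast \in [0,1]$ be a mode of $\rho$, i.e., a global maximizer on $[0,1]$. Then $\rho$ is non-decreasing on $[0,m^\ast]$ and non-increasing on $[m^\ast,1]$.

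The argument will split into two symmetric cases depending on the position of $x$ relative to $m^\ast$. If $x \le m^\ast$, the monotonicity of $\rho$ on $[0,x]$ gives $\rho(t) \le \rho(x)$ for every $t \in [0,x]$, hence
\[
  \int_0^x \rho(t)\,dt \;\le\; x\,\rho(x) \;\le\; \rho(x),
\]
the last step using $x \le 1$. Combined with the crude bound $\int_x^1 \rho(t)\,dt \le \int_0^1 \rho(t)\,dt = 1$, this immediately yields $\int_0^x \rho(t)\,dt \cdot \int_x^1 \rho(t)\,dt \le \rho(x)$, which is exactly $H(\rho,x)\le 1$.

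The case $x \ge m^\ast$ is strictly symmetric: $\rho$ is non-increasing on $[x,1]$, so $\int_x^1 \rho(t)\,dt \le (1-x)\rho(x) \le \rho(x)$, and multiplying by the crude bound $\int_0^x \rho(t)\,dt \le 1$ closes the argument.

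The only real ``obstacle'' is resisting the temptation to chase a sharper constant: numerical experimentation on families such as the uniform, triangular, and truncated exponential densities suggests that the true sharp bound is actually of order $1/4$ rather than $1$, and obtaining it would require exploiting the full tangent inequality $\rho(t) \le \rho(x)\exp((\log\rho)'(x)(t-x))$ jointly with the normalization $\int_0^1 \rho = 1$. Since the statement only asks for the loose bound $H(\rho,x)\le 1$, the elementary unimodality argument above is sufficient and avoids any appeal to Pr\'ekopa's theorem, log-concavity of the CDF, or any finer structure of log-concave functions.
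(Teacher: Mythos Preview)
Your proof is correct and essentially identical to the paper's: both reduce log-concavity to unimodality, split on the position of $x$ relative to the mode, bound the integral on the ``monotone'' side by $x\rho(x)$ (resp.\ $(1-x)\rho(x)$), and bound the remaining integral by $\int_0^1\rho=1$. The paper even obtains the slightly sharper intermediate bound $H(\rho,x)\le x$ (resp.\ $1-x$) before concluding $H\le 1$, which is exactly your chain $\int_0^x\rho\le x\rho(x)$ before you discard the factor $x$.
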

\begin{proof}
Since any non-negative log-concave function on $[0,1]$ can be rewritten as $\rho(x) = e^{\nu(x)}$ where $\nu$ is a concave function on $[0,1]$ and since $x \rightarrow e^x$ is increasing, the monotonicity of $\nu$ is preserved and as for concave function,  
$\rho$ can be either increasing, decreasing or increasing then decreasing on $[0,1]$.\\
Hence, $\forall x \in (0,1)$, either 
\begin{enumerate}
\item $\rho(t) \leq \rho(x)$ for all $t \in [0,x]$,
\item $\rho(t) \leq \rho(x)$ for all $t \in [x,1]$.
\end{enumerate}

Assume that $\rho(t) \leq \rho(x)$ for all $t \in [0,x]$ without loss of generality. Then,
\begin{equation*}
\begin{split}
\forall x  \in (0,1), \hspace{3mm} H(\rho,x) &:= \frac{1}{\rho(x)} \int_0^x \rho(t) dt \int_x^1 \rho(t) dt  \\
&= \int_0^x \frac{\rho(t)}{\rho(x)} \int_x^1 \rho(t) dt \\
&\leq \int_0^x dt \int_x^1 \rho(t) dt \\
&\leq x \int_0^1 \rho(t) dt \leq x \leq 1
\end{split}
\end{equation*}
\end{proof}

\begin{proof}[Proof of theorem \ref{th:1D_weighed_poincare}]
This proof is exactly the same as \cite{acosta2004optimal} where we use lemma \ref{le:logconcave_inequality} instead of a concave inequality. We provide it for sake of completeness.\\

A scaling argument ensures that it is enough to prove it for $L = 1$. Moreover, dividing both side of \eqref{eq:1D_weighed_poincare}
by $\int_0^1 \rho(x)dx$, we can assume without loss of generality that $\int_0^1 \rho(x) dx = 1$.\\
Since $\int_0^1 f(x) \rho(x) dx = 0$ by integration part by part one has:
\begin{equation*}
\begin{split}
f(y) & = \int_0^y f^\prime(x) \int_0^x \rho(t) dt - \int_y^1 f^\prime(x) \int_x^1 \rho(t) dt \\
|f(y)| &\leq \int_0^y |f^\prime(x)| \int_0^x \rho(t) dt + \int_y^1 |f^\prime(x)| \int_x^1 \rho(t) dt
\end{split}
\end{equation*}
Multiplying by $\rho(y)$, integrating on $y$ and applying Fubini's theorem leads to 
\begin{equation*}
\int_0^1 |f(y)| \rho(y) dy \leq 2 \int_0^1 | f^\prime(x)| \int_0^x \rho(t) dt \int_x^1 \rho(t) dt
\end{equation*}
and applying \eqref{eq:logconcave_inequality} of lemma \ref{le:logconcave_inequality} ends the proof.
\end{proof}
While theorem \ref{th:1D_weighed_poincare} provides a 1 dimensional weigthed Poincaré inequality, we actually seek for one in $\mathbb{R}^d$. The idea of \cite{acosta2004optimal} is to use arguments of~\cite{payne1960optimal} to reduce the $d-$dimensional problem to a $1-d$ problem by splitting any convex set $\Omega$ into subspaces $\Omega_i$ thin in all but one direction and such that an average property is preserved. We just provide their result.
\begin{lemma}
\label{le:convex_decomposition}
Let $\Omega \subset \mathbb{R}^d$ be a convex domain with finite diameter $D$ and $u \in L^1(\Omega)$ such that $\int_\Omega u = 0$. Then, for any $\delta > 0$, there exists a decomposition of $\Omega$ into a finite number of convex domains $\Omega_i$ satisfying
\begin{equation*}
\Omega_i \cap \Omega_j = \emptyset \hspace{2mm} \text{for} \hspace{2mm}  i \neq j, \hspace{3mm} \bar{\Omega} = \bigcup \bar{\Omega}_i, \hspace{3mm} \int_{\Omega_i} u = 0
\end{equation*}
and each $\Omega_i$ is thin in all but one direction i.e. in an appropriate rectangular coordinate system $(x,y) = (x,y_1,\dots,y_{d-1})$ the set $\Omega_i$ is contained in 
\begin{equation*}
\{ (x,y) : \hspace{2mm} 0 \leq x \leq D, \hspace{3mm} 0 \leq y_i \leq \delta \hspace{2mm} \text{for} \hspace{2mm} i = 1,\dots,d-1 \}
\end{equation*}
\end{lemma}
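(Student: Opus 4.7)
This is a special case of a classical convex-decomposition result of Payne and Weinberger, and my plan is to follow their iterated bisection scheme. Starting from $\Omega$, I would repeatedly cut each current piece with a hyperplane that preserves the zero-mean condition, stopping once every piece is ``needle-like'', i.e.\ contained in a rectangular box of sides $D \times \delta^{d-1}$ in an appropriate orthonormal frame. The resulting family is automatically a finite disjoint decomposition into convex subsets whose closures cover $\bar\Omega$.

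The core building block is a \emph{mean-preserving bisection} lemma: given any convex subdomain $\Omega' \subset \Omega$ with $\int_{\Omega'} u = 0$ and any interior point $p \in \Omega'$, there exists a hyperplane through $p$ that cuts $\Omega'$ into two convex sub-pieces each still having $\int u = 0$. To prove this, I would parametrize hyperplanes through $p$ by their unit normal $n \in S^{d-1}$ and consider
\begin{equation*}
G(n) = \int_{\Omega' \cap \{(x-p)\cdot n > 0\}} u(x)\, dx.
\end{equation*}
The map $G$ is continuous on $S^{d-1}$ (the boundary hyperplane has zero Lebesgue measure for every $n$) and odd, since $G(n) + G(-n) = \int_{\Omega'} u = 0$. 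Any continuous odd scalar function on $S^{d-1}$ vanishes somewhere (apply the intermediate value theorem along an arbitrary great circle), so some $n^{\star}$ produces the desired bisector, and both pieces are convex as intersections of convex sets.

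For the iteration, I would maintain, for each piece $\Omega'$, a list of at most $d-1$ mutually orthogonal ``already-shrunk'' directions along which $\Omega'$ has extent $\leq \delta$. Until this list reaches size $d-1$, I would pick a direction $e$ orthogonal to it in which $\Omega'$ is still wide, choose $p$ to be the midpoint of the projection of $\Omega'$ onto $e$, and seek a bisector whose normal $n^{\star}$ lies close to $e$, so that the two resulting pieces have roughly halved extent along $e$. After $O(\log(D/\delta))$ such targeted cuts the piece becomes thin in direction $e$ and this direction is added to its list; performing the same operation for each of the $d-1$ directions terminates the procedure and yields the claimed $\Omega_i$.

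The main obstacle is the last step: the odd-function argument gives existence of a bisecting normal but does not a priori prescribe its direction. Forcing $n^{\star}$ to lie in a small spherical cap around a prescribed $e$ is the delicate geometric refinement of Payne and Weinberger, which uses a continuity/degree argument on the space of convex subdomains together with the convexity and elongation of $\Omega'$ along $e$ to show that a zero of $G$ must persist inside the cap. Once that refined bisection is in place, the geometric halving of the extent along the targeted direction makes termination in finitely many steps routine, and the final bookkeeping of the pieces gives exactly the rectangular-box enclosure stated.
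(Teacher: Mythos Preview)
The paper does not prove this lemma at all; it is quoted as a known result of Payne--Weinberger (cited through \cite{acosta2004optimal}) and used as a black box in the proof of Lemma~\ref{th:poincare.weigthed}. So there is no in-paper argument to compare your attempt against.

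Your outline is in the right spirit---iterated mean-preserving hyperplane bisections obtained from an intermediate-value/odd-function argument is exactly the Payne--Weinberger scheme. The genuine issue is the bisection mechanism you chose. Parametrising hyperplanes \emph{through a fixed point} $p$ by their normal $n\in S^{d-1}$ gives existence of a bisector but no control over $n$, and the way you propose to recover that control (a continuity/degree argument forcing $n^\star$ into a small cap around a prescribed $e$) is not what Payne and Weinberger actually do. Their device is simpler: rotate the hyperplane about a \emph{chosen} $(d-2)$-dimensional affine axis $L$ (equivalently, restrict the normal to a fixed great circle in $S^{d-1}$ and slide to bisect volume). The integral of $u$ on one side is continuous in the rotation angle and changes sign over a half-turn, so the one-dimensional IVT already yields a bisecting hyperplane \emph{containing $L$}. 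Because $L$ is at your disposal---in particular it can be taken parallel to the intended long direction and positioned to split the current piece transversally---the directional constraint you need comes for free, and the volume-halving then drives termination. If you replace your through-a-point bisection by this pencil-about-$L$ bisection, the obstacle you flag disappears and the rest of your scheme goes through.
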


This decomposition together with theorem \ref{th:1D_weighed_poincare} allow us to prove the $d-$dimensional weighted Poincaré inequality.
\begin{proof}[Proof of Lem.~\ref{th:poincare.weigthed}]
By density, we can assume that $u \in C^\infty(\bar{\Omega})$. Hence, $u p \in C^2(\bar{\Omega})$. Let $M$ be a bound for $u p$ and all its derivative up to the second order.\\
Given $\delta > 0$ decompose the set $\Omega$ into $\Omega_i$ as in lemma \ref{le:convex_decomposition} and express $z \in \Omega_i$ into the appropriate rectangular basis $z =(x,y)$, where $x \in [0,d_i]$, $y \in [0,\delta]$. Define as $\rho(x_0)$ the $d-1$ volume of the intersection between $\Omega_i$ and the hyperplan $\{x = x_0\}$. Since $\Omega_i$ is convex, $\rho$ is concave and from the smoothness of $up$ one has:

\begin{align}
& \left| \int_{\Omega_i} | u(x,y) | p(x,y) dx dy - \int_{0}^{d_i} |u(x,0)| p(x,0) \rho(x) dx \right| \leq (d-1) M |\Omega_i| \delta \label{eq:ineq_smooth_1}\\
&\left| \int_{\Omega_i} | \frac{\partial u}{\partial x}(x,y) | p(x,y) dx dy - \int_{0}^{d_i} |\frac{\partial u}{\partial x} (x,0)| p(x,0) \rho(x) dx \right| \leq (d-1) M |\Omega_i| \delta  \label{eq:ineq_smooth_2} \\
& \left| \int_{\Omega_i}  u(x,y)  p(x,y) dx dy - \int_{0}^{d_i} u(x,0) p(x,0)  \rho(x) dx \right| \leq (d-1) M |\Omega_i| \delta \label{eq:ineq_smooth_3}
\end{align}

Those equation allows us to switch from $d-$dimensional integral to $1-$dimensional integral for which we can apply theorem \ref{th:1D_weighed_poincare} at the condition that $\int_0^{d_i} u(x,0) p(x,0) \rho(x) dx = 0$ (which is not satisfied here). On the other hand, we can apply theorem \ref{th:1D_weighed_poincare} to 
\begin{equation*}
g(x) = u(x,0) - \int_{0}^{d_i} u(x,0) p(x,0) \rho(x) dx  / \int_0^{d_i} p(x,0) \rho(x) dx
\end{equation*}
with weigthed function $x \rightarrow p(x,0) \rho(x)$. Indeed, $x \rightarrow p(x,0)$ is log-concave - as restriction along one direction of log-concave function, $x \rightarrow \rho(x)$ is log-concave - as a concave function, and so is $x \rightarrow p(x,0) \rho(x)$ - as product of log-concave function. Moreover, $g \in W^{1,1}(0,d_i)$ and $\int_0^{d_i} g(x) p(x,0) \rho(x) dx = 0$ by construction. Therefore, applying theorem \ref{th:1D_weighed_poincare} one gets:

\begin{equation}
\begin{split}
& \int_{0}^{d_i} | g(x) | p(x,0) \rho(x) dx \leq 2 d_i \int_{0}^{d_i} |g^\prime (x) |p(x,0) \rho(x) dx  \\
& \int_{0}^{d_i} |u(x,0)| p(x,0) \rho(x) dx \leq 2 d_i \int_{0}^{d_i} |\frac{\partial u}{\partial x}(x,0)| p(x,0) \rho(x) dx - 
\left | \int_{0}^{d_i} u(x,0) p(x,0) \rho(x) dx \right| \\
& \int_{0}^{d_i} |u(x,0)| p(x,0) \rho(x) dx \leq 2 d_i \int_{0}^{d_i} |\frac{\partial u}{\partial x}(x,0)| p(x,0) \rho(x) dx + (d-1) M |\Omega_i| \delta
\end{split}
\label{eq:ineq_smooth_4}
\end{equation}
where we use equation \eqref{eq:ineq_smooth_3} together with $\int_{\Omega_i} u(z) p(z) dz = 0$ to obtain the last inequality.\\

Finally, from \eqref{eq:ineq_smooth_1}
\begin{equation*}
\int_{\Omega_i} | u(x,y) | p(x,y) dx dy \leq \int_{0}^{d_i} |u(x,0)| p(x,0) \rho(x) dx  +  (d-1) M |\Omega_i| \delta
\end{equation*}
from \eqref{eq:ineq_smooth_4}
\begin{equation*}
\int_{\Omega_i} | u(x,y) | p(x,y) dx dy \leq 2 d_i \int_{0}^{d_i} |\frac{\partial u}{\partial x}(x,0)| p(x,0) \rho(x) dx  +  (d-1) M |\Omega_i| \delta (1 + 2 d_i)
\end{equation*}
from \eqref{eq:ineq_smooth_2}
\begin{equation*}
\begin{split}
&\int_{\Omega_i} | u(x,y) | p(x,y) dx dy \leq 2 d_i \int_{\Omega_i} | \frac{\partial u}{\partial x}(x,y) | p(x,y) dx dy  +  (d-1) M |\Omega_i| \delta (1 + 4 d_i) \\
&\int_{\Omega_i} | u(x,y) | p(x,y) dx dy \leq 2 d_i \int_{\Omega_i} || \nabla u(x,y) || p(x,y) dx dy +  (d-1) M |\Omega_i| \delta (1 + 4 d_i) \\
\end{split} 
\end{equation*}
Summing up on $\Omega_i$ leads to
\begin{equation*}
\int_\Omega |u(z)|p(z) dz \leq 2 D \int_\Omega || \nabla u(z) || p(z) dz + (d-1) M | \Omega | \delta (1 + 4 D)
\end{equation*}
and since $\delta$ is arbitrary one gets the desired result.
\end{proof}


\section{Regret proofs}\label{sec:app_regret_proofs}
\textbf{Bounding $R^\rls_1$.} On $E$, $\|x_t\| \leq X$ for all $t \in [0,T]$. Moreover, since $\wt\theta_t \in \mathcal{S}$ for all $t\in[0,T]$ due to the rejection sampling, $\Tr (P(\wt \theta_t)) \leq D$.
 From the definition of the matrix 2-norm, $ \sup_{\|x\| \leq X} x^\transp P(\wt \theta_t) x \leq X^2 \| P(\wt \theta_t)^{1/2} \|^2_2$. Since for any $A \in \mathbb{R}^{m,n}$, $\|A\|_2 \leq \|A\| $, one has $\| P(\wt \theta_t)^{1/2} \|^2_2 \leq \| P(\wt \theta_t)^{1/2} \|^2 = \Tr P(\wt \theta_t)$. As a consequence, for any $t \in [0,T]$, $ \sup_{\|x\| \leq X} x^\transp P(\wt \theta_t) x \leq X^2 D$ and the martingale increments are bounded almost surely on $E$ by $2 D X^2$.\\
 Applying Thm.~\ref{th:azuma.ineq} to $R^\rls_1$ with $\epsilon = 2 D X^2 \sqrt{2 T \log(4/\delta)}$ one obtains that
 \begin{equation*}
 R^\rls_1  = \sum_{t=0}^T \big\{ \mathbb{E} ( x_{t+1}^\transp P(\wt\theta_{t+1}) x_{t+1}|\mathcal{F}_{t}) - x_t^\transp P(\wt\theta_t) x_t  \big\} \I \{E_t \} \leq  2 D X^2 \sqrt{2 T \log(4/\delta)}
\end{equation*}
with probability at least $1 - \delta/2$.

\textbf{Bounding $R^\rls_3$.} The derivation of this bound is directly collected from~\citet{abbasi2011regret}. Since our framework slightly differs, we provide it for the sake of completeness. The whole derivation is performed conditioned on the event $E$.\\
\begin{equation*}
\begin{aligned}
R^\rls_3 &= \sum_{t=0}^T \big\{ z_t^\transp \wt\theta_t P(\wt\theta_t) \wt\theta_t^\transp z_t  - z_t^\top \theta_* P(\wt\theta_t) \theta_*^\transp z_t  \big\} 
= \sum_{t=0}^T \big\{ \| \wt\theta_t^\transp z_t \|^2_{P(\wt\theta_t)}  -  \| \theta_*^\transp z_t \|^2_{P(\wt\theta_t)}  \big\}, \\
&= \sum_{t=0}^T \big( \| \wt\theta_t^\transp z_t \|_{P(\wt\theta_t)}  -  \| \theta_*^\transp z_t \|_{P(\wt\theta_t)} \big) 
\big( \| \wt\theta_t^\transp z_t \|_{P(\wt\theta_t)}  +  \| \theta_*^\transp z_t \|_{P(\wt\theta_t)} \big) \\
\end{aligned}
\end{equation*}
By the triangular inequality, $\| \wt\theta_t^\transp z_t \|_{P(\wt\theta_t)}  -  \| \theta_*^\transp z_t \|_{P(\wt\theta_t)} \leq \| P(\wt\theta_t)^{1/2} ( \wt\theta_t^\transp z_t -  \theta_*^\transp z_t ) \| \leq \|P(\wt\theta_t) \| \| (\wt\theta_t^\transp -  \theta_*^\transp)  z_t \| $. Making use of the fact that $\wt\theta_t \in \mathcal{S}$ by construction of the rejection sampling, $\theta_\star \in \mathcal{S}$ by Asm.~\ref{asm:control.asm} and that $\sup_{t\in[0,T]} \| z_t\| \leq \sqrt{ (1 + C^2) X^2 }$ thanks to the conditioning on $E$ and Prop.~\ref{prop:admissib_param_set}, one gets:
\begin{equation*}
R^\rls_3 \leq  \sum_{t=0}^T \big( \sqrt{D} \| (\wt\theta_t^\transp -  \theta_*^\transp)  z_t \| \big) \big( 2 S \sqrt{D} \sqrt{ (1 + C^2) X^2 } \big) 
\leq 2 S D \sqrt{ (1 + C^2) X^2 } \sum_{t=0}^T  \| (\wt\theta_t^\transp -  \theta_*^\transp)  z_t \| 
\end{equation*}
and one just has to bound $\sum_{t=0}^T  \| (\wt\theta_t^\transp -  \theta_*^\transp)  z_t \|$. Let $\tau(t) \leq t$ be the last time step before $t$ when the parameter was updated. Using Cauchy-Schwarz inequality, one has:
\begin{equation*}
\sum_{t=0}^T  \| (\wt\theta_t^\transp -  \theta_*^\transp)  z_t \|  = \sum_{t=0}^T  \| (V^{1/2}_\tau(t) (\wt\theta_{\tau(t)} -  \theta_*))^\transp  V_{\tau(t)}^{-1/2} z_t \|  
\leq \sum_{t=0}^T \|\wt\theta_{\tau(t)} -  \theta_*\|_{V_{\tau(t)}} \|z_t \|_{V_{\tau(t)}^{-1}}
\end{equation*}
However, on $E$, $\|\wt\theta_{\tau(t)} -  \theta_*\|_{V_{\tau(t)}} \leq \|\wt\theta_{\tau(t)} -  \wh\theta_{\tau(t)}\|_{V_{\tau(t)}}  + \|\theta_* -  \wh\theta_{\tau(t)}\|_{V_{\tau(t)}} \leq \beta_{\tau(t)}(\delta^\prime) + \gamma_{\tau(t)}(\delta^\prime) \leq  \beta_T(\delta^\prime) + \gamma_T(\delta^\prime)$ and, thanks to the lazy update rule $\|z_t \|_{V_{\tau(t)}^{-1}} \leq \|z_t \|_{V_{t}^{-1}} \frac{\det(V_t)}{\det(V_{\tau(t)})} \leq 2  \|z_t \|_{V_{t}}$. Therefore, 
\begin{equation*}
R^\rls_3 \leq 4 S D \sqrt{ (1 + C^2) X^2 } \big( \beta_T(\delta^\prime) + \gamma_T(\delta^\prime) \big) \sum_{t=0}^T \|z_t \|_{V_t^{-1}}.
\end{equation*}

\textbf{Bounding $\sum_{k=1}^K T_k \alpha_k$.} From section~\ref{subsec:final.bound}, 
 \begin{equation*}
 \sum_{k=1}^K T_k \alpha_k \leq 2 \tau  \sum_{k \in \mathcal{K}^{den}}  \| \alpha_k \| + \tau \sum_{k=1}^K \alpha_k.
 \end{equation*}
First, it is clear from 
\begin{equation*}
\begin{aligned}
\alpha_k &= (R^{\ts,1}_{t_k} + R^{\ts,3}_{t_k} )\{E_{t_k} \} \\
&= \big(J(\wt\theta_{t_k})- \mathbb{E}[J(\wt\theta_{t_k})| \mathcal{F}^x_{t_k},E_{t_k}] \big) \I\{E_{t_k}\}, 
+ \big( \mathbb{E} \Big[ \Big \|  \begin{pmatrix} I \\ K(\wt\theta_{t_k})^\top \end{pmatrix} \Big \|_{V_{t_k}^{-1}}|\mathcal{F}^x_{t_k} \Big] -  \Big \|  \begin{pmatrix} I \\ K(\wt\theta_{t_k})^\top \end{pmatrix} \Big \|_{V_{t_k}^{-1}} \big),
\end{aligned}
\end{equation*}
that the sequence $\{\alpha_k\}_{k=1}^K$ is a martingale difference sequence with respect to $\mathcal{F}^x_{t_k}$. Moreover, since $\wt\theta_{t_k} \in \mathcal{S}$ for all $k\in [1,K]$, $\| \alpha_k\| \leq 2 D + 2 \sqrt{ (1 + C^2 ) /\lambda}$. Therefore,
\begin{enumerate}
\item $ \sum_{k \in \mathcal{K}^{den}}  \| \alpha_k \| \leq \big( 2 D + 2 \sqrt{ (1 + C^2 )} \big) |K^{den}|$,
\item with probability at least $1-\delta/2$, Azuma's inequality ensures that $ \sum_{k=1}^K \alpha_k \leq  \big( 2 D + 2 \sqrt{ (1 + C^2 )} \big) \sqrt{2 |K| \log (4/\delta)}$.
\end{enumerate}
From Lem.~\ref{le:det.update.number.bound} and Cor.~\ref{co:update.number.bound}, $|K^{det}| \leq (n+d) \log_2 ( 1 + T X^2 (1+C^2)/ \lambda)$ and $|K| \leq (n+d) \log_2 ( 1 + T X^2 (1+C^2)/ \lambda) T/\tau$. Finally, one obtains:
\begin{equation*}
\sum_{k=1}^K T_k \alpha_k \leq    4 \big( 2 D + 2 \sqrt{ (1 + C^2 )} \big)  (n+d) \log_2 ( 1 + T X^2 (1+C^2)/ \lambda) \sqrt{ \log (4/\delta)}T/\tau
\end{equation*}

\textbf{Bounding $\sum_{t=0}^T \|z_t \|_{V_t^{-1}}$.} On $E$, for all $t \in [0,T]$, $\|z_t\|^2 \leq (1 + C^2) X^2$. Thus, from Cauchy-Schwarz inequality and Prop.~\ref{p:self_normalized_determinant},
\begin{equation*}
\sum_{t=0}^T \|z_t \|_{V_t^{-1}} \leq \sqrt{T} \Big( \sum_{t=0}^T \|z_t \|^2_{V_t^{-1}} \Big)^{1/2} \leq \sqrt{T} \sqrt{2(n + d) (1 + C^2) X^2 /\lambda} \log^{1/2}\Big( 1 + \frac{T (1 + C^2) X^2}{\lambda(n+d)}\Big).
\end{equation*}

\end{document}